\let\tn\textnormal 
\newcommand{\R}{\ifmmode\mathbb{R}\else$\mathbb{R}$\fi}
\newcommand{\N}{\ifmmode\mathbb{N}\else$\mathbb{N}$\fi}
\newcommand{\Z}{\ifmmode\mathbb{Z}\else$\mathbb{Z}$\fi}
\newcommand{\Q}{\ifmmode\mathbb{Q}\else$\mathbb{Q}$\fi}
\newcommand{\bmx}{{\bm{x}}}
\newcommand{\bma}{{\bm{a}}}
\newcommand{\bme}{{\bm{e}}}
\newcommand{\bmy}{{\bm{y}}}
\newcommand{\bmA}{{\bm{A}}}
\newcommand{\bmzero}{{\bm{0}}}
\newcommand{\bmpsi}{{\bm{\psi}}}
\newcommand{\bmphi}{{\bm{\phi}}}
\newcommand{\bmbeta}{{\bm{\beta}}}
\newcommand{\calX}{{\mathcal{X}}}
\newcommand{\calL}{{\mathcal{L}}}
\newcommand{\calO}{{\mathcal{O}}}
\newcommand{\calH}{{\mathcal{H}}}
\newcommand{\calD}{{\mathcal{D}}}
\newcommand{\calR}{{\mathcal{R}}}
\newcommand{\calT}{{\mathcal{T}}}
\newcommand{\calS}{{\mathcal{S}}}
\newcommand{\calN}{{\mathcal{N}}}
\newcommand{\tildephi}{{\widetilde{\phi}}}
\newcommand{\tildef}{{\widetilde{f}}}
\newcommand{\hatbmf}{{\widehat{\bm{f}}}}
\newcommand{\scrF}{{\mathscr{F}}}
\newcommand{\scrH}{{\mathscr{H}}}
\def\middleValue{\tn{mid}}
\DeclareMathOperator*{\argmin}{arg\,min}
\newcommand{\bin}{\tn{bin}\hspace{1.2pt}}
\newcommand{\mystep}[2]{\par \vspace{0.25cm}\noindent\textbf{\hspace{8pt}Step }$#1\colon$ #2 \vspace{0.18cm} \par }
\newcommand{\myto}[2][1]{\mathop{
		\vcenter{\hbox{\scalebox{1}[#1]{\tikz{\draw[->,line width=0.72pt] (0,0.5) to (0.69*#2,0.5);}}}}
}}
\let\mycdots\cdots
\def\cdots{{\mycdots}}
\def\phiin{\phi_{\tn{in}}}
\def\bmphiin{\bmphi_{\tn{in}}}
\def\phiout{\phi_{\tn{out}}}
\def\bmphiout{\bmphi_{\tn{out}}}
\definecolor{mygray}{RGB}{220,220,220}
\newif\ifarXiv
\def\ICML@appearing{\vspace*{-22.5pt}}
\theoremstyle{plain}
\newtheorem{theorem}{Theorem}[section]
\newtheorem{proposition}[theorem]{Proposition}
\newtheorem{lemma}[theorem]{Lemma}
\theoremstyle{definition}
\theoremstyle{remark}
\icmltitlerunning{Network Approximation in Terms of Intrinsic Parameters}
\begin{document}

\twocolumn[
\icmltitle{Deep Network Approximation in Terms of Intrinsic Parameters}



\icmlsetsymbol{equal}{*}

\begin{icmlauthorlist}
\icmlauthor{Zuowei Shen}{nus}
\icmlauthor{Haizhao Yang}{umd}
\icmlauthor{Shijun Zhang}{equal,nus}
\end{icmlauthorlist}

\icmlaffiliation{nus}{Department of Mathematics, National University of Singapore, Singapore\ifarXiv.\fi}
\icmlaffiliation{umd}{Department of Mathematics, University of Maryland, College Park, United States}

\icmlcorrespondingauthor{Shijun Zhang}{zhangshijun@u.nus.edu}

\icmlkeywords{Intrinsic Parameters,   Neural Network Approximation,  Exponential Convergence,  Transfer Learning}

\vskip 0.3in
]



\printAffiliationsAndNotice{\ifarXiv\textsuperscript{*}Correspondence to: Shijun Zhang (zhangshijun@u.nus.edu).\fi}  
\ifarXiv\thispagestyle{myplain}\fi

\begin{abstract}
One of the arguments to explain the success of deep learning is the powerful approximation capacity of deep neural networks. Such capacity is generally accompanied by the explosive growth of the number of parameters, which, in turn, leads to high computational costs.  It is of great interest to ask whether we can achieve successful deep learning with a small number of learnable parameters adapting to the target function. From an approximation perspective, this paper shows that the number of parameters that need to be learned can be significantly smaller than people typically expect. 
First, we theoretically design ReLU networks with a few learnable parameters to achieve an attractive approximation. We prove by construction that, for any Lipschitz continuous function $f$ on $[0,1]^d$ with a Lipschitz constant $\lambda>0$, a ReLU network with $n+2$ intrinsic parameters (those depending on $f$) can approximate $f$ with an  exponentially small error $5\lambda \sqrt{d}\,2^{-n}$. Such a result is generalized to generic continuous functions. Furthermore, we show that the idea of learning a small number of parameters to achieve a good approximation can be numerically observed. We conduct several experiments to verify that training a small part of parameters can also achieve good results for classification problems if other parameters are pre-specified or pre-trained from a related problem. 
\end{abstract}

\section{Introduction}

Deep neural networks have
recently achieved great success in a large number of real-world applications. However, 
the success in deep neural networks is generally accompanied by the explosive growth of computation and parameters. 
This follows a natural problem: how to handle computationally expensive deep learning models with limited computing resources. 
This problem is challenging and has been widely studied. 
Numerous model compression and acceleration
methods have recently been proposed, e.g., parameter pruning and quantization \cite{efficient:nets:2015,7780890,pmlr-v37-gupta15}, low-rank factorization \cite{10.5555/2968826.2968968,BMVC.28.88}, transferred compact convolutional filters \cite{DBLP:journals/corr/HowardZCKWWAA17,8578572,9010015}, knowledge distillation \cite{2015arXiv150302531H,9008829,9381661}. See a survey of these methods in \cite{survey:compression:acceleration}.
This paper explores an approximation perspective for the problem mentioned above. We adopt the approximation perspective since the approximation power is a key ingredient for the performance of deep neural networks. In other words,
The goal of this paper is to investigate how to reduce the number of parameters that need to be learned from an approximation perspective.
We will provide both theoretical and numerical examples to show that adjusting only a small number of parameters is enough to achieve good results if the network architecture is properly designed.



To design a simple and computable hypothesis function $\phi$ to approximate a target function $f\in\scrF$ well via adjusting only a small number of $f$-dependent parameters, where $\scrF$ is a given target function space, 
we have the following two main ideas, motivated by the power of deep neural networks via function composition:
 \begin{itemize}
 	 \item The main component of $\phi$ is determined by $\scrF$ and can be shared as fixed parameters for all functions in $\scrF$. These shared parameters can be determined a priori or learned from any function in $\scrF$.
 	\item  $\phi$ is constructed via the composition of a few functions, only two of which is determined by $f$ with a small number of $f$-dependent parameters.
 \end{itemize}
In particular, we have the following construction
\begin{equation}\label{eq:basic:arc}
	\phi = \phi_{f,\calR}\circ\bmphiout\circ\bmphi_f\circ\bmphiin,
\end{equation}
where $\bmphiin$ and $\bmphiout$ are $f$-independent functions designed based on prior knowledge. We call $\bmphiin$ and $\bmphiout$ \textbf{inner-function} and \textbf{outer-function}, respectively. $\bmphi_f$ and $\phi_{f,\calR}$ are $f$-dependent functions.  
$\bmphi_f$ is the core part of the whole architecture and 
$\phi_{f,\calR}$ is a simple function for the purpose of adjusting the output range. 
If we use neural networks to implement the architecture in \eqref{eq:basic:arc}, most parameters are $f$-independent and stored in $\bmphiin$ and $\bmphiout$. We will show that good theoretical and numerical approximations can be achieved by adjusting only a small number of parameters in $\bmphi_f$ and $\phi_{f,\calR}$.

We will focus on the rectified linear unit (ReLU) activation function and use it to demonstrate our ideas. It would be interesting for future work to extend our work to other activation functions. 
First, we use the architecture in \eqref{eq:basic:arc} to theoretically design ReLU networks to approximate (H\"older) continuous functions within an exponentially small approximation error in terms of the number of  $f$-dependent parameters. As we shall see later, in an extreme case, adjusting three intrinsic parameters is enough to achieve an arbitrarily small approximation error.
Next, 
we design a ReLU convolutional neural network (CNN), similar to the architecture in \eqref{eq:basic:arc}, 
to conduct several experiments to numerically verify that
training a small number of intrinsic parameters are enough to achieve good results.

In fact, the architecture \eqref{eq:basic:arc} has a more general form as follows:
\begin{equation}\label{eq:general:arc}
	\phi = \phi_{f,\calR}\circ\bmphi_k\circ\bmphi_{f,k-1}\circ\bmphi_{k-1}\circ\cdots\circ\bmphi_{f,1}\circ\bmphi_{1},
\end{equation}
where $\bmphi_1,\cdots,\bmphi_k$ are $f$-independent functions, which are designed based on prior knowledge. $\bmphi_{f,1},\cdots,\bmphi_{f,k-1}$ are the core parts of the whole architecture. $\phi_{f,\calR}$ is a simple function adjusting the output range. $\bmphi_{f,1},\cdots,\bmphi_{f,k-1}$ and $\phi_{f,\calR}$ are determined by $f$.
This paper only focuses on the form in \eqref{eq:basic:arc}. The study of the general form in \eqref{eq:general:arc} is left as future work.

Let us further discuss why we emphasize the parameters depending on the target function.
It was shown in \cite{yarotsky18a,shijun2,shijun:thesis,shijun6} that the approximation error $\calO(n^{-2/d})$ is (nearly) optimal for ReLU networks with $\calO(n)$ parameters to approximate Lipschitz continuous functions on $[0,1]^d$. To gain better approximation errors, existing results either consider smaller target function spaces  \cite{yarotsky:2019:06,yarotsky2017,shijun3,barron1993,Weinan2019,doi:10.1002/mma.5575,bandlimit} or introduce new activation functions  \cite{shijun4,shijun7,shijun5,pmlr-v139-yarotsky21a}. 
Observe that, in many existing results, most parameters of networks constructed to approximate the target function $f$ are independent of $f$.
We propose a new perspective to study the approximation error in terms of the number of parameters depending on $f$, which are called \textbf{intrinsic parameters}, excluding those independent of $f$. We prove by construction that the approximation error can be greatly improved from our new perspective.

Our main contributions can be summarized as follows.
\begin{itemize}
	\item First, we propose a compositional architecture in \eqref{eq:basic:arc} and use such an architecture to design networks to approximate target functions. In particular, we construct a ReLU network with $n+2$ intrinsic parameters to approximate a H\"older continuous function $f$ on $[0,1]^d$ with an error  $5\lambda d^{\alpha/2}2^{-\alpha n}$
	measured in the $L^p$-norm for $p\in [1,\infty)$, where $\alpha\in (0,1]$ and $\lambda>0$ are the H\"older order and constant, respectively. Such a result is generalized to generic continuous functions. See Theorem~\ref{thm:main} for more details.
	
	\item We generalize the result in the $L^p$-norm  for $p\in[1,\infty)$ to a new one measured in the $L^\infty$-norm. Such a generalization is at a price of more intrinsic parameters. Refer to Theorem~\ref{thm:mainInfty} for more details.
	
	\item We further extend our results and show that the number of intrinsic parameters can be reduced to three. To be precise, ReLU networks with three intrinsic parameters can achieve an arbitrarily small error for approximating H\"older continuous function on $[0,1]^d$. In this scenario, extremely high precision is required as we shall see later.
	
	\item Finally, we conduct several experiments to 
	numerically verify that training a small part of parameters can achieve good results for classification problems if other parameters are pre-trained from a part of samples. 
\end{itemize}

The rest of this paper is organized as follows.
We first present our main theorems and discuss related work in Section~\ref{sec:main}.
These theorems are proved in the appendix. Next, we conduct several experiments to numerically verify our theory in Section~\ref{sec:experiments}.
Finally, Section~\ref{sec:conclusion} concludes this paper with a short discussion.
\section{Main results and further interpretation}\label{sec:main}

In this section, we first present our main theorems and then discuss related work. The proofs of these theorems can be found in the appendix.

\subsection{Main results}
Denote $C([0,1]^d)$ as  the space of continuous functions defined on $[0,1]^d$. 
Let $\calH_W(d_1,d_2)$ denote the function space consisting of all functions realized by ReLU networks with $W$ parameters mapping from $\R^{d_1}$ to $\R^{d_2}$, i.e.,
\begin{equation*}
\begin{split}
		\calH_W(d_1,d_2)\coloneqq \Big\{g:\ &\tn{$g:\R^{d_1}\to\R^{d_2}$ is realized by a }\\
			& \tn{ReLU network with $W$ parameters}\Big\}.
\end{split}
\end{equation*}
Let $\calH(d_1,d_2)\coloneqq \bigcup_{W=1}^{\infty} \calH_W(d_1,d_2)$.

For any $f\in C([0,1]^d)$, our goal is to construct two $f$-independent functions $\phi_1\in\calH(d,1)$ and $\phi_2\in\calH(n,1)$, and use $s\cdot(\phi_2\circ\phi_f \circ \phi_1)+b$ to approximate $f$, where $s\in [0,\infty)$, $b\in\R$, and $\phi_f\in \calH_n(1,n)$ are learned from $f$.
Under these settings, an approximation error $\omega_f(\sqrt{d}\, 2^{-n})+2^{-n+2}\omega_f(\sqrt{d})$ is attained as shown in the theorem below,
where the modulus of continuity of a continuous function $f\in C([0,1]^d)$ is defined as 
\begin{equation*}
	\omega_f(r)\coloneqq \sup\big\{|f(\bmx)-f(\bmy)|: \|\bmx-\bmy\|_2\le r,\ \bmx,\bmy\in [0,1]^d\big\}
\end{equation*}
for any $r\ge0$.
\begin{theorem}
	\label{thm:main}
	Given any $n\in\N^+$ and $p\in [1,\infty)$, there exist $\phi_1\in \calH_{2^{dn+4}}(d,1)$ and $\phi_2\in \calH_{2^{dn+5}n}(n,1)$ such that: For any $f\in C([0,1]^d)$, there exists a linear map $\calL:\R\to \R^n$ satisfying
	\begin{equation*}
	\begin{split}
		&\quad\ \big\|s\cdot(\phi_2\circ \calL \circ\phi_1)+b -f\big\|_{L^p([0,1]^d)}\\
		&\le \omega_f(\sqrt{d}\, 2^{-n})+2^{-n+2}\omega_f(\sqrt{d}),
	\end{split}
	\end{equation*}
	where $s=2\omega_f(\sqrt{d})$, $b=f(\bmzero)-\omega_f(\sqrt{d})$, and $\calL$ is a linear map given by $\calL(t)=(a_1t, a_2 t,\cdots,a_n t)$ with $a_1,a_2,\cdots,a_n\in [0,\tfrac{1}{3})$ determined by $f$ and $n$.
\end{theorem}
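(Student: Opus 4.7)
The plan is to replace $f$ by a piecewise-constant function on a uniform $2^{-n}$-grid and then encode its constant values into $n$ ternary-expansion parameters $a_1,\dots,a_n$, so that two $f$-independent ReLU networks $\phi_1$ (a hypercube-indicator) and $\phi_2$ (a universal digit-extractor) reconstruct the approximation.

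First I would partition $[0,1]^d$ into $N=2^{dn}$ axis-aligned sub-cubes $\{Q_\beta\}$ of side $2^{-n}$, pick a representative point in each $Q_\beta$, and set $f_\beta$ to be the value of $f$ there. Since $\operatorname{diam}(Q_\beta)\le\sqrt d\,2^{-n}$, replacing $f$ by $\sum_\beta f_\beta\,\one_{Q_\beta}$ costs at most $\omega_f(\sqrt d\,2^{-n})$ pointwise. After shifting and scaling by $b=f(\bmzero)-\omega_f(\sqrt d)$ and $s=2\omega_f(\sqrt d)$, the normalized values $\tilde f_\beta\coloneqq(f_\beta-b)/s$ lie in $[0,1]$, since $|f_\beta-f(\bmzero)|\le\omega_f(\sqrt d)$. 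I would then round each $\tilde f_\beta$ to its $n$-bit binary truncation $\hat f_\beta=\sum_{i=1}^n b_{\beta,i}\,2^{-i}$ with $b_{\beta,i}\in\{0,1\}$; this contributes an additional $L^p$ error of $s\cdot 2^{-n}=2^{-n+1}\omega_f(\sqrt d)$, and the extra factor of two in the stated bound will absorb the remaining boundary mismatches.

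The compression step turns each column of bits $(b_{\beta,i})_{\beta}$ into a single scalar $a_i=\sum_\beta b_{\beta,i}\,3^{-(\beta+2)}\in[0,\tfrac16]\subset[0,\tfrac13)$; these $n$ scalars are the only $f$-dependent numbers, and the map $\calL(t)=(a_1t,\dots,a_nt)$ carries precisely $n$ parameters. For $\phi_1$ I would construct an $f$-independent ReLU network that is constant on each $Q_\beta$ and sends $\bmx\in Q_\beta$ to a canonical scalar $t_\beta$ chosen so that the product $a_it_\beta$ has its $\beta$-th ternary digit recoverable by a fixed piecewise-linear extractor; this is implemented by $d$ coordinatewise step functions feeding an aggregator that converts the resulting multi-index into a single representative, costing at most $2^{dn+4}$ parameters in total. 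For $\phi_2$ I would stack $n$ parallel copies of a universal ternary digit-extraction sub-network---each of size $\calO(2^{dn})$ and reading off the $\beta$-th digit of $a_it_\beta$---and combine their outputs with the fixed weights $2^{-i}$, yielding at most $2^{dn+5}n$ parameters.

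Composing the three blocks, on every $Q_\beta$ the map $s\cdot(\phi_2\circ\calL\circ\phi_1)(\bmx)+b$ equals $s\hat f_\beta+b$, and the total error splits cleanly into the piecewise-constant error $\omega_f(\sqrt d\,2^{-n})$ and the quantization error $2^{-n+2}\omega_f(\sqrt d)$, giving the claimed $L^p$ bound. The main obstacle will be the joint design of $\phi_1$ and $\phi_2$: choosing $t_\beta$ and the ternary-digit extractor so that they are genuinely $f$-independent, produce the correct value on every $Q_\beta$, and meet the exact budgets $2^{dn+4}$ and $2^{dn+5}n$. The remaining steps (discretization, normalization, quantization, ternary encoding) are essentially routine once the decoder is in place.
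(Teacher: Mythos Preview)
Your overall strategy---normalize $f$ to $[0,1]$, partition into $2^{dn}$ cubes, quantize the values to $n$ bits, pack the $i$-th bit across all cubes into a single scalar $a_i$, and decode via a fixed indexer $\phi_1$ and a fixed digit-extractor $\phi_2$---is exactly the paper's approach. The paper uses base-$4$ rather than base-$3$ for the encoding (so $a_\ell=\sum_j\theta_{j,\ell}4^{-j}$ and $\phi_1(\bmx)=4^{\varrho(\bmbeta)}$ on $Q_\bmbeta$), and the extractor $\phi_2$ is built from iterated sawtooth functions; otherwise the architecture is the same.

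There is one genuine gap in your plan: you assert that $\phi_1$ is ``constant on each $Q_\bmbeta$'' and that the composition ``equals $s\hat f_\bmbeta+b$ on every $Q_\bmbeta$.'' This is impossible, since ReLU networks are continuous and a nonconstant function cannot be locally constant on a full partition of $[0,1]^d$. Your sentence ``the extra factor of two \dots\ will absorb the remaining boundary mismatches'' hides the real issue rather than resolving it. The paper handles this by introducing a \emph{trifling region} $\Omega([0,1]^d,K,\delta)$ of thickness $\delta$ around the grid faces: $\phi_1$ is designed to be constant only on each $Q_\bmbeta$ minus this region, and one obtains the pointwise bound $\omega_{\tilde f}(\sqrt{d}\,2^{-n})+2^{-n}$ only on $[0,1]^d\setminus\Omega$. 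To control the $L^p$ error on $\Omega$ one needs two further ingredients you have not mentioned: first, a uniform bound $\|\phi_2\circ\calL\circ\phi_1\|_{L^\infty}\le 1$, which requires building a clipping step into $\phi_2$ so that its output stays in $[0,1]$ regardless of input; second, choosing $\delta$ (which is not an intrinsic parameter---it is fixed once $n$ and $p$ are fixed) small enough that $dK\delta\cdot 2^p\le 2^{-np}$, so the trifling region contributes at most an extra $2^{-n}$ to the normalized $L^p$ error. After scaling by $s$ this yields precisely the additional $2^{-n+1}\omega_f(\sqrt d)$ that combines with your quantization error to give $2^{-n+2}\omega_f(\sqrt d)$. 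Without these pieces, your error estimate does not close.
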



In Theorem~\ref{thm:main}, $s$ is a scale factor, $b$ is the bias for a vertical shift, and $a_1,a_2,\cdots,a_n\in [0,\tfrac{1}{3})$ are the key intrinsic parameters storing most of information of $f$. Clearly, $s\cdot (\phi_2\circ \calL \circ\phi_1)+b$ can be implemented by a ReLU network with $n+2$ intrinsic parameters.
$\phi_1$ and $\phi_2$ are independent of the target function $f$ and can be implemented by ReLU networks. Remark that 
the architecture $s\cdot(\phi_2\circ \calL \circ\phi_1)+b$ in Theorem~\ref{thm:main} can be rewritten as $\phi_{f,\calR}\circ\phiout\circ\bmphi_f\circ\phiin$, where $\phiin=\phi_1$,
$\bmphi_f=\calL$, $\phiout=\phi_2$, and $\phi_{f,\calR}$ is a linear function given by $\phi_{f,\calR}(x)=sx+b$. 
Clearly, it is a special case of the architecture in \eqref{eq:basic:arc}.

Note that 
the approximation error in Theorem~\ref{thm:main} 
is characterized by the $L^p$-norm for $p\in [1,\infty)$. In fact, we can extend such a result to a similar one measured in the $L^\infty$-norm.
\begin{theorem}
	\label{thm:mainInfty}
	Given any $n\in\N^+$, there exist $\bmphi_1\in \calH_{ 3^d2^{dn+5} }(d,{3^d})$ and $\phi_2\in \calH_{3^d2^{dn+8}n}( {3^dn},1)$ such that: For any $f\in C([0,1]^d)$, there exists a linear map $\calL:\R^{3^d}\to \R^{3^dn}$ satisfying
	\begin{equation*}
\begin{split}
	&\quad\  \big\|s\cdot(\phi_2\circ \calL \circ\bmphi_1)+b-f\big\|_{L^\infty([0,1]^d)}\\
	&\le \omega_f(\sqrt{d}\, 2^{-n})+2^{-n+2}\omega_f(\sqrt{d}),
\end{split}
	\end{equation*}
	where $s=2\omega_f(\sqrt{d})$, $b=f(\bmzero)-\omega_f(\sqrt{d})$, and $\calL$ is given by \[\calL(x_1,\cdots,x_{3^d})=\Big(\calL_0(x_1),\cdots,\calL_0(x_{3^d})\Big)\]
	for any $\bmx=(x_1,\cdots,x_{3^d})\in \R^{3^d}$,
	where $\calL_0:\R\to \R^n$ is a linear map given by $\calL_0(t)=(a_1t, a_2 t,\cdots,a_n t)$ with $a_1,a_2,\cdots,a_n\in [0,\tfrac{1}{3})$ determined by $f$ and $n$.
\end{theorem}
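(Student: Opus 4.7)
The plan is to promote the $L^p$ construction of Theorem~\ref{thm:main} to a uniform bound by running $3^d$ shifted copies of that construction in parallel and aggregating their outputs inside $\phi_2$ via a nested median-of-three network. The approximator produced by Theorem~\ref{thm:main} already matches $f$ up to the stated error except on a narrow transition strip hugging the boundaries of its dyadic cell partition; by translating the partition in each coordinate, we obtain a family of $3^d$ reconstructions whose transition strips are staggered, and a robust aggregator can then recover a good value everywhere in $[0,1]^d$.

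First I would index shift vectors by $\bm{\delta}\in\{0,\tfrac{1}{3\cdot 2^n},\tfrac{2}{3\cdot 2^n}\}^d$ and, for each $\bm{\delta}$, re-run the proof of Theorem~\ref{thm:main} with the cell partition translated by $\bm{\delta}$. Choosing the transition width of the underlying ReLU step functions to be strictly smaller than $\tfrac{1}{3\cdot 2^n}$, one checks that for every $x\in[0,1]^d$ and every coordinate $k$, the set $B_k\subset\{0,\tfrac{1}{3\cdot 2^n},\tfrac{2}{3\cdot 2^n}\}$ of shifts whose 1D transition strip contains $x_k$ has at most one element. Consequently, a shift $\bm{\delta}$ can fail at $x$ only when $\delta_k\in B_k$ for some coordinate $k$, while for every other shift the per-shift pointwise error is bounded by $\omega_f(\sqrt{d}\,2^{-n})+2^{-n+2}\omega_f(\sqrt{d})$.

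Next I would arrange the $3^d$ shift outputs as a $3\times\cdots\times 3$ tensor and let $\phi_2$ reduce it by taking the median-of-three along axis $1$, then axis $2$, and so on up to axis $d$. The combinatorial observation driving this is: for any slice indexed by $(\delta_2,\ldots,\delta_d)$ with $\delta_k\notin B_k$ for all $k>1$, the three inputs differing in $\delta_1$ contain at most $|B_1|\le 1$ bad entries, so the median of that slice is good; slices that are fully bad preserve the pattern ``bad iff $\delta_k\in B_k$ for some $k$'' in the reduced $(d{-}1)$-tensor, and induction on $d$ yields a single good scalar. The median-of-three map $\middleValue(u,v,w)=u+v+w-\max(u,v,w)-\min(u,v,w)$ is ReLU-realizable with $\calO(1)$ parameters, so the whole cascade costs only $\calO(3^d)$ extra parameters, well within the claimed budget.

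Finally I would assemble the prescribed architecture: take $\bmphi_1$ to be $3^d$ parallel copies of the cell-index encoder of Theorem~\ref{thm:main}, one per shift, so that $\bmphi_1(x)\in\R^{3^d}$ carries the shifted cell indices; let $\calL$ apply the shared linear map $\calL_0$ coordinatewise to produce an element of $\R^{3^d n}$; and take $\phi_2$ to be $3^d$ parallel copies of the value decoder of Theorem~\ref{thm:main} followed by the nested median network. The main obstacle I anticipate is arranging the $n$ intrinsic parameters $a_1,\ldots,a_n\in[0,\tfrac{1}{3})$ so that the \emph{same} linear map $\calL_0$ serves all $3^d$ shifted partitions simultaneously; I would handle this by letting the $a_i$'s encode the values of $f$ on a single fine reference lattice that contains every shifted cell center, so that each coordinate of $\bmphi_1$ returns an index into that common master list and $\calL_0$ needs no shift-dependent modification.
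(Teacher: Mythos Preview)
Your plan is essentially the paper's: build $3^d$ shifted copies of the Theorem~\ref{thm:main} approximator, stack them as $\bmphi_1$, apply $\calL_0$ coordinatewise, decode in parallel inside $\phi_2$, and collapse via a nested median-of-three along each coordinate axis. The combinatorial argument you sketch (at most one bad shift per coordinate, so the iterated median survives) is exactly the content of the paper's Lemma~\ref{lem:approx:trifling}.

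There is one genuine difference worth noting, and it bears precisely on the obstacle you flagged. You shift the \emph{partition} by the fixed amounts $\{0,\tfrac{1}{3K},\tfrac{2}{3K}\}$ and then propose to reconcile the $3^d$ copies by encoding $f$ on a refined master lattice so that a single $\calL_0$ serves all shifts. The paper instead shifts the \emph{input} by an arbitrarily small $\delta\in(0,\tfrac{1}{3K}]$: it takes the single base approximator $\psi_0=\psi_{2,0}\circ\calL_{\bma,0}\circ\psi_{1,0}$ from Theorem~\ref{thm:mainOld} and forms $\psi_0(\bmx\pm\delta\bme_i)$. Because every copy is literally the same function evaluated at a nearby point, the same $a_1,\dots,a_n$ work automatically; no master lattice is needed. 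The price is an additional error $d\,\omega_{\tildef}(\delta)$, which is absorbed by choosing $\delta$ small enough that $d\,\omega_{\tildef}(\delta)\le 2^{-n}$. This is why the paper's parameter counts come out as clean $3^d$ multiples of the Theorem~\ref{thm:main} budgets, whereas your refined-lattice route would inflate the decoder depth by an extra factor $3^d$ (the sawtooth in Proposition~\ref{prop:phi4j} must now resolve $(3K)^d$ indices rather than $K^d$), potentially overshooting the stated bound $3^d2^{dn+8}n$ for $\phi_2$ once $d$ is moderately large. Your approach is sound in spirit, but the paper's small-$\delta$ input shift is the cleaner way to dissolve the shared-$\calL_0$ obstacle you identified.
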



Simplifying the implicit approximation error in Theorem~\ref{thm:main} (or \ref{thm:mainInfty}) to make it explicitly depending on $n$ is challenging in general, since the modulus of continuity $\omega_f(\cdot)$ may be complicated. However, the error can be simplified if $f$ is a H{\"o}lder continuous function on $[0,1]^d$ of order $\alpha\in(0,1]$ with a H\"older constant $\lambda>0$.  That is, $f$ satisfies
\begin{equation*}\label{eqn:Holder}
	|f(\bmx)-f(\bmy)|\leq \lambda \|\bmx-\bmy\|_2^\alpha\quad \tn{for any $\bmx,\bmy\in[0,1]^d$,}
\end{equation*}
implying $\omega_f(r)\le \lambda\cdot r^\alpha$ for any $r\ge 0$. This means we can get an exponentially small approximation error $5\lambda d^{\alpha/2}2^{-\alpha n}$. In particular, in the special case of $\alpha=1$, i.e., $f$ is a Lipschitz continuous function with a Lipschitz constant $\lambda>0$, then the approximation error  is further simplified to $5\lambda\sqrt{d}\,2^{-n}$.

Though the linear mapping $\calL$ in Theorem~\ref{thm:mainInfty} is essentially determined by $n$ key parameters $a_1,a_2,\cdots,a_n$, these $n$ key parameters are repeated $3^d$ times in the final network architecture as shown in Figure~\ref{fig:phi123}. Therefore, $s\cdot(\phi_2\circ \calL \circ\bmphi_1)+b$ can be implemented by a ReLU network with $3^d n+2$ intrinsic parameters. 
Remark that we can reduce the number of intrinsic parameters to $n+2$ via using a fixed ReLU network to copy $n$ key parameters $3^d$ times. 


Furthermore, the number of intrinsic parameters can be reduced to three in the case of H\"older continuous functions. In other words, three intrinsic parameters are enough to achieve an arbitrary pre-specified error if sufficiently high precision is provided, as shown in the theorem below.


\begin{theorem}
	\label{thm:main:three:parameters}
	Given any $\varepsilon>0$, $\alpha\in (0,1]$, and $\lambda>0$, there exists $\phi\in\calH(d+1,1)$  such that: For any H\"older continuous function $f$ on $[0,1]^d$ of order $\alpha\in (0,1]$ with a H\"older constant $\lambda>0$, there exist three parameters $s\in [0,\infty)$, $v\in [0,1)$, and $b\in\R$ satisfying
	\begin{equation*}
		\big|s\phi(\bmx,v)+b-f(\bmx)\big| \le  \varepsilon\quad \tn{for any $\bmx\in[0,1]^d$.}
	\end{equation*}
\end{theorem}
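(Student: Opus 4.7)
The plan is to bootstrap from Theorem~\ref{thm:mainInfty} and then compress its $n$ intrinsic parameters $a_1,\ldots,a_n\in[0,\tfrac13)$ into a single real $v\in[0,1)$ at the cost of a more intricate but still $f$-independent architecture. Given $\varepsilon$, first choose $n=n(\varepsilon,\alpha,\lambda,d)$ so that $5\lambda d^{\alpha/2}2^{-\alpha n}\le \varepsilon/2$. Since $\omega_f(r)\le\lambda r^\alpha$ for H\"older continuous $f$, Theorem~\ref{thm:mainInfty} produces fixed networks $\bmphi_1$, $\phi_2$, scalars $s=2\omega_f(\sqrt d)\le 2\lambda d^{\alpha/2}$, $b=f(\bmzero)-\omega_f(\sqrt d)$, and coefficients $a_1,\ldots,a_n\in[0,\tfrac13)$ with $\|s\cdot(\phi_2\circ\calL\circ\bmphi_1)+b-f\|_{L^\infty([0,1]^d)}\le\varepsilon/2$. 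The remaining task is to realize $\calL$ through one extra scalar $v$ together with a fixed decoder and an approximate-multiplication circuit.

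For the encoding step, I would quantize each $a_i$ to the nearest grid point $\tilde a_i=k_i/K$ with $k_i\in\{0,1,\ldots,\lceil K/3\rceil-1\}$, where $K=K(\varepsilon,n,d)$ is a large integer to be fixed later, and concatenate the digits into $v:=\sum_{i=1}^n k_iK^{-i}\in[0,1)$. Because $k_i<K/3$ the sum stays in $[0,1)$ and the digits do not carry, so $v$ encodes the tuple $(\tilde a_1,\ldots,\tilde a_n)$ uniquely. A fixed ReLU network $\bmpsi:\R\to\R^n$ can then be built to recover $(\tilde a_1,\ldots,\tilde a_n)$ from any such admissible $v$: the map is a lookup table on a finite set and hence admits a piecewise-linear extension on $[0,1]$ that is ReLU-computable with size depending only on $n$ and $K$, both of which are permitted to grow freely with $\varepsilon,\alpha,\lambda,d$.

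With $\bmpsi$ in hand, I would assemble $\phi(\bmx,v):=\phi_2\!\left(\bigl[\mu(\psi_i(v),[\bmphi_1(\bmx)]_k)\bigr]_{i,k}\right)$, where the products are ordered to match the slot structure of $\calL\circ\bmphi_1$, and $\mu:\R^2\to\R$ is a fixed ReLU network approximating multiplication on the compact domain $[0,\tfrac13]\times[-M_1,M_1]$ (with $M_1:=\max_{\bmx,k}|[\bmphi_1(\bmx)]_k|$) to uniform accuracy $\delta$; this is the standard Yarotsky squaring construction combined with the identity $4xy=(x+y)^2-(x-y)^2$. The total error then splits into three pieces: (i) the original Theorem~\ref{thm:mainInfty} error, already bounded by $\varepsilon/2$; (ii) the perturbation from replacing $a_i$ by $\tilde a_i$, controlled by $s$, $M_1$, the Lipschitz constant $L_{\phi_2}$ of the fixed network $\phi_2$, and $1/K$; (iii) the per-product multiplication error $\delta$, again propagated through $L_{\phi_2}$ and $s$. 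Choosing $K$ large and $\delta$ small in terms of $n$, $L_{\phi_2}$, $M_1$, and $s$ forces (ii)+(iii) below $\varepsilon/2$, yielding a pointwise error of at most $\varepsilon$; the only $f$-dependent quantities are $s$, $v$, and $b$.

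The main obstacle is the multiplication layer $\mu$: unlike the purely linear $\calL$, the quantities $\tilde a_i\cdot[\bmphi_1(\bmx)]_k$ are genuinely bilinear in two simultaneously varying factors and cannot be produced exactly by a ReLU circuit, so one must control the approximation error uniformly in $\bmx$ and carefully propagate it through $\phi_2$. Fortunately $\phi_2$ is fixed once $n$ is fixed, so $L_{\phi_2}$ and the range constant $M_1$ are $f$-independent constants, and $\delta$ can be chosen once and for all at the level of the architecture $\phi$, keeping the $f$-dependence confined to the three scalars $s,v,b$.
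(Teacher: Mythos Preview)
Your approach is essentially the same as the paper's: bootstrap from Theorem~\ref{thm:mainInfty}, encode the $n$ coefficients into a single $v$ by digit concatenation, decode with a fixed ReLU lookup network (the paper's Lemma~\ref{lem:store:parameter}), and replace the bilinear $\calL$ by an approximate-multiplication circuit (the paper's Lemma~\ref{lem:xy}), propagating the error through the Lipschitz constant of the fixed $\phi_2$. The one minor difference is that the paper observes, from the proof of Theorem~\ref{thm:mainInfty} via Theorem~\ref{thm:mainOld}, that the $a_i$ are already dyadic ($2^m a_i\in\N$ with $m=2^{dn+1}$), so your quantization step and its associated error term~(ii) are unnecessary.
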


In Theorem~\ref{thm:main:three:parameters}, $s$ is a scale factor, $b$ is the bias for a vertical shift, and $v$ is the key intrinsic parameter storing sufficient information of the target function $f$.
Clearly, $\phi$ is independent of  $f$,  while $s$, $v$, and $b$ are determined by $f$.
Let $\phi_2=\phi$, $\bmphi_1$ be the identity map on $\R^d$, and $\calL_v:\R^d\to\R^{d+1}$ be an affine linear transform mapping $\bmx$ to $(\bmx,v)$. Then $s\phi(\bmx,v)+b$ can also be represented as  $s\phi_2\circ\calL_v\circ\bmphi_1(\bmx)+b$, which is a special case of the architecture in \eqref{eq:basic:arc}.

Remark that Theorem~\ref{thm:main:three:parameters} is just a theoretical result since the key intrinsic parameter $v$ requires extremely high precision, which is necessary for storing the values of $f$ at sufficiently many points within a sufficiently small error. 
By using the idea of the binary representation,
we can extract the values of $f$ stored in $v$ via an $f$-independent ReLU network (as a sub-network of the final network realizing $\phi$ in Theorem~\ref{thm:main:three:parameters}).
In fact,
there is a balance between the precision requirement and the number of intrinsic parameters. For example, if we store the values of $f$ in two intrinsic parameters (not one), then the precision requirement is greatly lessened.

\subsection{Further interpretation}\label{sec:further:interpretation}

We will connect our theoretical results to
related existing results for a deeper understanding.
First, we connect our results to transfer learning. Next, we discuss the error analysis of deep neural networks to reveal
the motivation for reducing the number of parameters that need to be trained. Finally, we discuss related work from an approximation perspective.

\subsubsection{Connection to transfer learning}

Transfer learning dates back to 1970s \cite{transfer1,transfer2}.
It is a research direction in machine learning that
applies knowledge gained in one problem to solve a different but related problem.  Typically in deep learning, transfer learning uses a  pre-trained neural network obtained for one task as an initial guess of the neural network for another task to achieve a short training time. 
Our theory in this paper could provide insights into the success of transfer learning using neural networks, though the setting of our theory is different from realistic transfer learning. In our theory, the inner-function and outer-function are universally used for all learning tasks for continuous functions, which can be understood as the part of networks that can be transferred to different tasks. Suppose $f_1$ and $f_2$ are the target functions for two different but related tasks. If $f_1$ has been learned via an architecture $\phi_{f_1,\calR}\circ\bmphiout\circ\bmphi_{f_1}\circ\bmphiin$, then we can ``transfer'' the prior knowledge ($\bmphiin$ and $\bmphiout$) to another task. This means that, by only learning $\bmphi_{f_2}$ and $\phi_{f_2,\calR}$ from $f_2$, we can use $\phi_{f_2,\calR}\circ\bmphiout\circ\bmphi_{f_2}\circ\bmphiin$ to approximate $f_2$ well. Therefore, the total number of parameters that need to be learned again is not large if most of the parameters are distributed in the sub-networks corresponding to $\bmphiin$ and $\bmphiout$. 
Our theory may provide a certain theoretical understanding in the spirit of transfer learning from a network approximation perspective. To gain a deeper understanding, one can refer to \cite{doi:10.1177/1059712318818568,NIPS1992_67e103b0,Baxter1998,Mihalkova2007,pmlr-v2-niculescu-mizil07a,TL1,TL2}.
We will test the proposed architecture in \eqref{eq:basic:arc} in the context of transfer learning in Section~\ref{sec:experiments}.

\subsubsection{Error analysis}
Let us discuss the motivation for reducing the number of parameters that need to be trained. To this end, let us first talk about the error analysis of deep neural networks.
Suppose that a target function space $\scrF$ is given. To numerically compute (or approximate) the element in $\scrF$, we need to design a simple and computable hypothesis space $\scrH$ and use the elements in $\scrH$ to approximate those in $\scrF$.
To evaluate how well a numerical solution in $\scrH$ approximates an element in $\scrF$, we introduce three typical errors: the approximation, generalization, and optimization errors.

Given a target function $f\in \scrF$ defined on a domain $\calX$,  our goal is to learn a hypothesis function $\phi\in\scrH$ from finitely many samples $\{( \bm{x}_i,f(\bm{x}_i){ )}\}_{i=1}^n$. 
To infer $f(\bmx)$  for an unseen sample $\bmx$, we need to identify the empirical risk minimizer $\phi_\calS$, which is given by
\begin{equation}\label{eqn:emloss}
	\phi_{\mathcal{S}}
	 \in 
	 \argmin_{\phi\in \scrH}R_{\mathcal{S}}(\phi),
\end{equation}
where $R_{\mathcal{S}}(\phi)$ is the empirical risk defined by 
\begin{equation*}
 R_{\mathcal{S}}(\phi):=
	\frac{1}{n}\sum_{i=1}^n \ell\big( \phi(\bm{x}_i),f(\bm{x}_i)\big) 
\end{equation*}
with a loss function typically taken as $\ell(y,y')=\frac{1}{2}|y-y'|^2$.

In fact, the best hypothesis function to infer $f(\bm{x})$ is $\phi_\calD(\bm{x})$, but not $\phi_\calS(\bm{x})$, where $\phi_\calD$ is the expected risk minimizer given by
\begin{equation*}
	\phi_\calD
	\in 
	\argmin_{\phi\in \scrH} R_{\calD}(\phi),
	\end{equation*}
\tn{where} $R_{\mathcal{D}}(\phi)$ is the expected risk defined by  
\begin{equation*}
    R_{\mathcal{D}}(\phi)\coloneqq
	\mathbb{E}_{\bm{x}\sim U(\calX)} \left[\ell\big( \phi(\bm{x}),f(\bm{x})\big)\right],
\end{equation*}
where $U$ is a unknown data distribution over $\calX$.
The best possible inference error is $R_{\mathcal{D}}(\phi_{\mathcal{D}})$. In real-world applications, $U(\calX)$ is unknown and only finitely many samples from this distribution are available. Hence, the empirical risk $R_{\mathcal{S}}(\phi)$ is minimized, hoping to obtain $\phi_\calS(\bm{x})$, instead of minimizing the expected risk $R_{\mathcal{D}}(\phi)$ to obtain $\phi_\calD(\bm{x})$. A numerical optimization method to solve \eqref{eqn:emloss} may result in a numerical solution (denoted as $\phi_\calN$) that may not be a global minimizer $\phi_\calS$. Therefore, the actually learned hypothesis function to infer $f(\bm{x})$ is $\phi_\calN(\bm{x})$ and the corresponding inference error is measured by $R_{\mathcal{D}}(\phi_\calN)$, which is bounded by
\begin{scriptsize}
\begin{equation*}
\begin{aligned}
	R_{\calD}(\phi_\calN)
	& =\underbrace{
    	[R_{\calD}(\phi_\calN)-R_{\calS}(\phi_{\calN})]
	}_{\tn{GE}}
	+\underbrace{
    	[R_{\calS}(\phi_{\calN})-R_{\calS}(\phi_{\calS})]
	}_{\tn{OE}}
	\\   &\quad\ 
	+\underbrace{
    	[R_{\calS}(\phi_{\calS})-R_{\calS}(\phi_{\mathcal{D}})]
	}_{\tn{$\le 0$ by \eqref{eqn:emloss}}} +\underbrace{
    	[R_{\calS}(\phi_{\mathcal{D}})-R_{\mathcal{D}}(\phi_{\mathcal{D}})]
	}_{\tn{GE}}
	+\underbrace{
    	R_{\mathcal{D}}(\phi_{\mathcal{D}})
	}_{\tn{AE}}
	 \\ &
	 \le \underbrace{R_{\mathcal{D}}(\phi_{\mathcal{D}})}_{\tn{Approximation error (AE)}} 
	 \quad +\quad  
	 \underbrace{
    	 [R_{\calS}(\phi_{\calN})-R_{\calS}(\phi_{\calS})]
	 }_{\tn{Optimization error (OE)}}
		\\   &\quad\    + \  
		\underbrace{
    		[R_{\mathcal{D}}(\phi_{\calN})-R_{\calS}(\phi_{\calN})]
    		+[R_{\calS}(\phi_{\mathcal{D}})-R_{\mathcal{D}}(\phi_{\mathcal{D}})]
		}_{\tn{Generalization error (GE)}}.
\end{aligned}
\end{equation*}
\end{scriptsize}
As we can see from the above equation, the numerical inference error $R_\calD(\phi_\calN)$, the distance between the numerical solution $\phi_\calN$ and the target function $f$,  is bounded by three errors: the approximation, generalization, and optimization errors. See Figure~\ref{fig:aoge} for an illustration.
\begin{figure}[htp!]
 	\vskip 0.1in
	\centering
	\includegraphics[width=0.45\textwidth]{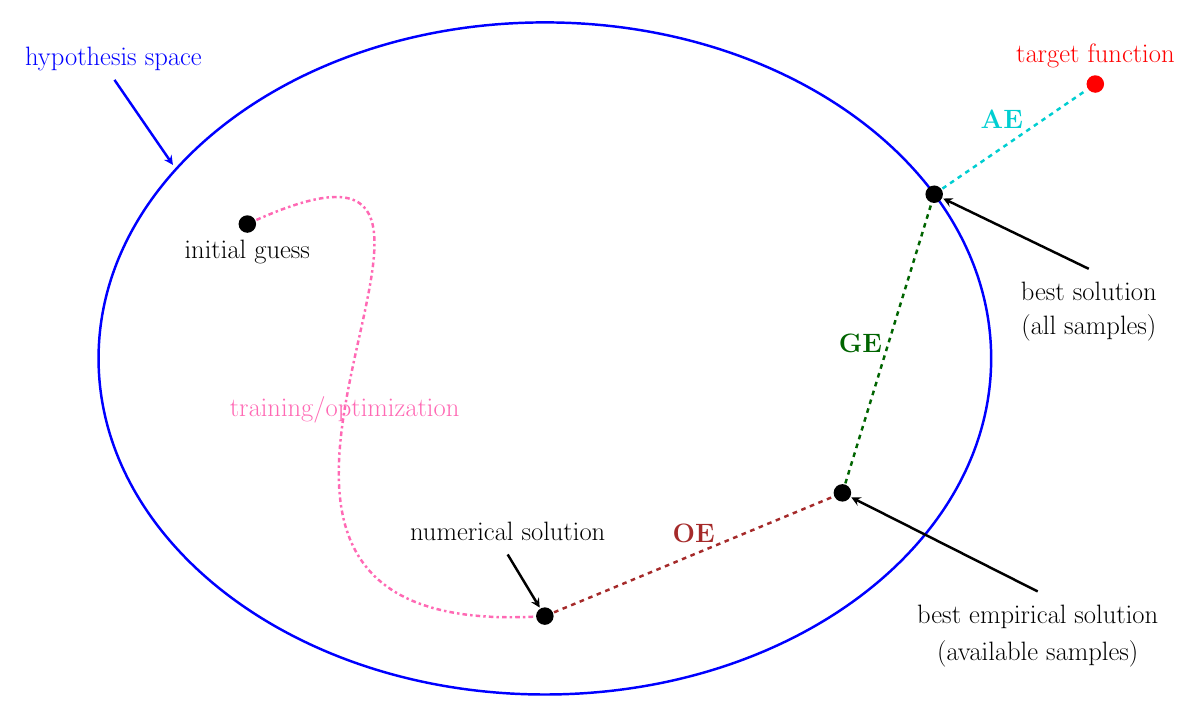}
	\caption{An illustration of the approximation error (AE), the generalization error (GE), and the optimization error (OE).}
	\label{fig:aoge}
\end{figure}

	The constructive approximation established in this paper and the literature provides an upper bound of the approximation error $R_{\mathcal{D}}(\phi_{\mathcal{D}})$. 
	 The theoretical guarantee of the convergence of an optimization algorithm to a global minimizer $\phi_{\mathcal{S}}$ and the characterization of the convergence belong to the optimization analysis of neural networks. 
	The generalization error is controlled by two key factors: the complexity of the hypothesis function space and the number of training (available) samples.
	One could refer to \cite{DBLP:journals/corr/abs-1910-00121,Weinan2019,Weinan2019APE,NIPS2016_6112,DBLP:journals/corr/NguyenH17,opt,xu2020,JMLR:v20:17-526} for further discussions of the generalization and optimization errors.
	
	Theorems~\ref{thm:main}, \ref{thm:mainInfty}, and \ref{thm:main:three:parameters} provide upper bounds of $R_{\mathcal{D}}(\phi_{\mathcal{D}})$. These bounds only depends on the number of intrinsic parameters of ReLU networks and the modulus of continuity $\omega_f(\cdot)$. Hence, these bounds are independent of the empirical risk minimization problem in \eqref{eqn:emloss} and the optimization algorithm  used to compute the numerical solution of \eqref{eqn:emloss}. In other words, Theorems~\ref{thm:main}, \ref{thm:mainInfty}, and \ref{thm:main:three:parameters} quantify the approximation power of ReLU networks in terms of the nubmer of intrinsic parameters. Designing efficient optimization algorithms and analyzing the generalization error for ReLU networks 
	are two other separate future directions.

Generally, making the hypothesis function space $\scrH$ smaller would result in a larger approximation error, a smaller generalization error, and a smaller optimization error. Thus, there is a balance for the choice of the hypothesis function space. 
Our theory pre-specifies most parameters based on the prior knowledge of the target function space $\scrF$, which leads to a smaller hypothesis function space, denoted by $\widetilde{\scrH}$. Thus, we can expect better approximation and generalization errors. Meanwhile, the approximation error becomes only a little worse since we only pre-specify unimportant parameters (non-intrinsic ones).
Therefore, we can expect a good inference (test) error by using our method to reduce the number of parameters that need to be trained.
See Figure~\ref{fig:H2tildeH} for an illustration. 
\begin{figure}[htp!]
 	\vskip 0.1in
	\centering
	\includegraphics[width=0.41\textwidth]{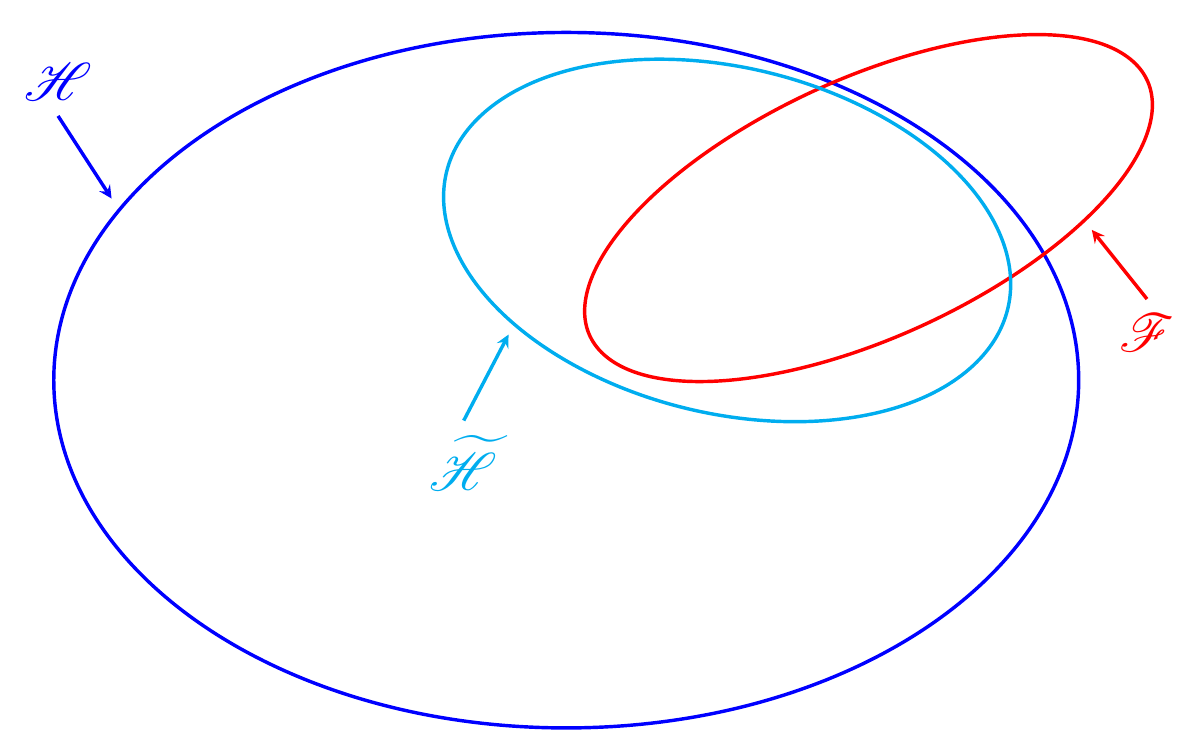}
	\caption{An illustration of our theory.
	The approximation error becomes only a little bit worse since we just pre-specify unimportant parameters based on the target function space $\scrF$.
	The new hypothesis function space $\widetilde{\scrH}$ is much smaller than the original hypothesis function space $\scrH$, leading to better generalization and optimization errors.  }
	\label{fig:H2tildeH}
\end{figure}

The key point of our method is the prior knowledge of the target function space $\scrF$. This generally means that $\scrF$ is small and special.
Our method would fail if the target function space $\scrF$ is pretty large (e.g., $\scrF=L^1([0,1]^d)$). In this case, if the new hypothesis function space $\widetilde{\scrH}$ is much smaller than the original one $\scrH$, then the approximation error would become much worse. See Figure~\ref{fig:largeF} for an illustration.

\begin{figure}[htp!]
 	\vskip 0.1in
	\centering
	\includegraphics[width=0.41\textwidth]{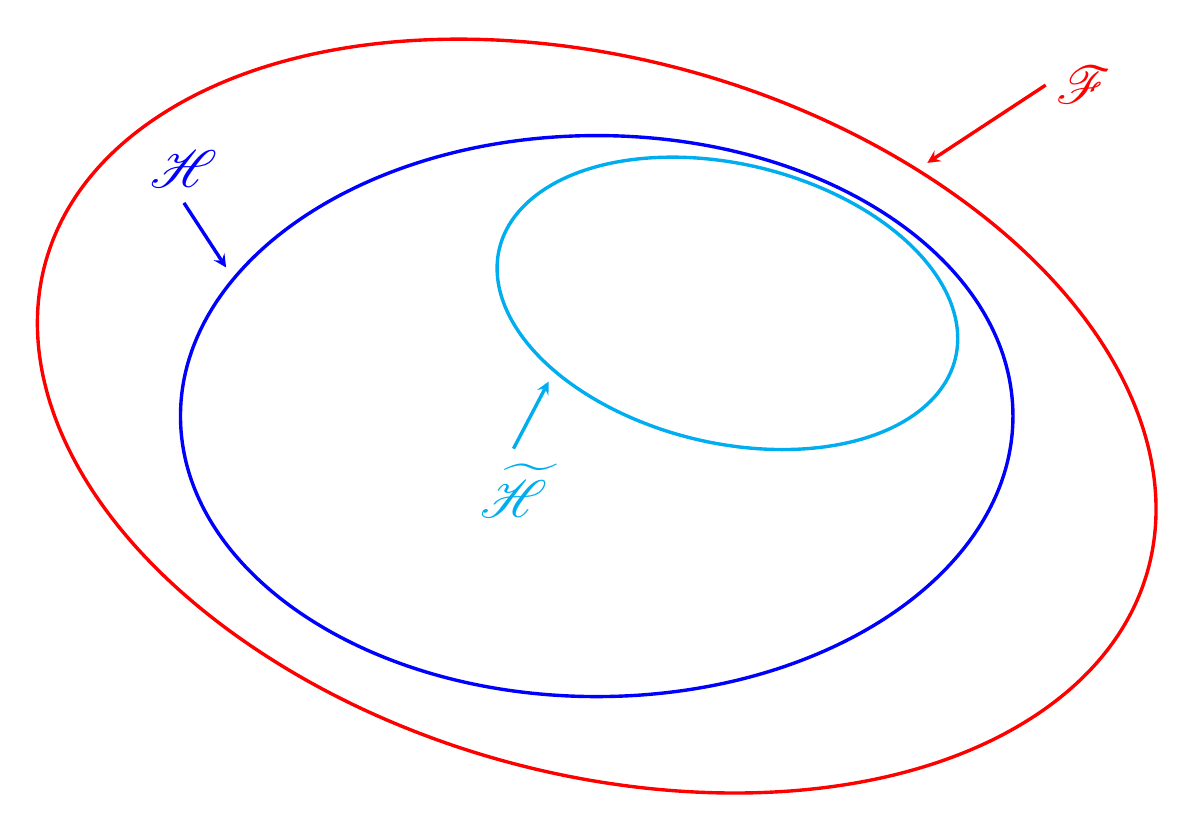}
	\caption{An illustration of the case that the target function space $\scrF$ is large.  }
	\label{fig:largeF}
\end{figure}

\subsubsection{Related work}

The expressiveness of deep neural networks has been studied extensively from many perspectives, e.g., in terms of combinatorics \cite{NIPS2014_5422}, topology \cite{ 6697897}, Vapnik-Chervonenkis (VC) dimension \cite{Bartlett98almostlinear,Sakurai,pmlr-v65-harvey17a}, fat-shattering dimension \cite{Kearns,Anthony:2009}, information theory \cite{PETERSEN2018296}, classical approximation theory \cite{Cybenko1989ApproximationBS,HORNIK1989359,barron1993,yarotsky18a,yarotsky2017,doi:10.1137/18M118709X,ZHOU2019,10.3389/fams.2018.00014,2019arXiv190501208G,2019arXiv190207896G,suzuki2018adaptivity,Ryumei,Wenjing,Bao2019ApproximationAO,2019arXiv191210382L,MO,shijun1,shijun2,shijun3,shijun:thesis}, etc. In the early works of approximation theory for neural networks, the universal approximation theorem  \cite{Cybenko1989ApproximationBS,HORNIK1991251,HORNIK1989359} without approximation errors showed that, given any $\varepsilon>0$, there exists a sufficiently large neural network approximating a target function in a certain function space within an error $\varepsilon$. For one-hidden-layer neural networks and sufficiently smooth functions, 
it is shown in \cite{barron1993,barron2018approximation} that an asymptotic approximation error $\calO(N^{-1/2})$ in the $L^2$-norm, leveraging an idea that is similar to Monte Carlo sampling for high-dimensional integrals.

Recently, it is proved in \cite{shijun2,yarotsky18a,shijun:thesis} that the (nearly) optimal approximation error would be $\calO(n^{-2/d})$ when using ReLU networks with $n$ parameters to approximate functions in the unit ball of Lipschitz continuous function space on $[0,1]^d$. Clearly, such an error suffers from the curse of dimensionality. To bridge this gap, one could either consider smaller function spaces, e.g., smooth functions \cite{shijun3,yarotsky:2019:06} and band-limited functions \cite{bandlimit}, or introducing new network architectures, e.g.,  Floor-ReLU networks \cite{shijun4}, Floor-Exponential-Step (FLES) networks \cite{shijun5}, and  (Sin, ReLU, $2^x$)-activated networks \cite{jiao2021deep}. This paper proposes a new perspective to characterize the approximation error in terms of the number of intrinsic parameters. 
Such a method is inspired by an observation that most parameters of most constructed networks in the mentioned papers are independent of the target function. Thus, most parameters can be assigned or computed in advance, i.e., we can approximate the target function by only adjusting a small number of parameters.
As shown in Theorem~\ref{thm:main}, we can first design an inner-function  $\phi_1$ and an outer-function $\phi_2$, both of which can be implemented by ReLU networks. Then, for any continuous function $f\in C([0,1]^d)$, $s\cdot(\phi_2\circ\calL\circ\phi_1)+b$ can approximate $f$ with an error $\omega_f(\sqrt{d}\, 2^{-n})+2^{-n+2}\omega_f(\sqrt{d})$ by the following two steps: 1) determining $s$ and $b$, 2) designing a linear map $\calL$ defined by $\calL(t)=(a_1t,\cdots,a_n t)$, where $a_1,\cdots,a_n$ are determined by the target function $f$. Therefore, we overcome the curse of  dimensionality in the sense of the approximation error characterized by the number of intrinsic parameters when the variation of $\omega_f(r)$ as $r\to 0$ is moderate (e.g., $\omega_f(r) \lesssim r^\alpha$ for H\"older continuous functions).


\begin{table*}[t]    
	\caption{Network architecture.} 
	\label{tab:net:architecture}
	\vskip 0.15in
	\centering  
	\resizebox{0.95\textwidth}{!}{ 
		\begin{tabular}{ccccccc} 
			\toprule
			layers    &     activation & size & dropout & batch normalization & \#parameters & remark  \\
			\midrule
			input $\in \R^{28\times 28}$  &  & $28\times 28$   \\
			
			\midrule
			
			Conv-1: $1\times (3\times 3), \, 32$ & ReLU & $(26\times 26) \times 32$ &  & yes &  $320$ & \multirow{2}{*}{low-level features, block 1 ($\bmphiin$)}   \\
			
			Conv-2: $32\times (3\times 3), \, 32$ & ReLU, MaxPool &  $(12\times 12) \times 32$ &  $0.25$ & yes & $9248$ &  \\
			
			\midrule
			
			Conv-3: $32\times (3\times 3), \, 64$ & ReLU & $(10\times 10) \times 64$ &  & yes & $18496$ &\multirow{2}{*}{high-level features, block 2 ($\bmphi_f$)} \\
			
			Conv-4: $64\times (3\times 3), \, 64$ & ReLU, Flatten & $(8\times 8) \times 64$ &  $0.25$ & yes &  $36992$ \\
			
			\midrule
			
			FC-1: $4096, \, 512$ & ReLU & $512$ &  $0.5$ & yes & $2097664$ & initial classifier, block 3 ($\bmphiout$) \\

			\midrule
			
			FC-2: $512, \, 64$ & ReLU & $64$ &   & yes & $32832$ & \multirow{2}{*}{final classifier, block 4 ($\phi_{f,\calR}$)} \\
			
			FC-3: $64, \, 10$ &  Softmax   & $10$ &  & yes & $650$  \\
			
			\midrule
			
			output $\in \R^{10}$ &   \\
			
			\bottomrule
		\end{tabular} 
	}
	\vskip -0.1in
\end{table*} 

\section{Experiments}\label{sec:experiments}
In this section, we conduct several experiments to provide numerical examples that training a small number of parameters is enough to achieve good results. We first discuss the goal of our experiments.
Next,
we extend the architecture in \eqref{eq:basic:arc} to
a simple convolutional neural
network (CNN) architecture for classification problems. Finally, we use the proposed CNN architecture to conduct several experiments and present the numerical results for three comment datasets: MNIST, Kuzushiji-MNIST (KMNIST), and Fashion-MNIST (FMNIST).

\subsection{Goal of experiments}
First, let us discuss why we adopt classification problems as our experiment examples. The goal of a classification problem with $J\in\N^+$ classes is to identify a classification function $f$ defined by 
\begin{equation*}
    f(\bmx)=j \quad \tn{for any $\bmx\in E_j$ and $j=0,1,\cdots,J-1$,}
\end{equation*}
where $E_j\subseteq E$ is the minimum closed set containing all samples with a label $j$ and $E\subseteq \R^d$ is a bounded closed set (e.g., $E=[0,1]^d$). 
Clearly, $E_0,E_1,\cdots,E_{J-1}$ are pairwise disjoint. Such a classification function can be continuously extended to $E$, which means a classification problem can also be regarded as a continuous function approximation problem.
We take the case $J=2$ as an example to illustrate the extension. The multiclass case is similar. Define 
\begin{equation*}
	\widetilde{f}(\bmx)\coloneqq \frac{\tn{dist}(\bmx,E_0)}{\tn{dist}(\bmx,E_0)+\tn{dist}(\bmx,E_1)} \quad \tn{for any $\bmx\in [0,1]^2$},
\end{equation*}
where 
\begin{equation*}
	\tn{dist}(\bmx,E_i)\coloneqq \inf_{\bmy\in E_i}\|\bmx-\bmy\|_2 
\end{equation*} 
for any $\bmx\in [0,1]^2$ and $i=0,1$.
It is easy to verify that $\widetilde{f}$ is continuous on $[0,1]^d$ and 
\begin{equation*}
    \widetilde{f}(\bmx)=f(\bmx)\quad \tn{ for any $\bmx\in E_0\cup E_1$.}
\end{equation*}
That means $\widetilde{f}$ is a continuous extension of $f$.
Remark that, in our experiments, we use networks to approximate an equivalent variant $\hatbmf$ of the original classification function $f$ mentioned above, where $\hatbmf$ is given by 
\begin{equation*}
    \hatbmf(\bmx)=\bme_j \quad \tn{for any $\bmx\in E_j$ and $j=0,1,\cdots,J-1$,}
\end{equation*}
where $\{\bme_1,\bme_2,\cdots,\bme_J\}$ is the standard basis of $\R^J$, i.e., $\bme_j\in\R^J$ denotes the vector with a $1$ in the $j$-th coordinate and $0$'s elsewhere.

In our theoretical results, we need to specify the network architecture and the values of the parameters corresponding to $\bmphiin$ and $\bmphiout$.
It is conjectured that there are more choices for the network architecture and the values of the parameters corresponding to $\bmphiin$ and $\bmphiout$. To verify that numerically, we extend the architecture in \eqref{eq:basic:arc} to a CNN architecture  and pre-specify the values of the parameters corresponding to $\bmphiin$ and $\bmphiout$ via pre-training all parameters with a part of training samples. 

Our theory only focuses on the approximation error, while the test error (accuracy) in the experiment is bounded by three errors, as discussed in Section~\ref{sec:further:interpretation}.
A good approximation error may not guarantee a good test error.
However, the approximation error is bounded by the test error, as we can see from the discussion of the error analysis previously.
Thus, a good test error implies that the approximation error is well controlled. 
It remains to show that training a small number of parameters are enough to achieve a good test error. 
If this can be numerically observed, then the proposed CNN can approximate the classification function well via only adjusting only a small number of parameters. 


\subsection{Network architecture}\label{sec:cnn:architructure}

We will design 
a CNN architecture to classify the images in three datasets: MNIST, KMNIST, and FMNIST. 
Each of these three datasets has ten classes and their elements have a size ${28\times 28}$. Thus, we can take the same CNN architecture for these three datasets.
For simplicity, we consider a basic CNN architecture: four convolutional layers and three fully connected layers. 
The whole CNN architecture is summarized in Figure~\ref{fig:CNNarc}.
We present more details of the proposed CNN architecture and connect it to the architecture in \eqref{eq:basic:arc} in Table~\ref{tab:net:architecture}.
\begin{figure}[htp!]
 	\vskip 0.1in
	\centering
	\includegraphics[width=0.472\textwidth]{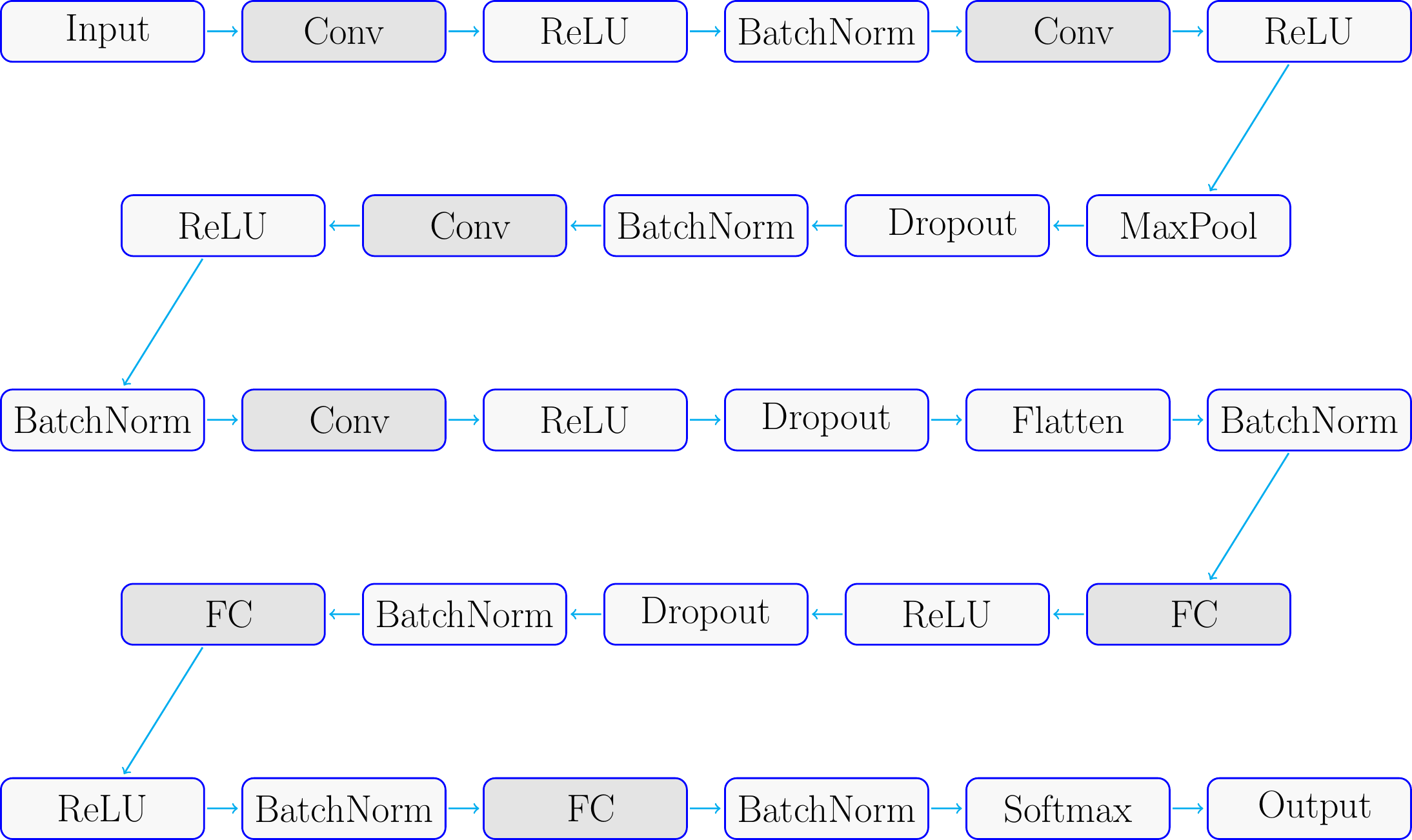}
	\caption{
	An illustration of the CNN architecture. Conv and FC are short of convolutional  and fully connected layers, respectively.  }
	\label{fig:CNNarc}
\end{figure}

To illustrate the connection between the proposed CNN architecture and the architecture in \eqref{eq:basic:arc},
we divide the proposed CNN architecture into four main blocks. Block $1$ has two convolutional layers, extracting the low-level features; Block $2$ has two convolutional layers, extracting the high-level features; 
Block $3$ has one fully connected layer, regarded as the initial classifier; Block $4$ has two fully connected layers, regarded as the final classifier.
See a summary in the following equation.
\begin{equation*}
	\begin{split}
		\tn{input}&\to
		\underbrace{
			\tn{Conv-1} \to \tn{Conv-2} 
		}_{\tn{block 1 (low-level features)}}\to
		\underbrace{
			\tn{Conv-3} \to \tn{Conv-4} 
		}_{\tn{block 2 (high-level features)}}
		\\	
		&\to
		\underbrace{
			\tn{FC-1}  
		}_{\tn{block 3 (initial classifier})}\to
		\underbrace{
			\tn{FC-2} \to \tn{FC-3} 
		}_{\tn{block 4 (final classifier})}\to
		\tn{output}
	\end{split}
\end{equation*}

Remark that the above CNN architecture can be considered as a special case of the architecture in \eqref{eq:basic:arc}. 
Blocks $1$, $2$, $3$, and $4$ in the proposed CNN architecture
are approximately equivalent to 
$\bmphiin$, $\bmphi_f$, $\bmphiout$, and $\phi_{f,\calR}$ in the architecture in \eqref{eq:basic:arc}, respectively.

\begin{figure*}[tb]
 	\vskip 0.1in
	\centering
	\begin{subfigure}[b]{0.325\textwidth}
		\centering            \includegraphics[width=0.98\textwidth]{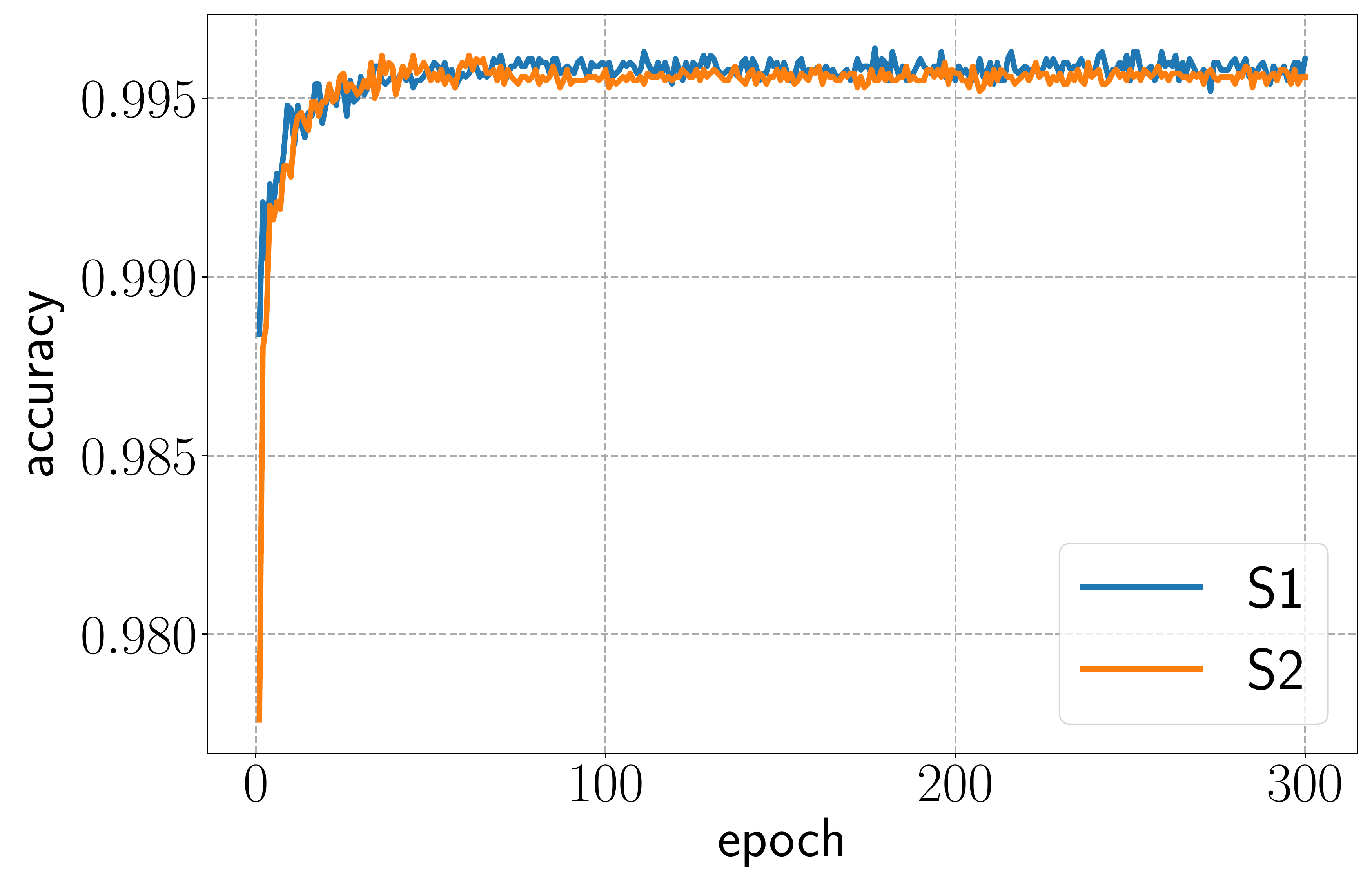}
		\subcaption{MNIST.}
	\end{subfigure}
	\begin{subfigure}[b]{0.325\textwidth}
		\centering           \includegraphics[width=0.98\textwidth]{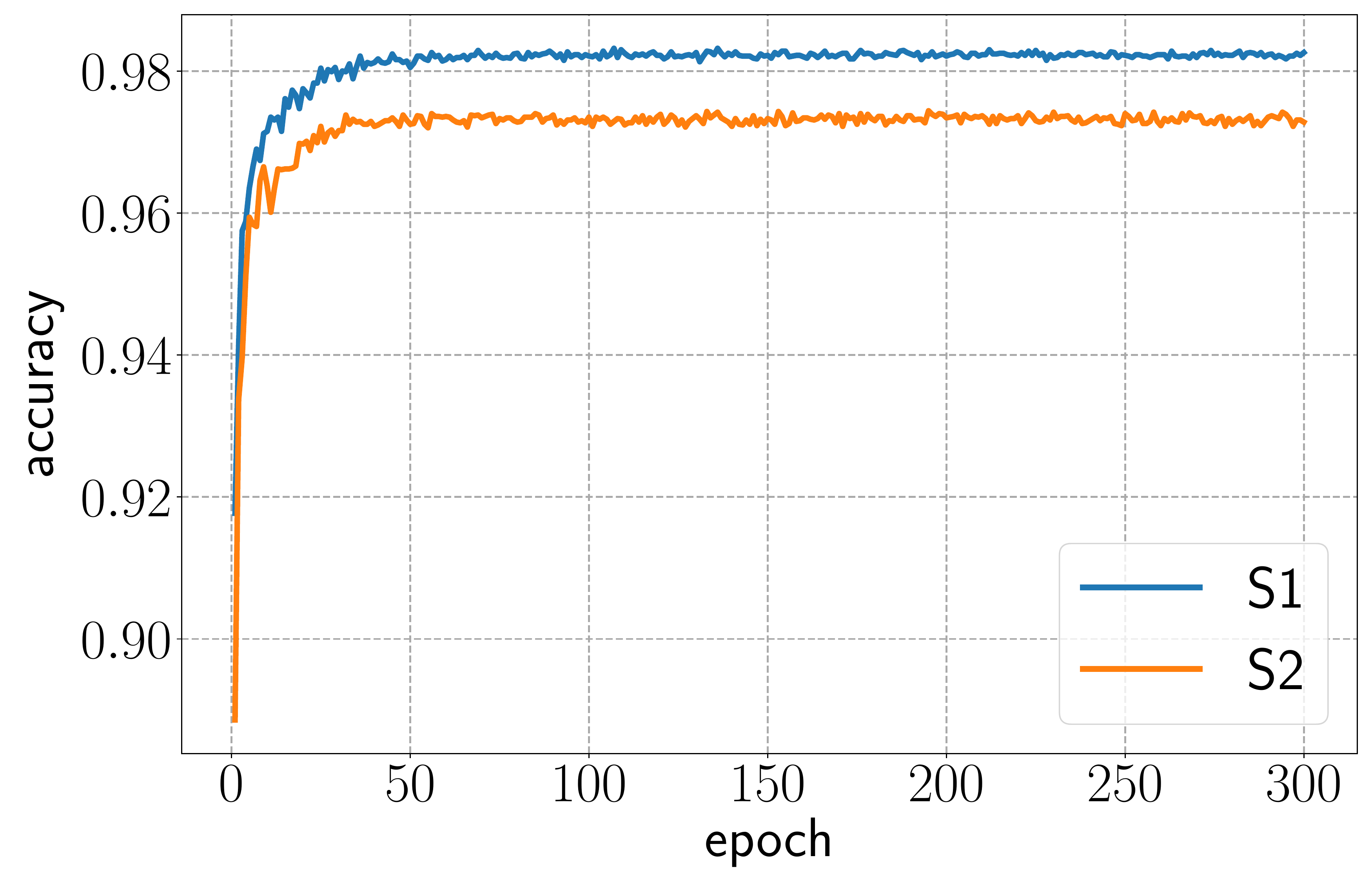}
		\subcaption{KMNIST.}
	\end{subfigure}
	\begin{subfigure}[b]{0.325\textwidth}
		\centering           \includegraphics[width=0.98\textwidth]{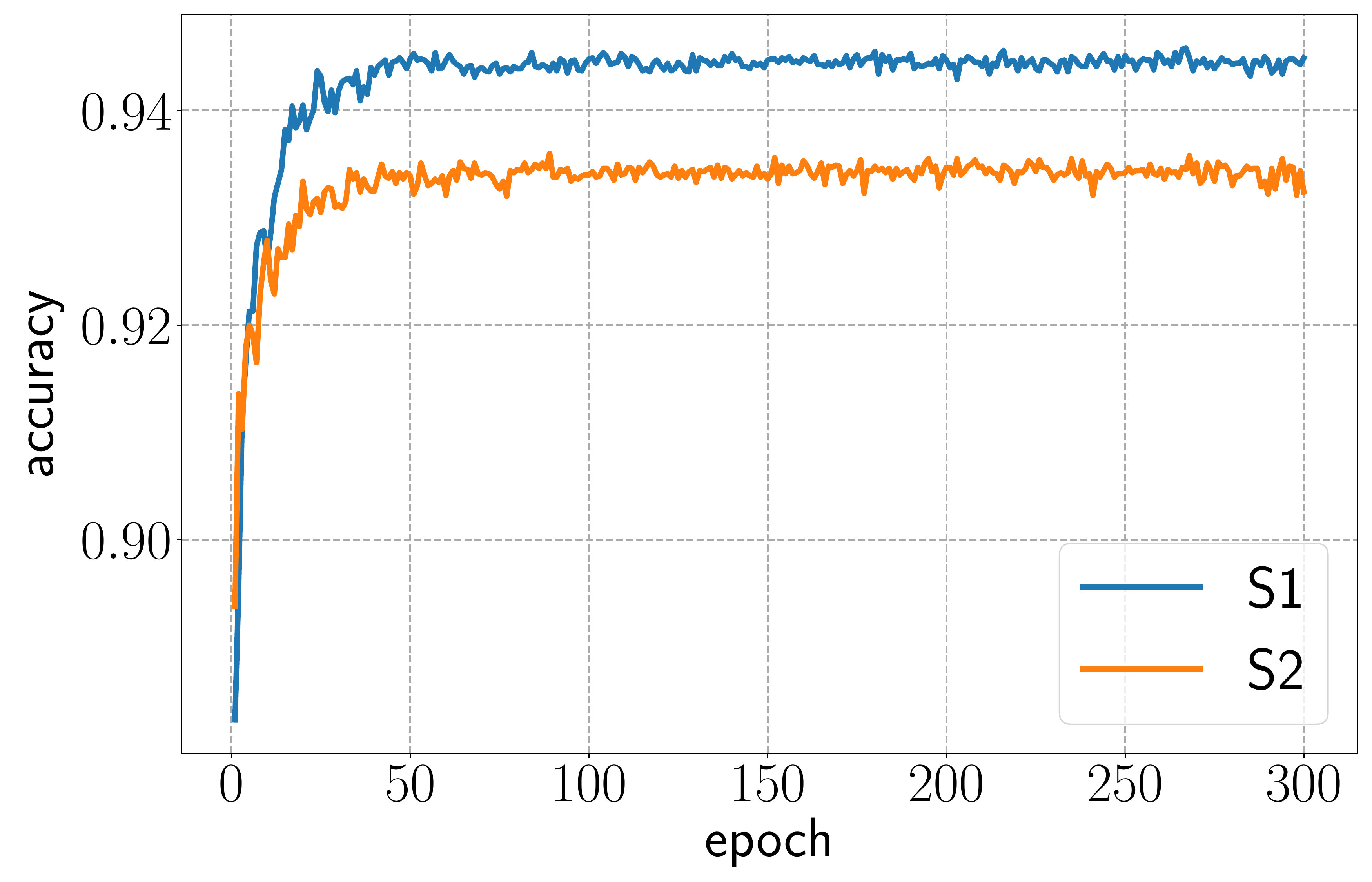}
		\subcaption{FMNIST.}
	\end{subfigure}
	\caption{Test accuracy over epochs.}
	\label{fig:accuracy:mnist}
\end{figure*}
\subsection{Numerical results}
Our goal is to numerically verify our theoretical result that adjusting a small number of parameters is enough to achieve a good approximation. 
It is natural to ask
how to pre-specify the values of the non-training parameters.  Since it is difficult to manually specify the values of the parameters in a CNN architecture, we use a part of training samples to pre-training all parameters.
Then, we propose two optimization strategies to train the proposed CNN as follows.
\begin{itemize}
	\item[(S1)] The normal strategy: We use all training samples to train all parameters.
	
	\item[(S2)] A strategy based on the architecture in \eqref{eq:basic:arc}: We first use the training samples with labels $0,1,\cdots,4$ to pre-training all parameters, and then use all training samples to continue training the parameters in Blocks $2$ and $4$.
\end{itemize}

Before presenting the numerical results, 
let us present the hyperparameters for training the proposed CNN architecture. 
To reduce overfitting and speed up optimization,
we take two common regularization methods:
dropout \cite{DBLP:journals/corr/abs-1207-0580,JMLR:v15:srivastava14a} and batch normalization \cite{10.5555/3045118.3045167}. 
We use the cross-entropy loss function to evaluate the loss. 
The number of epochs and the batch size are set to $300$ and $128$, respectively.
We adopt 
RAdam \cite{Liu2020On}
as the
optimization method. The weight decay of the optimizer is $0.0001$ and the learning rate is $0.002\times0.93^{i-1}$ in the $i$-th epoch. 
Remark that all training (test) samples are standardized before training, i.e., we rescale the samples to have a mean of $0$ and a standard deviation of $1$.

For the three mentioned datasets, we use two proposed optimization strategies to train the proposed CNN and use all test samples to obtain the test accuracy.
As we can see from Figure~\ref{fig:accuracy:mnist}, the test accuracy becomes steady after $50$ epochs. Thus, it is reasonable to take 
the largest test accuracy over epochs as the target test accuracy. The test accuracy comparison of two optimization strategies is summarized in Table~\ref{tab:accuracy}.

\begin{table}[htp!]    
	\caption{Test accuracy comparison.} 
	\label{tab:accuracy}
	\vskip 0.15in
	\centering  
	\resizebox{0.47\textwidth}{!}{ 
		\begin{tabular}{ccccccccc} 
			\toprule
			strategy &  MNIST   &     KMNIST & FMNIST  &  \#training-parameters  \\
			\midrule
		\rowcolor{mygray}	(S1) & 0.9964 &  0.9832 & 0.9458 &   $2.2\times 10^6$ \\			
			\midrule
			 (S2) & 0.9962  & 0.9744 & 0.9360 &  $8.9\times 10^4$   \\
			\bottomrule
		\end{tabular} 
	}
\end{table} 

As we can see from Table~\ref{tab:accuracy},
for all three datasets, the second optimization strategy (S2) trains much less parameters at the price of a slightly lower test accuracy, compared to the first optimization strategy (S1). 
As discussed in Section~\ref{sec:further:interpretation}, the test  accuracy (error) is controlled by three errors: the approximation, generalization, and optimization errors.
A good approximation error cannot guarantee a good test accuracy.
However, a good test accuracy numerically implies that the three errors are all well controlled. 
Our numerical results suggest that training a small number of parameters is enough to achieve a good test accuracy. 
Therefore, the proposed CNN can approximate the classification function well via only adjusting only a small number of parameters.

\section{Conclusion}\label{sec:conclusion}
This paper aims to achieve a good approximation  via adjusting only a small number of parameters based on the target function $f$ while using a ReLU network to approximate $f$.
We first propose a composition architecture in \eqref{eq:basic:arc}, and then use such an architecture to construct ReLU networks to approximate the target function $f$.
 In Theorem~\ref{thm:main}, we prove that a ReLU network with $n+2$ intrinsic parameters can approximate a continuous function $f$ on $[0,1]^d$ with an error $\omega_f(\sqrt{d}\, 2^{-n})+2^{-n+2}\omega_f(\sqrt{d})$ measured in the $L^p$-norm for $p\in [1,\infty)$.
Moreover, such a result can be generalized from the $L^p$-norm to the $L^\infty$-norm at a price of adding $\calO(n)$ intrinsic parameters, as shown in Theorem~\ref{thm:mainInfty}. Next, we show in Theorem~\ref{thm:main:three:parameters} that three intrinsic parameters are enough to achieve an arbitrarily small error
in the case of H\"older continuous functions, though this result requires high precision to encode these three parameters on computers.
Finally, we conduct several experiments to provide numerical examples of
the architecture in \eqref{eq:basic:arc}.
Remark that this paper only focuses on the approximation error characterized by the number of intrinsic parameters, the study of the optimization error and generalization error will be left as future work. 

\section*{Acknowledgments}
Z.~Shen is supported by 
Distinguished Professorship of National University of Singapore.
H.~Yang was partially supported by the US National Science Foundation under award DMS-1945029. 


\bibliography{references}
\bibliographystyle{icml2022}

\newpage
\appendix
\onecolumn

\section{Proofs of main theorems}
\label{sec:proofMain}
In this section, we first list all notations used throughout this paper. Then, we prove Theorems~\ref{thm:main}, \ref{thm:mainInfty}, and \ref{thm:main:three:parameters} based on an auxiliary theorem, Theorem~\ref{thm:mainOld}, which will be proved in Section~\ref{sec:proofMainOld}.

\subsection{Notations}
Let us summarize the main notations as follows.
\begin{itemize}	  
	\item Let $\R$, $\Q$, and $\Z$ denote the set of real numbers, rational numbers, and integers, respectively.
	
	\item Let $\N$ and $\N^+$ denote the set of natural numbers and positive natural numbers, respectively.  That is,
	$\N^+=\{1,2,3,\cdots\}$ and $\N=\N^+\bigcup\{0\}$.
	
	\item Vectors and matrices are denoted in a bold font. Standard vectorization is adopted in the matrix and vector computation. For example, adding a scalar and a vector means adding the scalar to each entry of the vector.
	
	\item For $\theta\in[0,1)$, suppose its binary representation is $\theta=\sum_{\ell=1}^{\infty}\theta_\ell2^{-\ell}$ with $\theta_\ell\in \{0,1\}$, we introduce a special notation $\bin 0.\theta_1\theta_2\cdots \theta_L$ to denote the $L$-term binary representation of $\theta$, i.e., ${\bin 0.\theta_1\theta_2\cdots \theta_L\coloneqq}\sum_{\ell=1}^{L}\theta_\ell2^{-\ell}$. 
	
	\item For any $p\in [1,\infty)$, the $p$-norm of a vector $\bmx=(x_1,x_2,\cdots,x_d)\in\R^d$ is defined by 
	\begin{equation*}
		\|\bmx\|_p\coloneqq \big(|x_1|^p+|x_2|^p+\cdots+|x_d|^p\big)^{1/p}.
	\end{equation*}
	
	\item The expression ``a network with (of) width $N$ and depth $L$'' means
	\begin{itemize}
		\item The maximum width of this network for all \textbf{hidden} layers  is no more than $N$.
		\item The number of \textbf{hidden} layers of this network is  no more than $L$.
	\end{itemize}
	
	
	\item  Similar to ``$\min$'' and ``$\max$'', let $\middleValue(x_1,x_2,x_3)$ be the middle value (median) of three inputs $x_1$, $x_2$, and $x_3$. For example, $\middleValue(2,1,3)=2$ and $\middleValue(3,2,3)=3$.

	\item 
	Given any $K\in \N^+$ and $\delta\in(0,\tfrac{1}{K}]$, define a trifling region  $\Omega([0,1]^d,K,\delta)$ of $[0,1]^d$ as 
	\begin{equation}
		\label{eq:badRegionDef}
		\Omega([0,1]^d,K,\delta)\coloneqq\bigcup_{i=1}^{d} \Big\{\bmx=(x_1,x_2,\cdots,x_d): x_i\in \cup_{k=1}^{K-1}(\tfrac{k}{K}-\delta,\tfrac{k}{K})\Big\}.
	\end{equation}
	In particular, $\Omega([0,1]^d,K,\delta)=\emptyset$ if $K=1$. See Figure~\ref{fig:region} for two examples of trifling regions.
	\begin{figure}[ht!]        
		\centering
		\begin{subfigure}[b]{0.33\textwidth}
			\centering            \includegraphics[width=0.75\textwidth]{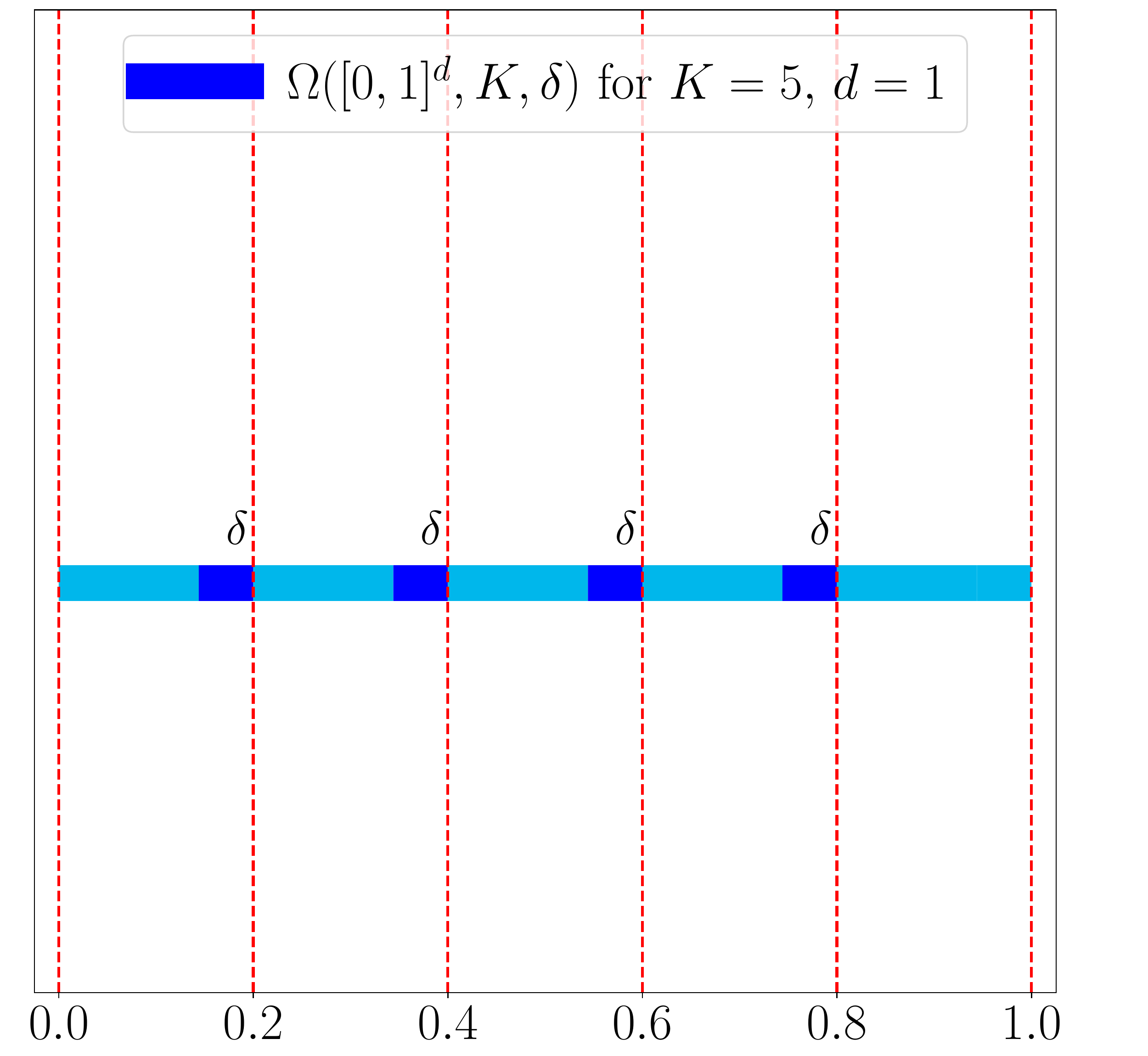}
			\subcaption{}
		\end{subfigure}
		\begin{subfigure}[b]{0.33\textwidth}
			\centering            \includegraphics[width=0.75\textwidth]{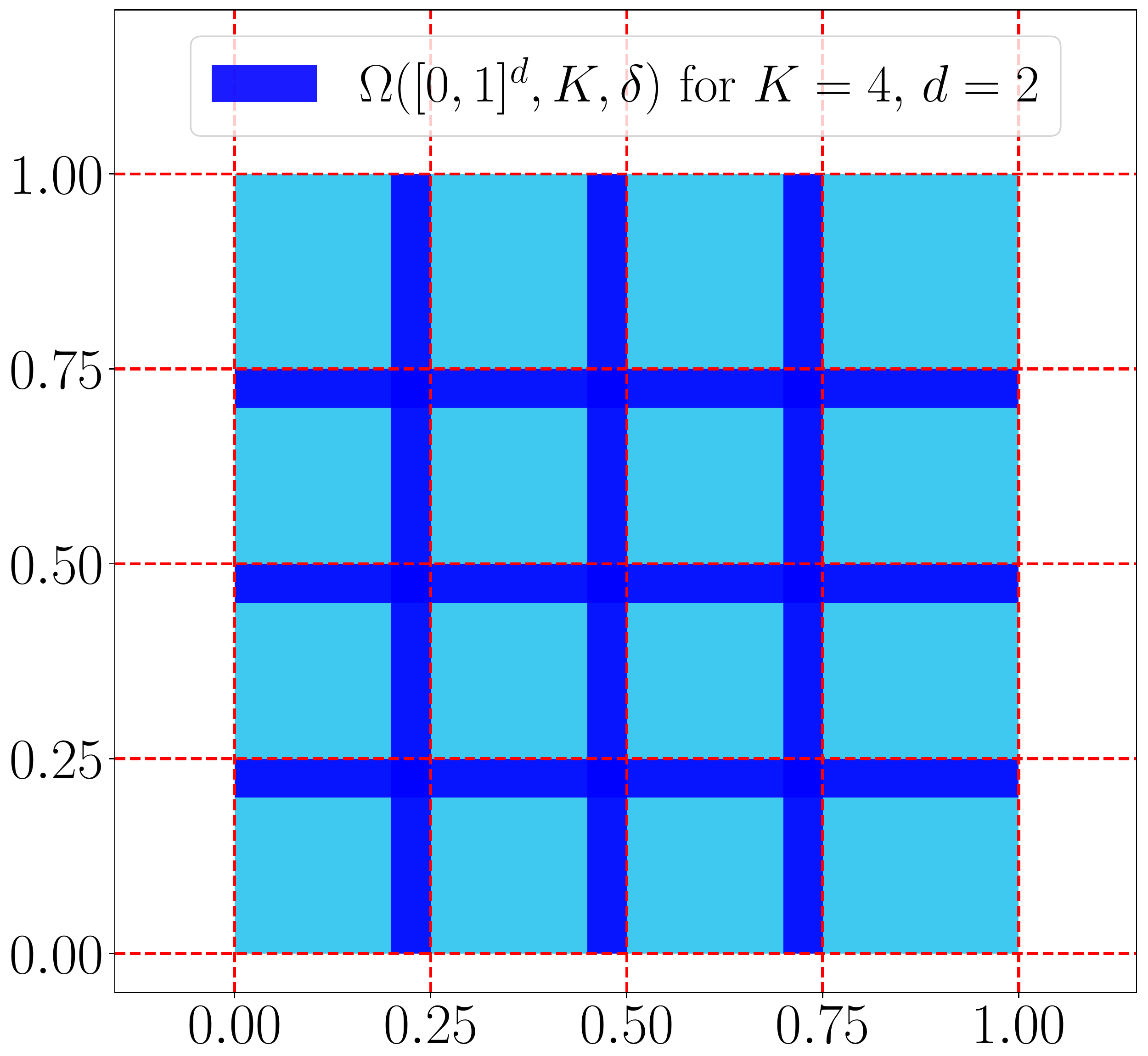}
			\subcaption{}
		\end{subfigure}
		\caption{Two examples of trifling regions. (a)  $K=5,d=1$. (b) $K=4,d=2$.}
		\label{fig:region}
	\end{figure}
	
	\item 
	Given a univariate  activation function $\sigma$, let us introduce the architecture of a $\sigma$-activated network, i.e.,  a network with each hidden neuron activated by $\sigma$.
	To be precise, a $\sigma$-activated network with a vector input $\bmx\in\R^d$, an output $\phi(\bmx)\in\R$, and $L\in\N^+$ hidden layers can be briefly described as follows:
	\begin{equation}\label{eq:Phi:x:theta}
		\begin{aligned}
			\bm{x}=\widetilde{\bm{h}}_0 
			\  \myto{2.0}^{\bmA_0,\ \bm{b}_0}_{\calL_0} 
			\  \bm{h}_1
			\  \myto{1.1}^{\sigma} 
			\  \widetilde{\bm{h}}_1 
			\quad  \cdots\quad 
			\  \myto{2.7}^{\bmA_{L-1},\ 
			  \bm{b}_{L-1}}_{\calL_{L-1}}
			  \   \bm{h}_L
			  \   \myto{1.1}^{\sigma}
			  \  \widetilde{\bm{h}}_L
			\   \myto{2.0}^{\bmA_{L},\ \bm{b}_{L}}_{\calL_L}
			\   \bm{h}_{L+1}=\phi(\bm{x}),
		\end{aligned}
	\end{equation}
	where $N_0=d\in\N^+$, $N_1,N_2,\cdots,N_L\in\N^+$,  $N_{L+1}=1$, $\bmA_i\in \R^{N_{i+1}\times N_{i}}$ and $\bm{b}_i\in \R^{N_{i+1}}$ are the weight matrix and the bias vector in the $i$-th affine linear transform $\calL_i$, respectively, i.e., 
	\[\bm{h}_{i+1} =\bmA_i\cdot \widetilde{\bm{h}}_{i} + \bm{b}_i\eqqcolon \calL_i(\widetilde{\bm{h}}_{i})\quad \tn{for $i=0,1,\cdots,L$}\]  
	and
	\[
	\widetilde{{h}}_{i,j}= \sigma({h}_{i,j})\quad \tn{for $j=1,2,\cdots,N_i$ and $i=1,2,\cdots,L$.}
	\]
	Here, $\widetilde{{h}}_{i,j}$ and ${h}_{i,j}$ are the $j$-th entry of $\widetilde{\bm{h}}_i$ and $\bm{h}_i$, respectively, for
	$j=1,2,\cdots,N_i$ and $i=1,2,\cdots,L$.
	If $\sigma$ is applied to a vector entrywisely, i.e., 
	\begin{equation*}
		\sigma(\bmy)=\big(\sigma(y_1),\cdots,\sigma(y_d)\big)\quad \tn{for any $\bmy=(y_1,\cdots,y_d)\in\R^d$,}
	\end{equation*}
	then $\phi$ can be represented in a form of function compositions as follows:
	\begin{equation*}
		\phi(\bmx) =\calL_L\circ\sigma\circ 
		\quad  \cdots \quad  
		\circ \sigma\circ\calL_1\circ\sigma\circ\calL_0(\bmx)\quad \tn{for any $\bmx\in\R^d$}.
	\end{equation*}
	See Figure~\ref{fig:sigma:net:eg} for an example.
	
	\begin{figure}[ht!]
		\centering
		\includegraphics[width=0.68\textwidth]{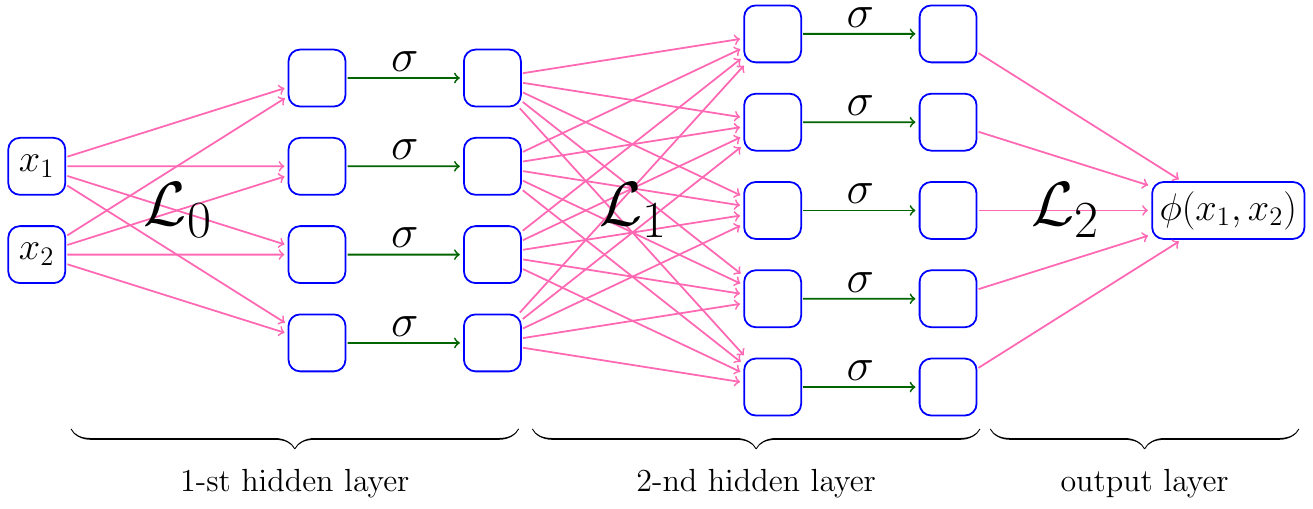}
		\caption{An example of a $\sigma$-activated network with width $5$ and depth $2$.}
		\label{fig:sigma:net:eg}
	\end{figure}
	
\end{itemize}

\subsection{Proof of Theorem~\ref{thm:main}}
To prove  Theorems~\ref{thm:main} and \ref{thm:mainInfty}, we introduce an auxiliary theorem below with a similar result  ignoring the approximation inside the trifling region.
\begin{theorem}
	\label{thm:mainOld}
	Given any $n\in\N^+$, there exist  $\phi_1\in \calH_{ 2^{dn+4} }(d,1)$ and $\phi_2\in \calH_{ 2^{dn+5}n }(n,1)$ such that: For any continuous function $f:[0,1]^d\to [0,1]$, there exists a linear map $\calL:\R\to \R^n$ satisfying $\|\phi_2\circ \calL \circ\phi_1\|_{L^\infty(\R^d)}\le 1$  and
	\begin{equation*}
		\big|\phi_2\circ \calL \circ\phi_1(\bmx)-f(\bmx)\big|\le \omega_f(\sqrt{d}\, 2^{-n})+2^{-n}\quad \tn{for any $\bmx\in [0,1]^d\backslash \Omega([0,1]^d,K,\delta)$,}
	\end{equation*}
	where $K=2^n$, $\delta$ is an arbitrary number in $(0,\tfrac{1}{3K}]$, and $\calL$ is given by $\calL(t)=(a_1t, a_2 t,\cdots,a_n t)$. Here, $a_i\in [0,\tfrac{1}{3})$ is determined by $f$ and $n$, and $2^m a_i\in\N$ for $i=1,2,\cdots,n$, where $m=2^{dn+1}$.
\end{theorem}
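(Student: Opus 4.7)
\textbf{Proof proposal for Theorem~\ref{thm:mainOld}.} The plan is a three-stage partition--encode--decode scheme. First, partition $[0,1]^d$ into $K^d=2^{nd}$ closed sub-cubes $Q_{\bm{k}}=\prod_{j=1}^{d}[k_j/K,(k_j+1)/K]$ indexed by $\bm{k}\in\{0,\ldots,K-1\}^d$, pick a representative $\bm{x}_{\bm{k}}$ in each, and quantise $f(\bm{x}_{\bm{k}})$ to an $n$-bit dyadic value $y_{\bm{k}}=\sum_{i=1}^{n}b_{\bm{k},i}2^{-i}\in\{0,2^{-n},\ldots,1-2^{-n}\}$. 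Let $M(\bm{k})\coloneqq\sum_{j=1}^{d}k_{j}K^{d-j}\in\{0,\ldots,K^d-1\}$ be the canonical enumeration. I will arrange that $\phi_2\circ\calL\circ\phi_1(\bmx)=y_{\bm{k}}$ for every $\bmx\in Q_{\bm{k}}\setminus\Omega$; the triangle inequality $|f(\bmx)-y_{\bm{k}}|\le|f(\bmx)-f(\bm{x}_{\bm{k}})|+|f(\bm{x}_{\bm{k}})-y_{\bm{k}}|\le\omega_f(\sqrt{d}\,2^{-n})+2^{-n}$ then closes the estimate.

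For $\phi_1$, I would use the standard coordinate-wise ReLU step-function construction: in each coordinate, a piecewise-linear approximation of $t\mapsto k$ on $[k/K,(k+1)/K)$ with the transition regions confined to the strips $(k/K-\delta,k/K)$ that make up $\Omega$ produces the exact tuple $(k_1,\ldots,k_d)$ outside $\Omega$. A downstream ReLU lookup sub-network then maps this tuple to the $f$-independent label $L_{\bm{k}}\coloneqq 4^{M(\bm{k})}$ (the exponential label is chosen to make the decoder clean). A careful count places $\phi_1$ inside $2^{dn+4}$ parameters.

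For $\calL$, I pack the bits into $n$ real numbers via base-$4$ spacing,
\[ a_i\;\coloneqq\;\sum_{\bm{k}}b_{\bm{k},i}\,4^{-(M(\bm{k})+1)},\qquad i=1,\ldots,n, \]
so that $a_i\le\sum_{j\ge 1}4^{-j}=1/3$ (strictly, since there are only finitely many $\bm{k}$), and since the smallest digit position used is $4^{-K^d}=2^{-2K^d}$, we obtain $2^m a_i\in\N$ with $m=2^{dn+1}$, matching the stated conditions on $\calL$.

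The decoder $\phi_2$ is the main obstacle and the place where the parameter budget is tight. Substituting yields, on $Q_{\bm{k}}\setminus\Omega$,
\[ a_i\,\phi_1(\bmx) \;=\; \sum_j b_{j,i}\,4^{M(\bm{k})-j-1} \;=\; I_{\bm{k},i}\;+\;\tfrac14 b_{M(\bm{k}),i}\;+\;r_{\bm{k},i}, \]
with $I_{\bm{k},i}\in\N$ and $0\le r_{\bm{k},i}\le 1/12$. The fractional part of $a_i\phi_1(\bmx)$ therefore lies in $[0,1/12]$ when $b_{M(\bm{k}),i}=0$ and in $[1/4,1/3]$ when $b_{M(\bm{k}),i}=1$, so a constant-size ReLU comparator would read off $b_{M(\bm{k}),i}$ if only the fractional part were available. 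The obstruction is the huge range $a_i\phi_1(\bmx)\in[0,4^{K^d-1}/3]$: a naive mod-$1$ reduction would cost $\Theta(4^{K^d})$ pieces. I would bypass this with a \emph{cascade subtractor}: compose the $2K^d$ blocks $x\mapsto x-2^s\,\one_{\{x\ge 2^s\}}$ for $s=2K^d-1,2K^d-2,\ldots,0$, each an $\calO(1)$-parameter continuous ReLU gadget that smooths the jump at $x=2^s$ over the interval $[2^s-\tfrac12,2^s]$. Because every realised input and every intermediate value stays within $1/3$ of an integer, the gap $\ge 2/3$ between the trajectory and the integer thresholds $2^s$ makes each continuous relaxation exact on the trajectory; after the cascade, the fractional part is exposed intact and a thresholding comparator returns $b_{M(\bm{k}),i}$. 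Running $n$ such channels in parallel, summing $\sum_{i=1}^{n}2^{-i}b_{M(\bm{k}),i}$, and finally clipping by $\min(1,\max(0,\cdot))$ at the output, produces $\phi_2$ with $\calO(n K^d)\le 2^{dn+5}n$ parameters and $\|\phi_2\circ\calL\circ\phi_1\|_{L^\infty(\R^d)}\le 1$, completing the construction.
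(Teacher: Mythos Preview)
Your high–level strategy is exactly the paper's: partition into $K^d$ cubes, have $\phi_1$ output the $f$-independent labels $4^{M(\bm k)}$, pack the $i$-th bit of all quantised values into $a_i$ as a base-$4$ string, and let $\phi_2$ read off the leading digit of each $a_i\phi_1(\bmx)$ and recombine. The arithmetic (showing $a_i<1/3$, $2^{m}a_i\in\N$ with $m=2^{dn+1}$, and that the fractional part of $a_i\phi_1(\bmx)$ lands in $[0,1/12]\cup[1/4,1/3]$) matches the paper's.

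The only substantive difference is the mechanism inside $\phi_2$. The paper does \emph{not} build a mod-$1$ reducer. Instead it observes (its Proposition~\ref{prop:phi4j} and Lemma~\ref{lem:T4J}) that the sawtooth $\calT_{4^J}$ with $4^J$ teeth already separates the two clusters, sending $4^{j}a$ into $[0,1/3]$ when $\theta_j=0$ and into $[2/3,1]$ when $\theta_j=1$, and that $\calT_{4^J}$ on $[0,2\cdot 4^J]$ equals the $(2J{+}1)$-fold self-composition of a single tent map $h$, hence is realized by a width-$2$, depth-$(2J{+}1)$ ReLU network. A two-neuron threshold $g$ then rounds to $\{0,1\}$. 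Your cascade of $2K^d$ binary subtraction gadgets $x\mapsto x-2^{s}\one_{\{x\ge 2^{s}\}}$, smoothed on $[2^{s}-\tfrac12,2^{s}]$, is a correct alternative: since every intermediate value has fractional part in $[0,1/3]$, it never enters a smoothing window, so the relaxed cascade returns the exact fractional part and a threshold reads the bit. Both constructions are $\calO(K^d)$ in depth with constant width per channel, so both fit the budget $2^{dn+5}n$. The paper's route is a bit leaner (genuine width $2$ per channel via the tent-doubling trick), while yours needs width $3$ per channel to carry the identity through each subtractor; either way the constant is below $32$. In short, your proposal is correct and essentially the paper's proof with a different, equally valid, bit-extraction primitive.
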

The proof of Theorem~\ref{thm:mainOld} can be found later in Section~\ref{sec:proofMainOld}.
Let us first prove Theorem~\ref{thm:main} based on Theorem~\ref{thm:mainOld}.
\begin{proof}[Proof of Theorem~\ref{thm:main}]
	We may assume $f$ is not a constant function  since it is a trivial case. 
	Then $\omega_f(r)>0$ for any $r>0$. Set $s=2\omega_f(\sqrt{d})>0$ and $b=f(\bmzero)-\omega_f(\sqrt{d})$. Then, by defining 
	\begin{equation*}
		\tildef\coloneqq \frac{f-b}{s}=\frac{f-f(\bmzero)+\omega_f(\sqrt{d})}{2\omega_f(\sqrt{d})},
	\end{equation*}
	we have $\tildef(\bmx)\in [0,1]$ for any $\bmx\in [0,1]^d$. 
	By applying Theorem~\ref{thm:mainOld} to $\tildef$, there exist two  functions, $\phi_1:\R^d\to \R$ and $\phi_2:\R^n\to \R$, both of which are independent of $f$ and can be implemented by ReLU networks with $\le 2^{dn+4}$ and $\le 2^{dn+5}n$ parameters, respectively, and a linear map $\calL:\R\to \R^n$ satisfying $\|\phi_2\circ \calL \circ\phi_1\|_{L^\infty(\R^d)}\le 1$  and
	\begin{equation*}
		|\phi_2\circ \calL \circ\phi_1(\bmx)-\tildef(\bmx)|\le \omega_\tildef(\sqrt{d}\, 2^{-n})+2^{-n}\quad \tn{for any $\bmx\in [0,1]^d\backslash \Omega([0,1]^d,K,\delta)$,}
	\end{equation*}
	where $K=2^n$, $\delta$ is an arbitrary number in $(0,\tfrac{1}{3K}]$, and $\calL$ is given by $\calL(t)=(a_1t, a_2 t,\cdots,a_n t)$ with $a_1,a_2,\cdots,a_n\in [0,\tfrac{1}{3})$ determined by $\tildef$ and $n$. Since $\tildef$ is derived from $f$, $a_1,a_2,\cdots,a_n$ are essentially determined by $f$ and $n$.
	
	Then choose a small $\delta$ satisfying 
	\begin{equation*}
		dK\delta 2^p \le 2^{-pn}= (2^{-n})^p.
	\end{equation*}
	
	Note that the Lebesgue measure of $\Omega([0,1]^d,K,\delta)$ is bounded by $dK\delta$ and
	\begin{equation*}
		|\phi_2\circ\calL\circ\phi_1(\bmx)-\tildef(\bmx)|\le |\phi_2\circ\calL\circ\phi_1(\bmx)|+|\tildef(\bmx)|\le 1+1=2\quad \tn{for any $\bmx\in [0,1]^d$.}
	\end{equation*}
	Then, $\|\phi_2\circ\calL\circ\phi_1-\tildef\|_{L^p([0,1]^d)}^p$ is bounded by
	\begin{equation*}
		\begin{split}
			&\quad \int_{[0,1]^d\backslash \Omega([0,1]^d,K,\delta)}|\phi_2\circ\calL\circ\phi_1(\bmx)-\tildef(\bmx)|^p d\bmx
			+\int_{ \Omega([0,1]^d,K,\delta)}|\phi_2\circ\calL\circ\phi_1(\bmx)-\tildef(\bmx)|^p d\bmx\\
			&\le \big(\omega_\tildef(\sqrt{d}\, 2^{-n})+2^{-n}\big)^p+ dK\delta 2^p
			\le \big(\omega_\tildef(\sqrt{d}\, 2^{-n})+2^{-n}\big)^p + (2^{-n})^p\le \big(\omega_\tildef(\sqrt{d}\, 2^{-n})+2^{-n+1}\big)^p,
		\end{split}
	\end{equation*}
	implying $\|\phi_2\circ\calL\circ\phi_1-\tildef\|_{L^p([0,1]^d)}\le \omega_\tildef(\sqrt{d}\, 2^{-n})+2^{-n+1}$. Note that $\omega_f(r)=s \cdot \omega_\tildef(r)$ for any $r\ge 0$. Therefore, we have 
	\begin{equation*}
		\begin{split}
			&\quad \|s\cdot(\phi_2\circ \calL \circ \phi_1)+b -f\|_{L^p([0,1]^d)}
			=\Big\|s\cdot (\phi_2\circ \calL \circ \phi_1)+ b-\big(s\cdot \tildef+b\big)\Big\|_{L^p([0,1]^d)}\\
			&=s\|\phi_2\circ\calL\circ\phi_1-\tildef\|_{L^p([0,1]^d)}\le s\cdot \omega_\tildef(\sqrt{d}\, 2^{-n})+ 2^{-n+1}  s
			= \omega_f(\sqrt{d}\,2^{-n})+2^{-n+2}\omega_f(\sqrt{d}).
		\end{split}
	\end{equation*}
	So we finish the proof.
\end{proof}

\subsection{Proof of Theorem~\ref{thm:mainInfty}}
Next, let us prove Theorem~\ref{thm:mainInfty}. To this end, we need to introduce the following lemma, which is actually Lemma $3.4$ of \cite{shijun3} (or Lemma~$3.11$ of \cite{shijun:thesis}).
\begin{lemma}[Lemma $3.4$ of \cite{shijun3}]
	\label{lem:approx:trifling}
	Given any $\varepsilon>0$, $K\in \N^+$, and $\delta\in (0,\tfrac{1}{3K}]$,
	assume $f\in C([0,1]^d)$ and $g:\R^d\to\R$ is a general function with 
	\begin{equation*}
		|g(\bmx)-f(\bmx)|\le \varepsilon\quad \tn{for any $\bmx\in [0,1]^d\backslash \Omega([0,1]^d,K,\delta)$.}
	\end{equation*} 
	Then 
	\begin{equation*}
		|\phi(\bmx)-f(\bmx)|\le \varepsilon+d\cdot\omega_f(\delta)\quad \tn{for any $\bmx\in [0,1]^d$,}
	\end{equation*}
	where $\phi\coloneqq  \phi_d$ is defined by induction through
	\begin{equation*}
		\label{eq:phiInduction}
		\phi_{i+1}(\bmx)\coloneqq  \middleValue\big(\phi_{i}(\bmx-\delta\bme_{i+1}),\phi_{i}(\bmx),\phi_{i}(\bmx+\delta\bme_{i+1})\big)\quad \tn{for $i=0,1,\cdots,d-1$,}
	\end{equation*}
	where $\phi_0=g$ and $\{\bme_i\}_{i=1}^d$ is the standard basis in $\mathbb{R}^d$. 	
\end{lemma}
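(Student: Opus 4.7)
The plan is to prove the bound by induction on $i \in \{0, 1, \ldots, d\}$ with the strengthened hypothesis: \emph{for every $\bmx \in [0,1]^d$ whose coordinates $x_{i+1}, x_{i+2}, \ldots, x_d$ all lie outside the univariate bad set $U \coloneqq \bigcup_{k=1}^{K-1}(k/K-\delta,\, k/K)$,}
\[
|\phi_i(\bmx) - f(\bmx)| \le \varepsilon + i\cdot \omega_f(\delta).
\]
The base case $i=0$ is exactly the given hypothesis on $g=\phi_0$, since requiring every coordinate of $\bmx \in [0,1]^d$ to avoid $U$ is equivalent to $\bmx \in [0,1]^d \setminus \Omega([0,1]^d,K,\delta)$. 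The desired conclusion $|\phi_d(\bmx)-f(\bmx)|\le\varepsilon+d\,\omega_f(\delta)$ is the case $i=d$, in which the constraint on coordinates is vacuous.

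For the inductive step, I establish the level-$(i+1)$ statement: fix $\bmx \in [0,1]^d$ whose coordinates $x_{i+2}, \ldots, x_d$ avoid $U$, and consider the three shifted points $\bmx^{(s)} \coloneqq \bmx + s\delta \bme_{i+1}$ for $s \in \{-1, 0, +1\}$, at all of which $\phi_i$ is defined since $g$ (hence every $\phi_j$) extends to $\R^d$. The crucial geometric claim, handled in the next paragraph, is that at least two of the three values $x_{i+1}+s\delta$ lie in $[0,1]\setminus U$. Granting this, for each such good $s$ the point $\bmx^{(s)}$ lies in $[0,1]^d$ and its coordinates $i+1$ through $d$ all avoid $U$, so the induction hypothesis at level $i$ gives $|\phi_i(\bmx^{(s)}) - f(\bmx^{(s)})| \le \varepsilon + i\,\omega_f(\delta)$. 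Combined with $|f(\bmx^{(s)}) - f(\bmx)| \le \omega_f(\delta)$ (since $\|\bmx^{(s)}-\bmx\|_2 \le \delta$), the triangle inequality yields $|\phi_i(\bmx^{(s)}) - f(\bmx)| \le \varepsilon + (i+1)\,\omega_f(\delta)$ for the two good shifts. The finish is the elementary median fact $\middleValue(a,b,c) \in [\min(a,c), \max(a,c)]$ for any reals $a,b,c$: taking $a,c$ to be the two good values, $\phi_{i+1}(\bmx)$ falls into the common interval around $f(\bmx)$ of radius $\varepsilon+(i+1)\omega_f(\delta)$, regardless of the third (possibly bad) value.

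The main obstacle is the geometric claim, and it is precisely where the hypothesis $\delta \le 1/(3K)$ is used. Three cases on $y \coloneqq x_{i+1}$ exhaust $[0,1]$. If $y \in [\delta, 1-\delta]$ all three shifts stay in $[0,1]$: two shifts cannot lie in the same interval $(k/K-\delta, k/K)$ of length $\delta$ because distinct shifts differ by at least $\delta$, and two shifts in two distinct intervals $(k_1/K-\delta, k_1/K)$ and $(k_2/K-\delta, k_2/K)$ with $k_1<k_2$ would force $3\delta > (k_2-k_1)/K \ge 1/K$, contradicting $\delta \le 1/(3K)$. If $y \in [0,\delta)$ then $y-\delta<0$ leaves $[0,1]$, but the other two values $y$ and $y+\delta$ both lie in $[0, 2\delta) \subseteq [0, 2/(3K))$, which is disjoint from every $(k/K - \delta, k/K)$ since $1/K - \delta \ge 2/(3K)$. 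The case $y \in (1-\delta, 1]$ is symmetric. In every case at least two of the three shifts are good, completing the geometric claim and the induction.
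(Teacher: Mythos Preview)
Your proof is correct. The paper does not supply its own proof of this lemma (it merely cites it from \cite{shijun3}), but your inductive argument is precisely the standard one: strengthen the hypothesis to control $\phi_i$ on points whose last $d-i$ coordinates avoid the bad set, use the pigeonhole-style interval argument (exploiting $\delta\le 1/(3K)$) to show at least two of the three shifted $(i{+}1)$-th coordinates are good, and then invoke the median sandwich fact. Each step checks out, including the boundary cases $y\in[0,\delta)$ and $y\in(1-\delta,1]$.
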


With Lemma~\ref{lem:approx:trifling} in hand, we are ready to present the proof of Theorem~\ref{thm:mainInfty}.
\begin{proof}[Proof of Theorem~\ref{thm:mainInfty}]
	We may assume $f$ is not a constant function  since it is a trivial case. 
	Then $\omega_f(r)>0$ for any $r>0$. Set $s=2\omega_f(\sqrt{d})>0$ and $b=f(\bmzero)-\omega_f(\sqrt{d})$. Then, by defining 
	\begin{equation*}
		\tildef\coloneqq \frac{f-b}{s}=\frac{f-f(\bmzero)+\omega_f(\sqrt{d})}{2\omega_f(\sqrt{d})},
	\end{equation*}
	we have $\tildef(\bmx)\in [0,1]$ for any $\bmx\in [0,1]^d$. 
	By applying Theorem~\ref{thm:mainOld} to $\tildef$, there exist two functions, $\psi_{1,0}:\R^d\to \R$ and $\psi_{2,0}:\R^n\to \R$, both of which are independent of $\tildef$ (or $f$) and can be implemented by ReLU networks with $\le 2^{dn+4}$ and $\le 2^{dn+5}n$ parameters, respectively, and a linear map $\calL_{\bma,0}:\R\to \R^n$ satisfying $\|\psi_0\|_{L^\infty(\R^d)}\le 1$  and
	\begin{equation*}
		|\psi_0(\bmx)-\tildef(\bmx)|\le \omega_\tildef(\sqrt{d}\, 2^{-n})+2^{-n}\eqqcolon \varepsilon\quad \tn{for any $\bmx\in [0,1]^d\backslash \Omega([0,1]^d,K,\delta)$,}
	\end{equation*}
	where $\psi_0\coloneqq \psi_{2,0}\circ \calL_{\bma,0} \circ\psi_{1,0}$,
	$K=2^n$, $\delta$ is an arbitrary number in $(0,\tfrac{1}{3K}]$, and $\calL_{\bma,0}$ is given by $\calL_{\bma,0}(t)=(a_1t, a_2 t,\cdots,a_n t)$ with $\bma=(a_1,a_2,\cdots,a_n)$ determined by $f$ and $n$. Moreover,
	$a_i\in[0,\tfrac{1}{3})$ and $2^m a_i\in\N$ for $i=1,2,\cdots,n$, where $m=2^{dn+1}$.\footnote{This property will be used in the proof of Theorem~\ref{thm:main:three:parameters}. }
	
	Choose a small $\delta$ satisfying $d\cdot \omega_\tildef(\delta)\le 2^{-n}$.
	With $\psi_0=\psi_{2,0}\circ \calL_{\bma,0} \circ\psi_{1,0}$ in hand, we can define $\psi_1,\cdots,\psi_d$ by induction via 
	\begin{equation*}
		\psi_{i+1}(\bmx)\coloneqq \middleValue\Big(\psi_{i}(\bmx-\delta\bme_{i+1}),\psi_{i}(\bmx),\psi_{i}(\bmx+\delta\bme_{i+1})\Big)\quad \tn{for $i=0,1,\cdots,d-1$.}
	\end{equation*}
	The detailed iterative equations for $\psi_{i}:\R^d\to \R$, $\bmpsi_{1,i}:\R^d\to \R^{3^i}$, $\calL_{\bma,i}:\R^{3^i}\to \R^{3^i d}$, and $\psi_{2,i}:\R^{3^i d}\to \R$, 	for $i=1,2,\cdots,d$, are listed as follows.
	\begin{itemize}
		\item $\psi_{i}=\psi_{2,i}\circ \calL_{\bma,i}\circ \bmpsi_{1,i}$.
		
		\item $\bmpsi_{1,i}(\bmy)=\Big(\bmpsi_{1,i-1}(\bmy-\delta\bme_i),\  \bmpsi_{1,i-1}(\bmy),\   \bmpsi_{1,i-1}(\bmy+\delta\bme_i)\Big)$ \quad   for any $\bmy\in \R^d$.
		
		\item $\calL_{\bma,i}(\bmy_1,\bmy_2,\bmy_3)=\Big(\calL_{\bma,i-1}(\bmy_1),\  \calL_{\bma,i-1}(\bmy_2),\   \calL_{\bma,i-1}(\bmy_3)\Big)$\quad   for any $\bmy_1,\bmy_2,\bmy_3\in \R^{3^{i-1}}$.
		
		\item $\psi_{2,i}(\bmy_1,\bmy_2,\bmy_3)=\middleValue\Big(\psi_{2,i-1}(\bmy_1),\  \psi_{2,i-1}(\bmy_2),\   \psi_{2,i-1}(\bmy_3)\Big)$\quad   for any $\bmy_1,\bmy_2,\bmy_3\in \R^{3^{i-1}d}$.
	\end{itemize}
	See the illustrations in Figure~\ref{fig:phi123}.
	
	\begin{figure}[!htp]
		\centering
		\begin{subfigure}[c]{0.94\textwidth}
			\centering
			\includegraphics[width=0.99\textwidth]{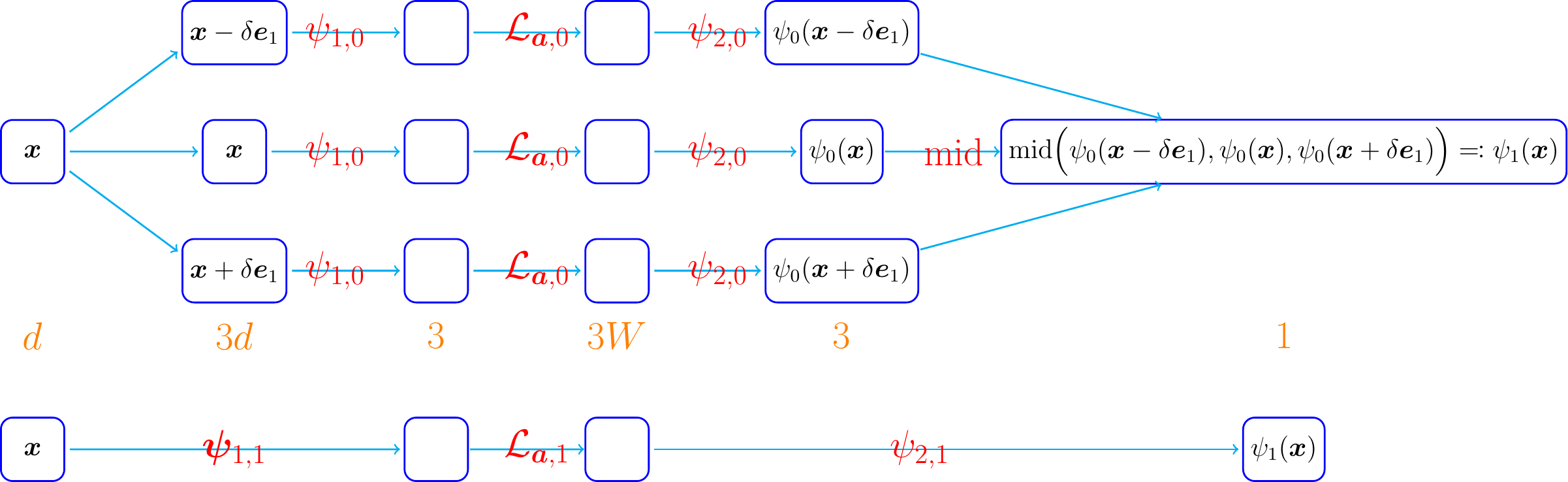}
			\subcaption{An illustration of the network architecture implementing $\psi_1=\psi_{2,1}\circ \calL_{\bma,1} \circ\bmpsi_{1,1}$ based on $\psi_0=\psi_{2,0}\circ \calL_{\bma,0} \circ\psi_{1,0}$. The top architecture is in detail, while the bottom one is just a sketch of the top one. The orange numbers indicate the number of neurons in each layer.}
		\end{subfigure}
		
		\vspace{25pt}
		\begin{subfigure}[c]{0.94\textwidth}
			\centering            \includegraphics[width=0.99\textwidth]{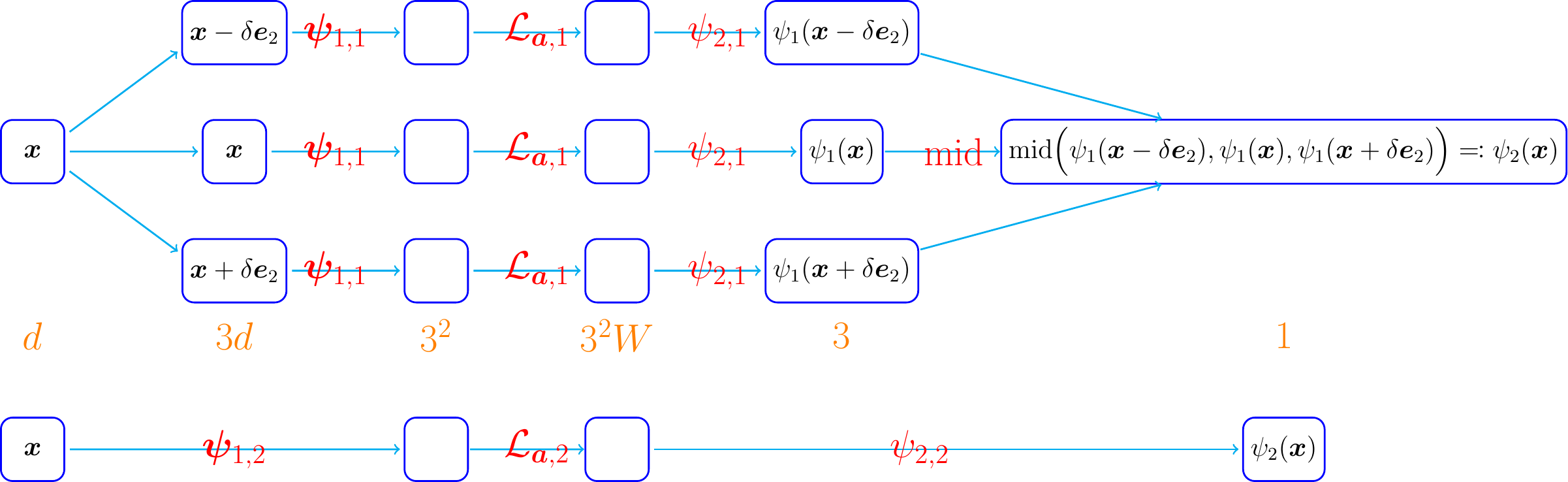}
			\subcaption{An illustration of the network architecture implementing $\psi_2=\psi_{2,2}\circ \calL_{\bma,2} \circ\bmpsi_{1,2}$ based on $\psi_1=\psi_{2,1}\circ \calL_{\bma,1} \circ\bmpsi_{1,1}$. The top architecture is in detail, while the bottom one is just a sketch of the top one. The orange numbers indicate the number of neurons in each layer.}
		\end{subfigure}
		
		\vspace{25pt}
		\begin{subfigure}[c]{0.94\textwidth}
			\centering           \includegraphics[width=0.99\textwidth]{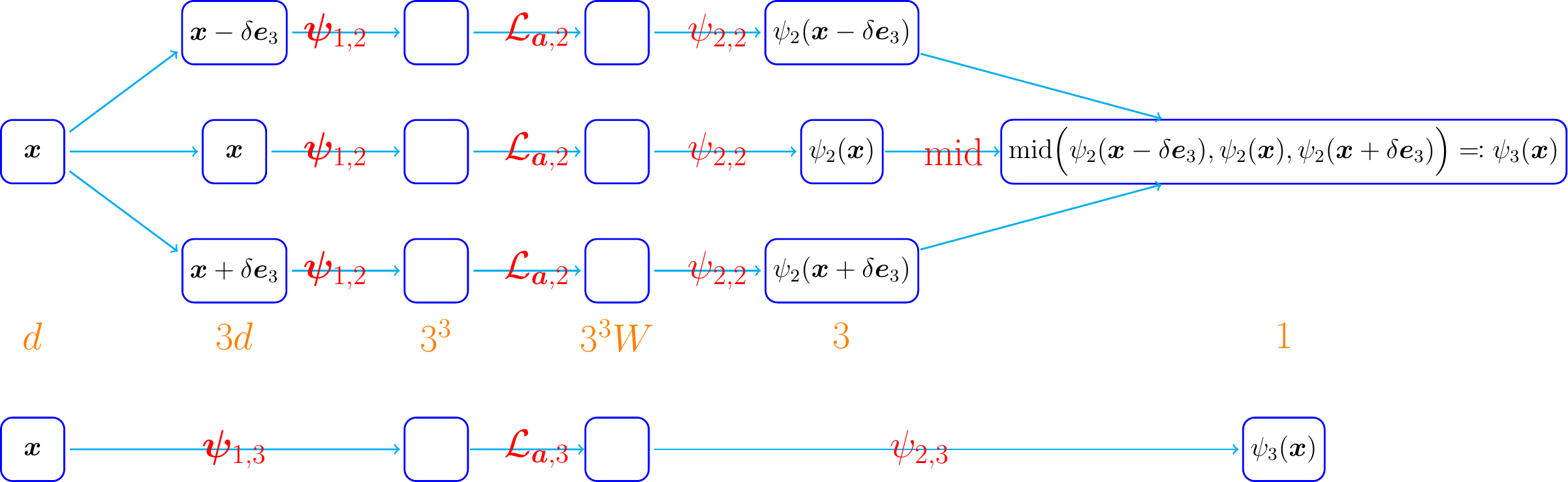}
			\subcaption{An illustration of the network architecture implementing $\psi_3=\psi_{2,3}\circ \calL_{\bma,3} \circ\bmpsi_{1,3}$ based on $\psi_2=\psi_{2,2}\circ \calL_{\bma,2} \circ\bmpsi_{1,2}$. The top architecture is in detail, while the bottom one is just a sketch of the top one. The orange numbers indicate the number of neurons in each layer.}
		\end{subfigure}
		\caption{Illustrations of the implementations of $\phi_1$, $\phi_2$, and $\phi_3$. The inductive implementations of $\phi_4,\cdots,\phi_d$ are similar.
		}
		\label{fig:phi123}
	\end{figure}

	By Lemma~\ref{lem:approx:trifling}, we have
	\begin{equation*}
		|\phi(\bmx)-\tildef(\bmx)|\le \varepsilon+d\cdot\omega_\tildef(\delta)\le \omega_\tildef(\sqrt{d}\, 2^{-n})+2^{-n+1}\quad \tn{for any $\bmx\in [0,1]^d$,}
	\end{equation*}
	where $\phi\coloneqq \psi_d =\psi_{2,d}\circ \calL_{\bma,d} \circ \bmpsi_{1,d}$. 
	By defining $\calL\coloneqq \calL_{\bma,d}$, $\bmphi_1 \coloneqq\bmpsi_{1,d}$, and $\phi_2 \coloneqq\psi_{2,d}$, we have 
	\begin{equation*}
		|\phi_2\circ\calL\circ\bmphi_1(\bmx)-\tildef(\bmx)|\le  \omega_\tildef(\sqrt{d}\, 2^{-n})+2^{-n+1}\quad \tn{for any $\bmx\in [0,1]^d$,}
	\end{equation*}	
	As shown in Figure~\ref{fig:phi123}, $\calL\coloneqq \calL_{\bma,d}$ is a linear map from $\R^{3^d}$ to $\R^{3^dn}$ determined by $\bma=(a_1,a_2,\cdots,a_n)\in [0,\tfrac{1}{3})^n$, which depends on $f$ and $n$. Moreover, as shown in Figure~\ref{fig:phi123}, $\bmphi_1 \coloneqq\bmpsi_{1,d}$ and $\phi_2 \coloneqq\psi_{2,d}$ are independent of $f$ and can be implemented by ReLU networks with 
	\[\le 3^d (2^{dn+4})+3d(d+1)(3^{d-1}+3^{d-2}+\cdots+3^0)\le 3^d 2^{dn+5}\] 
	and 
	\[\le 3^d(2^{dn+5}n)+280(3^{d-1}+3^{d-2}+\cdots+3^0)\le 3^d 2^{dn+8}n\] parameters, respectively.\footnote{As shown Lemma $3.1$ of \cite{shijun3}, ``$\middleValue(\cdot,\cdot,\cdot)$'' can be implemented by a ReLU network with width $14$ and depth $2$, which has $\le (3+1)\times 14+(14+1)\times 14+(14+1)= 280 $ parameters.}
	
	Note that $\omega_f(r)=s \cdot \omega_\tildef(r)$ for any $r\ge 0$. Therefore, we have
	\begin{equation*}
		\begin{split}
			&\quad \|s\cdot(\phi_2\circ \calL \circ \bmphi_1)+b -f\|_{L^\infty([0,1]^d)}
			=\Big\|s\cdot (\phi_2\circ \calL \circ \bmphi_1)+ b-\big(s\cdot \tildef+b\big)\Big\|_{L^\infty([0,1]^d)}\\
			&=s\|\phi_2\circ\calL\circ\bmphi_1-\tildef\|_{L^\infty([0,1]^d)}\le s\cdot \omega_\tildef(\sqrt{d}\, 2^{-n})+ 2^{-n+1}  s
			= \omega_f(\sqrt{d}\,2^{-n})+2^{-n+2}\omega_f(\sqrt{d}).
		\end{split}
	\end{equation*}
	So we finish the proof.
\end{proof}

\subsection{Proof of Theorem~\ref{thm:main:three:parameters}}

To simplify the proof of  Theorem~\ref{thm:main:three:parameters}, we introduce two lemmas below.
First, we need to establish a lemma showing how to store many parameters in one intrinsic parameter via a fixed network.
\begin{lemma}\label{lem:store:parameter}
	Given any $m,n\in\N$, there exists a vector-valued function $\bmphi:\R\to\R^n$ realized by a ReLU network  such that: For any $a_i\in [0,1)$ with $2^m a_i\in \N$ for $i=0,1,\cdots,n$, there exists a real number $v\in [0,1)$ such that
	\begin{equation*}
		\bmphi(v)=(a_1,a_2,\cdots,a_n).
	\end{equation*}
\end{lemma}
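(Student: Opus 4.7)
The plan is to collapse the $n$ parameters $a_1,\ldots,a_n$ into one real number $v\in[0,1)$ by concatenating their $m$-bit binary expansions, and then to build a fixed ReLU network $\bmphi$ that reads off the $n$ bit-blocks of $v$. Since each $a_i\in[0,1)$ satisfies $2^m a_i\in\N$, we may write $a_i=\bin 0.\theta_{i,1}\theta_{i,2}\cdots\theta_{i,m}$ and set
$$v\coloneqq \sum_{i=1}^n a_i\cdot 2^{-(i-1)m}=\bin 0.\theta_{1,1}\cdots\theta_{1,m}\theta_{2,1}\cdots\theta_{2,m}\cdots\theta_{n,1}\cdots\theta_{n,m}.$$
A short computation gives $v\le 1-2^{-nm}$, so $v\in[0,1)$ and $2^{nm}v\in\N$. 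Moreover, the target value $a_i$ is recovered from $v$ by the closed-form bit-extraction formula
$$a_i\;=\;\bigl\lfloor 2^m\bigl(2^{(i-1)m}v-\lfloor 2^{(i-1)m}v\rfloor\bigr)\bigr\rfloor\cdot 2^{-m}.$$

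The remaining task, which is the heart of the argument, is to realize this extraction by a single fixed ReLU network whose architecture and weights depend only on $m$ and $n$. The obstacle is that $\lfloor\cdot\rfloor$ is discontinuous and hence not exactly representable by a ReLU network. The key observation is that $v$ only ever ranges over the finite dyadic grid $V\coloneqq\{k\cdot 2^{-nm}:k=0,1,\ldots,2^{nm}-1\}$, so we only need a ReLU subnetwork that agrees with the floor operation on $V$. This can be done by a standard staircase construction: one assembles shifted and scaled ReLU units to form a continuous piecewise linear "approximate floor'' $\Phi_m:\R\to\R$ with $\Phi_m(y)=\lfloor 2^m y\rfloor$ for every $y$ of the form $k\cdot 2^{-nm}$, with arbitrary linear interpolation in between. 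Composing such staircases with the affine shifts $y\mapsto 2^{(i-1)m}y$ and subtracting off the integer part (which is itself another staircase applied to $2^{(i-1)m}v$) produces, for each $i$, a scalar ReLU subnetwork that maps $v\in V$ to $a_i$.

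Placing these $n$ subnetworks in parallel and stacking the outputs yields the desired $\bmphi:\R\to\R^n$ with $\bmphi(v)=(a_1,\ldots,a_n)$. The width and depth of $\bmphi$ blow up polynomially in $2^{nm}$, but that is fine since the lemma only asserts existence.

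The main obstacle I anticipate is making the staircase construction rigorous and clean, in particular specifying the breakpoints of $\Phi_m$ so that the identity $\Phi_m(y)=\lfloor 2^m y\rfloor$ holds simultaneously at every relevant shifted argument $2^{(i-1)m}v$ for $v\in V$ and every $i\in\{1,\ldots,n\}$; once this accounting is done, the rest is routine composition and parallelization of ReLU networks.
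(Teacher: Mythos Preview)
Your proposal is correct and follows essentially the same approach as the paper. Both encode $a_1,\ldots,a_n$ into $v$ by concatenating their $m$-bit binary expansions, then recover each $a_i$ via a floor-type formula (your nested formula is algebraically equivalent to the paper's $a_i=\lfloor 2^{mi}v\rfloor/2^m-\lfloor 2^{m(i-1)}v\rfloor$), and both replace $\lfloor\cdot\rfloor$ by a single continuous piecewise linear staircase that agrees with the floor on the finite dyadic grid where the arguments actually land; the paper makes this concrete by placing breakpoints at $(\ell,\ell)$ and $(\ell+1-\delta,\ell)$ for $\ell=0,\ldots,2^{mn}-1$ with $\delta=2^{-mn}$, which is exactly the accounting you flagged as the remaining routine step.
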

Next, we establish another lemma using a ReLU network to uniformly approximate multiplication operation $\psi(x,y)=xy$ well. 
\begin{lemma}\label{lem:xy}
	For any $M>0$ and $\eta>0$, there exists a function $\psi_{\eta}:\R^2\to\R$ realized by a ReLU network  such that
	\begin{equation*}
		\psi_{\eta}(x,y)\rightrightarrows \psi(x,y)=xy\quad \tn{on $[-M,M]^2$\quad as $\eta\to 0^+$},
	\end{equation*}
	where $\rightrightarrows$ denotes the uniform convergence.
\end{lemma}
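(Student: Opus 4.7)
The plan is to use the standard polarization identity $xy=\tfrac{1}{2}\bigl((x+y)^2-x^2-y^2\bigr)$ to reduce the problem to approximating the squaring function $t\mapsto t^2$ uniformly on an interval of the form $[-2M,2M]$. If I can construct a ReLU network $\sigma_\eta$ with $\sup_{|t|\le 2M}|\sigma_\eta(t)-t^2|\le \tfrac{\eta}{3}$, then setting
\begin{equation*}
\psi_\eta(x,y)\coloneqq \tfrac{1}{2}\bigl(\sigma_\eta(x+y)-\sigma_\eta(x)-\sigma_\eta(y)\bigr)
\end{equation*}
is realized by a ReLU network (as a linear combination of compositions of ReLU networks) and satisfies $|\psi_\eta(x,y)-xy|\le \tfrac{\eta}{2}$ uniformly on $[-M,M]^2$, giving uniform convergence as $\eta\to 0^+$.

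So the only real work is to build the squaring approximant. First I would build this on $[0,1]$ via Yarotsky's classical sawtooth construction: let $g(t)=2\,\tn{ReLU}(t)-4\,\tn{ReLU}(t-\tfrac{1}{2})+2\,\tn{ReLU}(t-1)$ be the piecewise linear hat supported on $[0,1]$, and define $g_s$ as the $s$-fold self-composition of $g$. Then $g_s$ is the standard sawtooth with $2^{s-1}$ teeth of height $1$, and for $S\in\N^+$ the ReLU network
\begin{equation*}
F_S(t)\coloneqq t-\sum_{s=1}^{S}\frac{g_s(t)}{4^s}
\end{equation*}
is the piecewise linear interpolation of $t\mapsto t^2$ at the dyadic points $k\,2^{-S}$, $k=0,1,\ldots,2^S$. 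A routine estimate using the concavity of $t^2$ on consecutive dyadic intervals gives $\sup_{t\in[0,1]}|F_S(t)-t^2|\le 2^{-2S-2}$. Next I would extend to $[-1,1]$ using $t^2=|t|^2$ together with the ReLU representation $|t|=\tn{ReLU}(t)+\tn{ReLU}(-t)$; precomposing $F_S$ with $|\cdot|$ gives a ReLU network approximating $t\mapsto t^2$ on $[-1,1]$ with the same error. Finally, for the interval $[-2M,2M]$, I would rescale: the function $t\mapsto (2M)^2 F_S\bigl(|t|/(2M)\bigr)$ is still a ReLU network (the affine rescalings are absorbed into the input and output layers), and its uniform error on $[-2M,2M]$ is at most $(2M)^2\cdot 2^{-2S-2}=M^2\,2^{-2S}$.

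The last step is to pick $S$ depending on $\eta$ (and $M$) so that $M^2\,2^{-2S}\le \tfrac{\eta}{3}$, which requires only $S\gtrsim \log_2(M/\sqrt{\eta})$, and then define $\sigma_\eta$ as this rescaled network. I expect no serious obstacle, since each piece — building $g$ and its iterates, writing $|\cdot|$, the polarization identity, and composing and taking linear combinations of ReLU networks — is elementary; the main thing to be careful about is just the bookkeeping of the approximation constant so that after rescaling by $(2M)^2$ and then combining three copies through the polarization identity, the total error is still $\le \eta$ uniformly on $[-M,M]^2$. Uniform convergence as $\eta\to 0^+$ is then immediate because the construction produces an error bound that depends only on $\eta$ and $M$, not on the point $(x,y)$.
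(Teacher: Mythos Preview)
Your proposal is correct and essentially self-contained. The paper itself does not prove this lemma directly; it simply records it as ``a direct result of Lemma~4.2 of \cite{shijun3}.'' That cited lemma is precisely the quantitative ReLU approximation of the product $xy$, and its proof in the literature follows exactly the route you outline: the polarization identity $xy=\tfrac12\bigl((x+y)^2-x^2-y^2\bigr)$ combined with Yarotsky's sawtooth-composition approximation of $t\mapsto t^2$. So your construction is not merely an alternative but is the standard argument underlying the cited reference.

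One small slip: you wrote ``concavity of $t^2$'' when justifying the interpolation error $2^{-2S-2}$; of course $t\mapsto t^2$ is convex, and it is convexity that places the chord above the graph and gives the maximum deviation at the midpoint of each dyadic subinterval. The bound itself is correct.
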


The proof of Lemma~\ref{lem:store:parameter} is placed later in this section. Lemma~\ref{lem:xy} is just a direct result of Lemma~$4.2$ of \cite{shijun3}.
With Lemmas~\ref{lem:store:parameter} and \ref{lem:xy} in hand, we are ready to  prove Theorem~\ref{thm:main:three:parameters}.
\begin{proof}[Proof of Theorem~\ref{thm:main:three:parameters}]
	For any $\varepsilon>0$, choose a large $n=n(\varepsilon,\alpha,\lambda)\in \N^+$ such that
	\begin{equation*}
		5\lambda d^{\alpha/2} 2^{-\alpha n}\le \varepsilon/2.
	\end{equation*}
	Since $f$ is a H\"older continuous function on $[0,1]^d$ of order $\alpha\in (0,1]$ with a H\"older constant $\lambda>0$, we have $\omega_f(r)\le \lambda r^{\alpha}$ for any $r\ge 0$.
	By Theorem~\ref{thm:mainInfty}, 
	there exist two functions $\bmphi_1:\R^d\to \R^{3^d}$ and $\phi_2:\R^{3^dn}\to \R$, implemented by $f$-independent ReLU networks, such that
	\begin{equation}\label{eq:error1}
		\|s(\phi_2\circ \calL \circ\bmphi_1)+b-f\|_{L^\infty([0,1]^d)}\le \omega_f(\sqrt{d}\, 2^{-n})+2^{-n+2}\omega_f(\sqrt{d})
		\le 5\lambda d^{\alpha/2} 2^{-\alpha n}\le \varepsilon/2,
	\end{equation}
	where $s=2\omega_f(\sqrt{d})\le 2\lambda d^{\alpha/2}$, $b=f(\bmzero)-\omega_f(\sqrt{d})$, and $\calL:\R^{3^d}\to \R^{3^dn}$ is a linear map given by \[\calL(y_1,\cdots,y_{3^d})=\Big(\calL_0(y_1),\cdots,\calL_0(y_{3^d})\Big)\quad \tn{ for any $\bmy=(y_1,\cdots,y_{3^d})\in \R^{3^d}$,}\]
	where $\calL_0:\R\to \R^n$ is a linear map given by $\calL_0(t)=(a_1t, a_2 t,\cdots,a_n t)$ with $a_1,a_2,\cdots,a_n\in [0,\tfrac{1}{3})$ determined by $f$ and $n$. 
	Since $a_1,a_2,\cdots,a_n$ are repeated $3^d$ times in the definition of $\calL$, there are $3^d n$ parameters in total. We will show how to store these $3^dn$ parameters in one intrinsic parameter $v$ via an $f$-independent ReLU network as shown in the following two steps.
	\begin{itemize}
		\item Regard $a_1,a_2,\cdots,a_n$ as inputs, but not parameters. See the difference in Figure~\ref{fig:ai:input:parameter}. 
		Then, we only need to store $a_1,a_2,\cdots,a_n$ one time by  copying them $3^d$ times. 
		\item  As stated in the proof of Theorem~\ref{thm:mainInfty}, $a_1,a_2,\cdots,a_n$ have finite binary representations. Then, we can store them in a key parameter $v$ and use  an $f$-independent ReLU network to extract them from $v$.
	\end{itemize}
	The details of these two steps can be found below.
	\mystep{1}{Regard $a_1,a_2,\cdots,a_n$ as inputs and copy them $3^d$ times.  }
	
	Since  $a_1,a_2,\cdots,a_n$ are regraded as inputs,
	the implementation of $\calL_0(t)=(a_1t, a_2 t,\cdots,a_n t)$ requires multiplication operations. This means that 
	we need to  approximate $\psi(x,y)=xy$ well via an $f$-independent ReLU network. See Figure~\ref{fig:ai:input:parameter} for illustrations.

	Denote $\bma=(a_1,a_2,\cdots,a_n)$ and \begin{equation*}
		\bmphi_1(\bmx)=\Big(\phi_{1,1}(\bmx),\,\phi_{1,2}(\bmx),\,\cdots,\, \phi_{1,3^d}(\bmx)\Big) \quad \tn{for any $\bmx\in [0,1]^d.$}
	\end{equation*}
	Then define 
	\begin{equation*}
		M\coloneqq 1+\sup\Big\{|\phi_{1,j}(\bmx)|:\bmx\in [0,1]^d,\, j=1,2,\cdots,3^d\Big\}.
	\end{equation*}
	By Lemma~\ref{lem:xy}, 
	there exists a function $\psi_{\eta}:\R^2\to\R$ realized by a ReLU network such that
	\begin{equation*}
		\psi_{\eta}(x,y)\rightrightarrows \psi(x,y)=xy\quad \tn{on $[-M,M]^2$\quad as $\eta\to 0^+$}.
	\end{equation*}
	
	\begin{figure}[ht!]        
		\centering
		\begin{subfigure}[b]{0.32\textwidth}
			\centering            \includegraphics[width=0.75\textwidth]{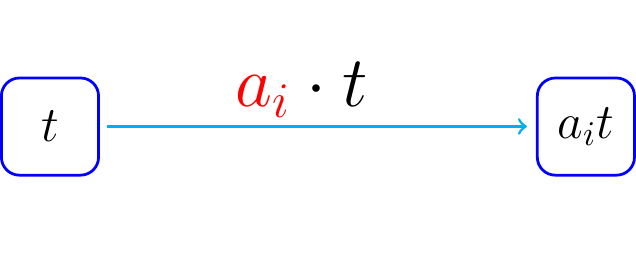}
			\subcaption{}
		\end{subfigure}
		\begin{subfigure}[b]{0.448\textwidth}
			\centering            \includegraphics[width=0.75\textwidth]{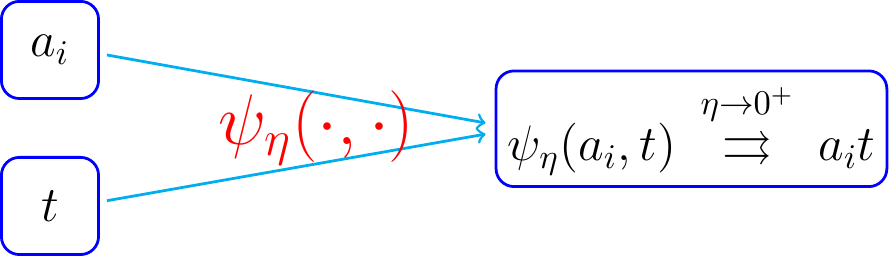}
			\subcaption{}
		\end{subfigure}
		\caption{Illustrations of two methods getting/approximating $a_i t$ for $i=1,2,\cdots,n$ and $t\in \{\phi_{1,j}(\bmx): \bmx\in [0,1]^d,\   j=1,2,\cdots,3^d\}$.
			(a) By regarding $a_i$ as a parameter, one can easily get the product of an input $t$ and an parameter $a_i$. (b) By regarding $a_i$ as an input, one needs to use a ReLU network to approximate the multiplication operation for approximating $a_i t$ well.}
		\label{fig:ai:input:parameter}
	\end{figure}
	
	Note that $\calL$ and $\calL_0$ depend on $\bma=(a_1,a_2,\cdots,a_n)$. For clarity, we denote $\calL_\bma=\calL$ and $\calL_{\bma,0}=\calL_0$.
	For any  $a_1,a_2,\cdots,a_n\in [0,\tfrac{1}{3})\subseteq [-M,M]$ and \begin{equation*}
		t\in \Big\{\phi_{1,j}(\bmx): \bmx\in [0,1]^d,\   j=1,2,\cdots,3^d\Big\}\subseteq [-M,M],
	\end{equation*}
	we can use 
	\begin{equation*}
		\calL_{\bma,0,\eta}(t)\coloneqq\Big(\psi_{\eta}(a_1,t),\, \psi_{\eta}(a_2, t),\,\cdots,\,\psi_{\eta}(a_n, t)\Big)
	\end{equation*}
	to approximate 
	\begin{equation*}
		\calL_{\bma,0}(t)=(a_1t, a_2 t,\cdots,a_n t)=\Big(\psi(a_1,t),\, \psi(a_2, t),\,\cdots,\,\psi(a_n, t)\Big).
	\end{equation*}
	Then 
	\begin{equation*}
		\calL_{\bma,\eta}(y_1,\cdots,y_{3^d})\coloneqq \Big(\calL_{\bma,0,\eta}(y_1),\ \cdots,\ \calL_{\bma,0,\eta}(y_{3^d})\Big)
	\end{equation*}
	can also approximate 
	\begin{equation*}
		\calL_{\bma}(y_1,\cdots,y_{3^d})= \Big(\calL_{\bma,0}(y_1),\  \cdots,\ \calL_{\bma,0}(y_{3^d})\Big)
	\end{equation*}
	well. Define $\tildephi_\eta:\R^{n+3^d}\to\R^{3^dn}$ and $\tildephi:\R^{n+3^d}\to\R^{3^dn}$ via
	\begin{equation*}
		\tildephi_\eta(\bmy,\bma)\coloneqq \calL_{\bma,\eta}(y_1,\cdots,y_{3^d})\quad \tn{for any $\bma\in [0,\tfrac13)^n$ and $\bmy=(y_1,\cdots,y_{3^d}) \in [-M,M]^{3^d}$}
	\end{equation*}
	and 
	\begin{equation*}
		\tildephi(\bmy,\bma)\coloneqq \calL_{\bma}(y_1,\cdots,y_{3^d})\quad \tn{for any $\bma\in [0,\tfrac13)^n$ and $\bmy=(y_1,\cdots,y_{3^d})\in [-M,M]^{3^d}$}.
	\end{equation*}

	Since 
	\begin{equation*}
		\psi_{\eta}(x,y)\rightrightarrows \psi(x,y)=xy\quad \tn{on $[-M,M]^2$\quad as $\eta\to 0^+$},
	\end{equation*}
	it is easy to verify that
	\begin{equation*}
		\tildephi_{\eta}(\bmy,\bma)\rightrightarrows \tildephi(\bmy,\bma)\quad \tn{for $\bmy\in [-M,M]^{3^d}$ and $\bma\in [0,\tfrac13)^n$\quad as $\eta\to 0^+$}.
	\end{equation*}
	Note that $\bmphi_1(\bmx)\in [-M,M]^{3^d}$ for any $\bmx\in [0,1]^d$. Then, 
	\begin{equation*}
		\phi_2\circ\tildephi_{\eta}\Big(\bmphi_1(\bmx),\bma\Big)\rightrightarrows \phi_2\circ\tildephi\Big(\bmphi_1(\bmx),\bma\Big)\quad \tn{for $\bmx\in [0,1]^d$ and $\bma\in [0,\tfrac13)^n$\quad as $\eta\to 0^+$}.
	\end{equation*}
	The fact $\tildephi(\bmy,\bma)=\calL_\bma(\bmy)=\calL(\bmy)$
	implies $\phi_2\circ\tildephi\Big(\bmphi_1(\bmx),\bma\Big)=\phi_2\circ\calL\circ\bmphi_1(\bmx)$. 
	Therefore, 
	\begin{equation*}
		\phi_2\circ\tildephi_{\eta}\Big(\bmphi_1(\bmx),\bma\Big)\rightrightarrows
		\phi_2\circ\calL\circ\bmphi_1(\bmx)\quad \tn{for $\bmx\in [0,1]^d$ and $\bma\in [0,\tfrac13)^n$ \quad as $\eta\to 0^+$}.
	\end{equation*}
	Choose a small $\eta=\eta(n)>0$ such that
	\begin{equation}\label{eq:error2}
		\Big| \phi_2\circ\tildephi_{\eta}\Big(\bmphi_1(\bmx),\bma\Big)
		-\phi_2\circ\calL\circ\bmphi_1(\bmx)\Big|
		\le 2^{-n}\quad \tn{for any $\bmx\in [0,1]^d$ and $\bma\in [0,\tfrac13)^n$ }.
	\end{equation}
	
	Recall that $\psi_\eta$ can be realized by an $f$-independent ReLU network. It is easy to verify that $\tildephi_\eta$ can also be realized by an $f$-independent ReLU network.

	\mystep{2}{Store $a_1,a_2,\cdots,a_n$ in a key parameter $v$.}
	
	As we can see from the proof of Theorem~\ref{thm:mainInfty}, $a_i\in [0,\tfrac{1}{3})$ with $2^m a_i\in \N$ for $i=1,2,\cdots,n$, where $m=2^{dn+1}$. That is,
	\begin{equation*}
		a_i\in \Big\{\bin 0.\theta_1\cdots\theta_{m}:\  \theta_\ell\in \{0,1\},\  \ell=1,2,\cdots,m \Big\}.
	\end{equation*}
	Then, by Lemma~\ref{lem:store:parameter}, there exists a real number $v\in [0,1)$ and a vector function $\bmphi_0:\R\to\R^n$ implemented by a ReLU network independent of $a_1,a_2,\cdots,a_n$ such that
	\begin{equation*}
		\bmphi_0(v)=(a_1,a_2,\cdots,a_n)=\bma.
	\end{equation*}
	Next, we can define the final network-generated function $\phi:\R^{d+1}\to\R$ by
	\begin{equation*}
		\phi(\bmx,v)\coloneqq \phi_2\circ \tildephi_\eta\Big(\bmphi_1(\bmx),\bmphi_0(v)\Big)
		=\phi_2\circ \tildephi_\eta\Big(\bmphi_1(\bmx),\bma\Big)
	\end{equation*}
	for any $\bmx\in [0,1]^d$ and $v\in [0,1)$.
	
	Since the ReLU network realizing $\bmphi_0$ is independent of $a_1,a_2,\cdots,a_n$, and hence independent of $f$. Recall that $\bmphi_1$, $\phi_2$, and $\tildephi_\eta$ can be implemented by $f$-independent ReLU networks. Hence,
	\begin{equation*}
		\phi(\bmx,v)= \phi_2\circ \tildephi_\eta\Big(\bmphi_1(\bmx),\bmphi_0(v)\Big)
	\end{equation*}
	can also implemented by an $f$-independent ReLU network.
	It remains to estimate the error. By Equations~\eqref{eq:error1} and \eqref{eq:error2}, we have
	\begin{equation*}
		\begin{split}
			\big|s\phi(\bmx,v)+b-f(\bmx)\big|
			&\le 
			\Big|s\phi(\bmx,v)+b-\big(s\phi_2\circ\calL\circ\bmphi_1(\bmx)+b\big)\Big|  +   \Big|s\phi_2\circ\calL\circ\bmphi_1(\bmx)+b-f(\bmx)\Big| \\
			&\le s\big|\phi(\bmx,v)-\phi_2\circ\calL\circ\bmphi_1(\bmx)\big|  + \varepsilon/2\\
			&\le 2\lambda d^{\alpha/2}\Big|\phi_2\circ \tildephi_\eta\Big(\bmphi_1(\bmx),\bma\Big)-\phi_2\circ\calL\circ\bmphi_1(\bmx)\Big|  + \varepsilon/2\\
			&\le 2\lambda d^{\alpha/2} 2^{-n} +\varepsilon/2\le
			5\lambda d^{\alpha/2} 2^{-\alpha  n} +\varepsilon/2\le\varepsilon/2 +\varepsilon/2=\varepsilon.
		\end{split}
	\end{equation*}
	So we finish the proof.
\end{proof}

Finally, let us prove Lemma~\ref{lem:store:parameter} to end this section.
\begin{proof}[Proof of Lemma~\ref{lem:store:parameter}]
	Since $a_i\in [0,1)$ with $2^m a_i\in \N$ for $i=1,2,\cdots,n$,  $a_i$ can be represented as a binary form 
	\begin{equation*}
		a_i=\bin 0. a_{i,1}a_{i,2}\cdots  a_{i,m}.
	\end{equation*}
	Denote
	\begin{equation*}
		v=\sum_{i=1}^n 2^{-m(i-1)} a_i
		=\bin 0.
		\underbrace{  a_{1,1}\cdots a_{1,m} }_{\tn{store }\  a_1}
		\underbrace{  a_{2,1}\cdots a_{2,m} }_{\tn{store }\  a_2}
		\   \cdots\ 
		\underbrace{  a_{n,1}\cdots a_{n,m} }_{\tn{store }\ a_n},
	\end{equation*}
	which requires pretty high precision.
	It is easy to extract $a_i$ from $v$ via the floor function ($\lfloor\cdot\rfloor$), i.e.,
	\begin{equation*}
		a_i=\lfloor 2^{mi}v\rfloor / 2^m - \lfloor 2^{m(i-1)}v\rfloor\quad \tn{for $i=1,2,\cdots,n$.}
	\end{equation*}
	Next, we need to use a ReLU network to replace the floor function. Let $g:\R\to\R$ be the continuous piecewise linear function with the following breakpoints:
	\begin{equation*}
		(\ell, \ell)\quad \tn{and}\quad  (\ell+1-\delta,\ell)\quad \tn{for $\ell=0,1,\cdots,2^{mn}-1$,\quad where $\delta=2^{-mn}$.}
	\end{equation*}
	Clearly, $g$ can be realized by a ReLU network independent of $a_1,a_2,\cdots,a_n$ and 
	\begin{equation*}
	    g(x)=\lfloor x\rfloor \quad \tn{for any $\displaystyle  x\in \bigcup_{\ell=0}^{2^{mn}-1} \big[\ell,\, \ell+1-\delta\big].$}
	\end{equation*}
	Note that $v\in  \bigcup_{\ell=0}^{2^{mn}-1} \big[\ell,\, \ell+1-\delta\big].$ By defining 
	\begin{equation*}
		g_i(t)\coloneqq  g( 2^{mi}t) / 2^m - g( 2^{m(i-1)} t)\quad \tn{for $i=1,2,\cdots,n$ and any $t\in\R$,}
	\end{equation*}
	we have
	\begin{equation*}
		a_i=g( 2^{mi}v) / 2^m - g( 2^{m(i-1)} v)=g_i(v)\quad \tn{for $i=1,2,\cdots,n$.}
	\end{equation*}
	Next, The target function $\bmphi$ can be defined via
	\begin{equation*}
		\bmphi(t)=\Big(g_1(t),\, g_2(t),\,\cdots,\,g_n(t)\Big)\quad \tn{for any $t\in\R$.}
	\end{equation*}
	Thus, we have 
	\begin{equation*}
		\bmphi(v)=\Big(g_1(v),\, g_2(v),\,\cdots,\,g_n(v)\Big)=(a_1,a_2,\cdots,a_n).
	\end{equation*}
	and $\bmphi$ can be realized by a ReLU network independent of $a_1,a_2,\cdots,a_n$. So we finish the proof.
\end{proof}
\section{Proof of Theorem~\ref{thm:mainOld}} 
\label{sec:proofMainOld}
In this section, we first present the proof sketch of  Theorem~\ref{thm:mainOld} in Section~\ref{sec:proof:sketch:main:old}, and then give the detailed proof in Section~\ref{sec:proof:main:old} based on Proposition~\ref{prop:phi4j}, which will be proved later in Section~\ref{sec:proof:prop}.

\subsection{Sketch of proof}
\label{sec:proof:sketch:main:old}
Before proving Theorem~\ref{thm:mainOld}, let us present the key steps as follows.
\begin{enumerate}[1.]
	\item Set $K=2^n$, divide $[0,1]^d$ into $K^d$ cubes $Q_\bmbeta$ for $\bmbeta\in\{0,1,\cdots,K-1\}^d$ and the trifling region $\Omega([0,1]^d,K,\delta)$, and denote $\bmx_\bmbeta$ as the vertex of $Q_\bmbeta$ with minimum $\|\cdot\|_1$ norm for each $\bmbeta$. See Figure~\ref{fig:Qbeta+xbeta} for illustrations.
	
	\item Design a ReLU sub-network to implement a function $\phi_1:\R^d\to \R$, independent of $f$, projecting the whole $Q_\bmbeta$ to a number determined by $\bmbeta$ in $\{4^j:j=1,2,\cdots,K^d\}$ for each $\bmbeta$. 
	
	\item Design a linear map $\calL:\R\to \R^n$, given by $\calL(t)=(a_1t, a_2 t, \cdots , a_n t)$, for later use, where $a_1, a_2, \cdots, a_n$ are determined by $f$ and $n$.
	
	\item Design a ReLU sub-network to implement a function $\phi_2:\R^n\to \R$, independent of $f$, such that $\phi_2\circ\calL\circ\phi_1(\bmx)\approx f(\bmx_\bmbeta)$ for any $\bmx \in Q_\bmbeta$ and each $\bmbeta\in\{0,1,\cdots,K-1\}^d$. Then $\phi_2\circ\calL\circ\phi_1$ approximates $f$ well outside of $\Omega([0,1]^d,K,\delta)$. 
	
	\item Estimate the approximation error of $\phi_2\circ\calL\circ\phi_1\approx f$.
\end{enumerate}
\begin{figure}[!htp]
	\centering
	\begin{subfigure}[b]{0.313\textwidth}
		\centering            \includegraphics[width=0.75\textwidth]{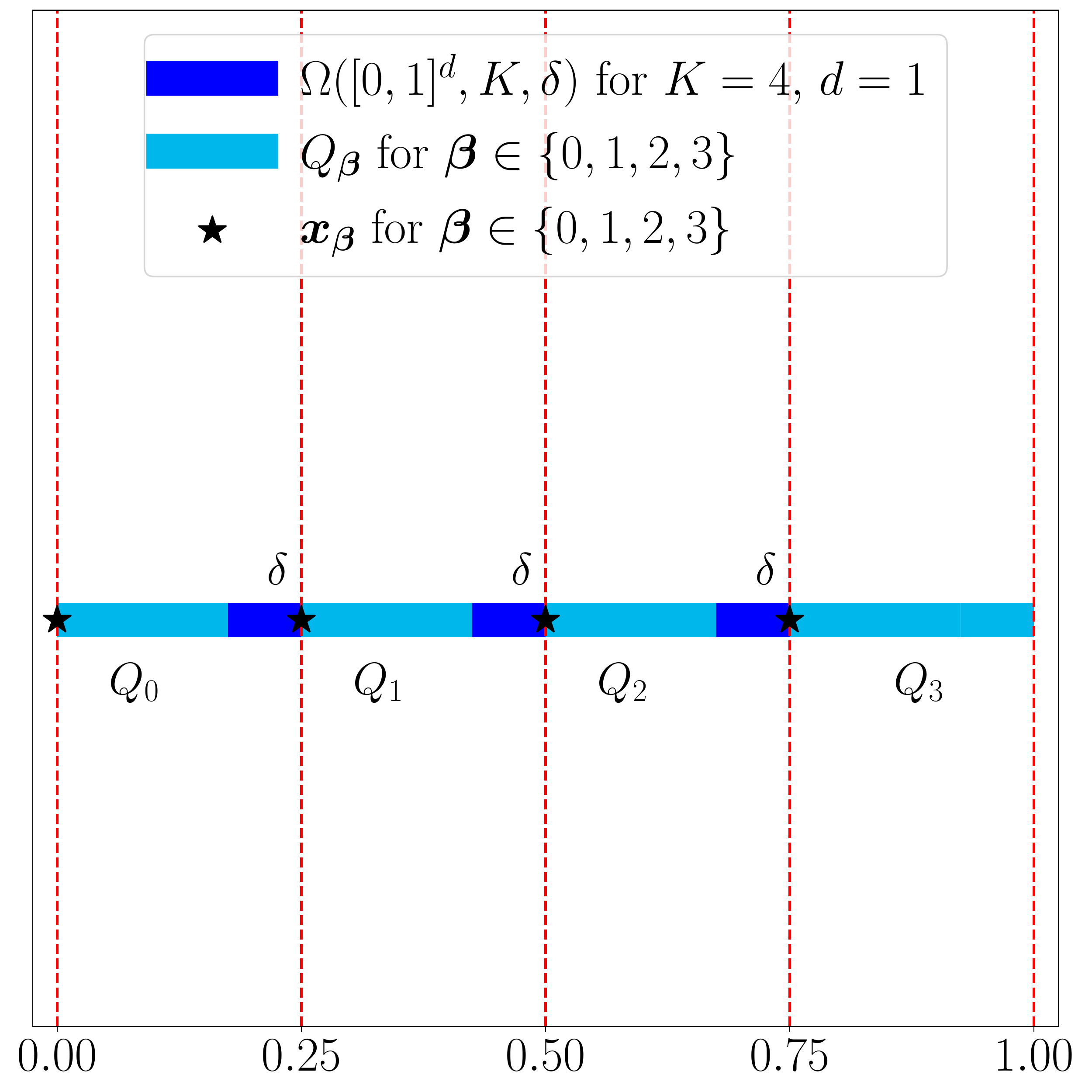}
		\subcaption{}
	\end{subfigure}
	\begin{subfigure}[b]{0.313\textwidth}
		\centering            \includegraphics[width=0.75\textwidth]{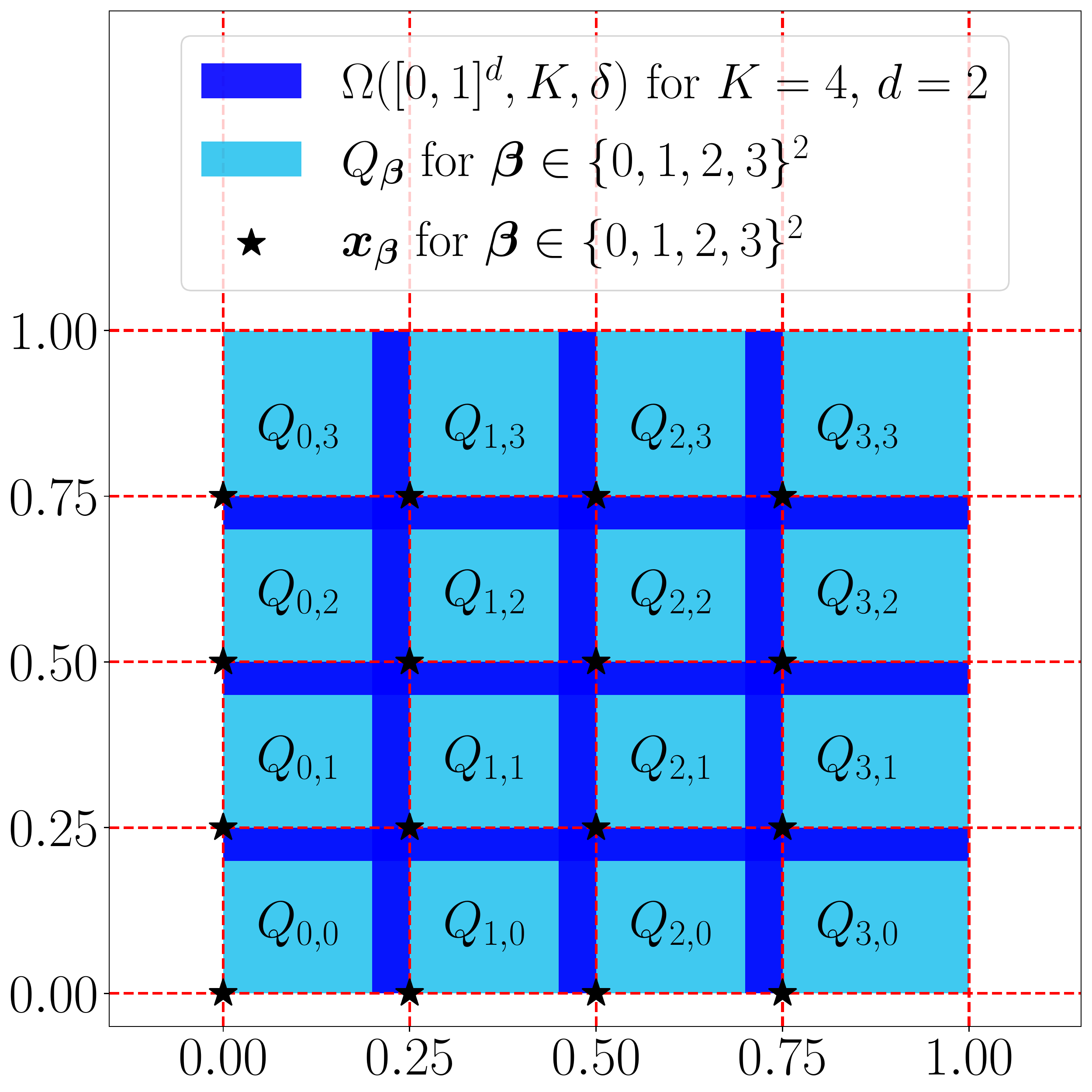}
		\subcaption{}
	\end{subfigure}
	\caption{Illustrations of $\Omega([0,1]^d,K,\delta)$, $Q_\bmbeta$ and $\bmx_\bmbeta$ for $\bmbeta\in \{0,1,\cdots,K-1\}^d$. (a) $K=4,\ d=1$. (b)  $K=4,\ d=2$.}
	\label{fig:Qbeta+xbeta}
\end{figure}

As we shall see later, the constructions of $\phi_1$ and $\calL$ are not difficult. The most technical part is to 
design $\phi_2$ implemented by a ReLU network, which relies on the following proposition.
\begin{proposition}
	\label{prop:phi4j}
	Given any $J\in \N^+$, there exists a function $\phi$ implemented by a ReLU network with width $2$ and depth $2J+2$  such that: For any $\theta_1,\theta_2,\cdots,\theta_J\in \{0,1\}$, we have $\phi(x)\in [0,1]$ for any $x\in \R$ and
	\begin{equation}
		\label{eq:phi4j}
		\phi(4^ja)=\theta_j\quad \tn{for $j=1,2,\cdots,J$},\quad \tn{where}\ a=\sum_{j=1}^J \theta_j4^{-j}.
	\end{equation}
\end{proposition}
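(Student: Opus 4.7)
The plan rests on an arithmetic identity: for $a = \sum_{i=1}^{J} \theta_i 4^{-i}$ and any $j \in \{1, \dots, J\}$,
\[
4^j a \;=\; 4k + \theta_j + r, \qquad k \in \N,\ r \in [0, \tfrac{1}{3}],
\]
with $k = \sum_{i=1}^{j-1} \theta_i 4^{j-1-i}$ and $r = \sum_{i=j+1}^{J} \theta_i 4^{j-i}$. Consequently $(4^j a) \bmod 4 \in [0, \tfrac{1}{3}]$ when $\theta_j = 0$ and in $[1, \tfrac{4}{3}]$ when $\theta_j = 1$, so the task reduces to (i) computing $x \bmod 4$ on the structured input set $\scrA = \{4^j a\}$, and (ii) thresholding the residue.

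I would decompose $\phi = h \circ R$. The bit-extractor is $h(y) = \sigma(3y-1) - \sigma(3y-2)$, which sends $[0, \tfrac{1}{3}] \mapsto 0$ and $[1, \tfrac{4}{3}] \mapsto 1$, is monotone, and takes values in $[0,1]$ throughout $\R$; this uses a single hidden layer of width $2$. The reducer $R$ is built by stacking two-layer width-$2$ blocks, each of which peels off the leading base-$4$ digit of a structured input: on structured inputs at scale $[0, 4^{k}/3]$, the valid values split into two classes, one in $[0, 4^{k-1}/3]$ and the other in $[4^{k-1}, 4^{k}/3]$, separated by a gap of at least $\tfrac{2}{3}\cdot 4^{k-1}$. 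A two-layer width-$2$ ReLU block can therefore realize $y \mapsto y - 4^{k-1}\mathbf{1}[y \geq 4^{k-1}]$ exactly on this discrete set. Stacking enough such blocks to descend from scale $4^J$ down to scale $4$ fits within depth $2J$ and width $2$, so $\phi = h \circ R$ fits within depth $2J + 2$ and width $2$.

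Correctness would be proved by induction on the number of applied reduction blocks, with the invariant that after $k$ blocks every structured input has been mapped to the structured input obtained by stripping its top $k$ base-$4$ digits, so that the originally queried bit is shifted one place closer to the units position. The base case is immediate since $h$ correctly extracts the bit from any residue in $[0, \tfrac{4}{3}]$. The output is in $[0,1]$ for all $x \in \R$ because $h$ is already in $[0,1]$ everywhere, which handles the global bound automatically regardless of what $R$ does off the structured set.

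The main obstacle is the explicit design of the two-layer width-$2$ reduction block: with only two neurons per hidden layer, both threshold detection and subtraction must be squeezed into exactly two layers. The generous gap between the two classes of valid inputs at each scale is the essential ingredient that makes this possible (one ReLU layer acts as a sharp threshold, the next performs the subtraction of $4^{k-1}$ when triggered). A careful verification that the block maps structured inputs to structured inputs of one smaller level, rather than to some intermediate value that would be mis-decoded downstream, completes the inductive proof.
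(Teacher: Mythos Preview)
Your approach is correct and genuinely different from the paper's. Both start from the same base-$4$ identity $4^{j}a = 4k + \theta_j + r$ with $r\in[0,\tfrac13]$, but instead of reducing $x\bmod 4$ by stripping top digits, the paper applies the sawtooth function $\calT_{4^J}$, which is $2$-periodic on $[0,2\cdot 4^{J}]$, so that $\calT_{4^J}(4^{j}a)=\calT_{4^J}(\theta_j+r)$ lands in $[0,\tfrac13]$ or $[\tfrac23,1]$; one final width-$2$ threshold layer then reads off $\theta_j$. The sawtooth is realized as the $(2J{+}1)$-fold self-composition of the tent map $x\mapsto 1-|2x-1|$, each tent costing one width-$2$ hidden layer, for total depth $2J+2$. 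The paper's route is cleaner in that every block is identical (no per-scale tuning) and correctness is a one-line periodicity check, whereas your route makes the digit extraction more explicit but hinges on the two-layer reduction block you rightly flag as the main obstacle. One caveat there: your informal ``layer one thresholds, layer two subtracts'' description does not quite work as written, since with only two neurons you cannot both pass $y$ through \emph{and} output a flat indicator in the first layer; a working width-$2$ depth-$2$ block is $u_1=\sigma(y)$, $u_2=\sigma\big(y-4^{k-1}/3\big)$, then $v_1=\sigma\big(u_1-\tfrac32 u_2\big)$, $v_2=\sigma\big(\tfrac32 u_2-u_1\big)$, with output $v_1+2v_2$, which you can verify equals $y$ on $[0,4^{k-1}/3]$ and $y-4^{k-1}$ on $[4^{k-1},4^{k}/3]$.
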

The proof of this proposition can be found in Section~\ref{sec:proof:prop}. We shall point out that the function $\phi$ in this proposition is independent of $\theta_1,\theta_2,\cdots,\theta_J\in\{0,1\}$. 

\subsection{Constructive proof}\label{sec:proof:main:old}
Now we are ready to give the detailed proof of Theorem~\ref{thm:mainOld}.
\begin{proof}[Proof of Theorem~\ref{thm:mainOld}]
	The proof consists of five steps.
	\mystep{1}{Set up.}
	Set $K=2^n $  and let $\delta>0$ be a small number determined later. Then define $\bmx_\bmbeta\coloneqq \bmbeta/K$ and divide $[0,1]^d$ into $K^d$  cubes $Q_\bmbeta$ for $\bmbeta\in\{0,1,\cdots,K-1\}^d$ and a small region $\Omega([0,1]^d,K,\delta)$. Namely,
	\begin{equation*}
		Q_\bmbeta\coloneqq \Big\{\bmx=(x_1,x_2,\cdots,x_d): x_i\in [\tfrac{\beta_i}{K},\tfrac{\beta_i+1}{K}-\delta]\ \tn{for}\ i=1,\cdots,d\Big\},
	\end{equation*}
	for $\bmbeta=(\beta_1,\beta_2,\cdots,\beta_d)\in \{0,1,\cdots,K-1\}^d$. 
	Clearly, $\Omega([0,1]^d,K,\delta)= [0,1]^d\backslash (\bigcup_{\bmbeta\in \{0,1,\cdots,K-1\}^d}Q_\bmbeta)$. See  Figure~\ref{fig:Qbeta+xbeta} for illustrations.
	
	\mystep{2}{Construct $\phi_1$.}
	Let $g$ be a ``step function'' such that
	\begin{itemize}
		\item $g(\tfrac{k}{K})=g(\frac{k+1}{K}-\delta)=k$ for $k=0,1,\cdots,K-1$ and $g(1)=K-1$.
		\item $g$ is linear between any two adjacent points of \[\{\tfrac{k}{K}:k=0,1,\cdots,K\}\cup \{\tfrac{k+1}{K}-\delta:k=0,1,\cdots,K-1\}.\]
	\end{itemize}
	Then, for any $\bmx=(x_1,\cdots,x_d)\in Q_\bmbeta$ and $\bmbeta=(\beta_1,\cdots,\beta_d)\in \{0,1,\cdots,K-1\}^d$, we have
	\[g(x_i)=\beta_i\quad \tn{for $i=1,2,\cdots,d$.} \]
	Also, such a function $g$ can be easily realized by a one-hidden-layer ReLU network with width $2K$.

	Let $h$ be a function satisfying $h(j)=4^j$ for $j=1,2,\cdots,K^d$. Such a function $h$ can be easily realized by a one-hidden-layer ReLU network with width $K^d$. Then the desired function  $\phi_1:\R^d\to\R$  can be defined via
	\begin{equation*}
		\phi_1(\bmx)\coloneqq h\Big(1+\sum_{i=1}^d g(x_i)K^{i-1}\Big)=h\circ\varrho\Big(g(x_1),\cdots,g(x_d)\Big)\quad \tn{for any $\bmx=(x_1,\cdots,x_d)\in \R^d$,}
	\end{equation*}
	where $\varrho:\R^d\to\R$ is a linear function defined by 
	\begin{equation*}
		\varrho(\bmy)=1+\sum_{i=1}^d y_i K^{i-1}\quad \tn{for any $\bmy=(y_1,\cdots,y_d)\in \R^d$.}
	\end{equation*}  
	Clearly, $\varrho$ is a bijection (one-to-one map) from $\bmbeta\in \{0,1,\cdots,K-1\}^d$ to $\varrho(\bmbeta)=1+\sum_{i=1}^d \beta_i K^{i-1}\in \{1,2,\cdots,K^d\}$.
	
	Then, for any $\bmx\in Q_\bmbeta$ and $\bmbeta\in\{0,1,\cdots,K-1\}^d$, we have
	\begin{equation}
		\label{eq:psi1}
		\phi_1(\bmx)=h\circ\varrho\Big(g(x_1),\cdots,g(x_d)\Big)=h\circ\varrho\big(\beta_1,\cdots,\beta_d\big)=4^{\varrho(\bmbeta)}=4^{1+\sum_{i=1}^d \beta_i K^{i-1}}.
	\end{equation}
	Apparently, $\phi_1$ is independent of $f$ and it can be realized by a ReLU network with \[\le d(2K\times 3+1) \ +\  (K+1)\  +\   3K^d+1\le 11K^d +2=11\times 2^{dn}+2\le 2^{dn+4}\] parameters. 
	
	\mystep{3}{Construct $\calL$.}
	For each $\bmbeta\in \{0,1,\cdots,K-1\}^d$, it follows from $f(\bmx_\bmbeta)\in [0,1]$ that there exist 
	$\xi_{\bmbeta,1},\cdots,\xi_{\bmbeta,n}$ such that
	\begin{equation}
		\label{eq:fapproxbin}
		|f(\bmx_\bmbeta)-\bin 0.\xi_{\bmbeta,1}\cdots\xi_{\bmbeta,n}|\le 2^{-n}.
	\end{equation}
	Given any $j\in \{1,2,\cdots,K^d\}$, there exists a unique $\bmbeta\in \{0,1,\cdots,K-1\}^d$ such that $j=1+\sum_{i=1}^d \beta_i K^{i-1}=\varrho(\bmbeta)$. Thus, for any $\ell\in\{1,2,\cdots,n\}$, we can define
	\begin{equation}
		\label{eq:theta=xi}
		\theta_{j,\ell}\coloneqq\xi_{\bmbeta,\ell}, \quad \tn{for $j=\varrho(\bmbeta)$ and $\bmbeta\in \{0,1,\cdots,K-1\}^d$.}
	\end{equation}
	
	Then the desired linear map $\calL$ can be defined via
	\begin{equation*}
		\calL(t)\coloneqq (a_1 t,a_2 t,\cdots, a_n t)\quad \tn{for any $t\in \R$,}
	\end{equation*}
	where $a_\ell=\sum_{j=1}^{K^d} \theta_{j,\ell}4^{-j}$ for $\ell=1,2,\cdots,n$. Clearly, for $\ell=1,2,\cdots,n$, we have
	\begin{equation*}
		a_\ell=\sum_{j=1}^{K^d} \theta_{j,\ell}4^{-j}\in [0,\tfrac{1}{3})
	\end{equation*}
	and 
	\begin{equation*}
		2^m a_\ell=2^{2^{dn+1}}a_\ell
		=4^{2^{dn}}a_\ell
		=4^{K^d}a_\ell\in \N, 
		\quad 
		\tn{where $m=2^{dn+1}$.}
	\end{equation*}

	\mystep{4}{Construct $\phi_2$.}
	Fix $\ell\in \{1,2,\cdots,n\}$, by Proposition~\ref{prop:phi4j} (set $J=K^d$ and $\theta_j=\theta_{j,\ell}$ therein), 
	there exists a function $\phi_{2,\ell}$ implemented by a ReLU network with width $2$ and depth $2K^d+2$ such that $\phi_{2,\ell}(t)\in [0,1]$ for any $t\in \R$ and
	\begin{equation}
		\label{eq:psi2w4j}
		\phi_{2,\ell}(4^ja_\ell)=\theta_{j,\ell}\quad \tn{for $j=1,2,\cdots,K^d$},\quad \tn{where}\ a_\ell=\sum_{j=1}^{K^d} \theta_{j,\ell}4^{-j}.
	\end{equation}
	Note that $\phi_{2,\ell}$ is independent of $\theta_{j,\ell}$ for $j=1,2,\cdots,K^d$, so it is also independent of $f$. Then the desired function $\phi_2:\R^n\to \R$ can be defined via
	\begin{equation*}
		\phi_2(\bmy)\coloneqq \sum_{\ell=1}^{n}2^{-\ell}\phi_{2,\ell}(y_\ell)\quad \tn{for any $\bmy=(y_1,\cdots,y_n)\in \R^n$.}
	\end{equation*}
	Then $\phi_{2}(\bmy)\in [0,1]$ for any $\bmy\in \R^n$ and
	$\phi_{2}$ can be implemented by a ReLU network, independent of $f$,  with \[\le \Big(6(2K^n+2)+(2+1)\Big)n+n+1=6n(2^{nd+1}+2)+4n+1\le 2^{nd+5}n \] parameters.
	
	\mystep{5}{Estimate the approximation error.}
	It  remains to estimate the approximation error. By Equations~\eqref{eq:psi1}, \eqref{eq:theta=xi}, and \eqref{eq:psi2w4j}, for  $\ell\in \{1,2,\cdots,n\}$, $\bmx\in Q_\bmbeta$, $j=\varrho(\bmbeta)=1+\sum_{i=1}^d \beta_i K^{i-1}\in \{1,2,\cdots,K^d\}$, and $\bmbeta\in \{0,1,\cdots,K-1\}^d$, we have
	\begin{equation*}
		\begin{split}
			\phi_2\circ \calL\circ \phi_1(\bmx)&=\phi_2\circ \calL\circ h\circ \varrho\Big(g(x_1),\cdots,g(x_d)\Big)=\phi_2\circ \calL\circ h\circ \varrho(\bmbeta)\\
			&= \phi_2\circ \calL(4^{\varrho(\bmbeta)})
			=\phi_2\circ \calL(4^j)
			= \phi_2(4^ja_1,4^ja_2,\cdots,4^ja_n)\\
			&=\sum_{\ell=1}^{n}2^{-\ell}\phi_{2,\ell}(4^ja_\ell)
			= \sum_{\ell=1}^{n}2^{-\ell}\theta_{j,\ell}
			=\sum_{\ell=1}^{n}2^{-\ell}\xi_{\bmbeta,\ell}.
		\end{split}
	\end{equation*}
	
	Then by Equation~\eqref{eq:fapproxbin}, for any $\bmx\in Q_\bmbeta$ and $\bmbeta\in\{0,1,\cdots,K-1\}^d$, we get
	\begin{equation*}
		\begin{split}
			|f(\bmx)-\phi_2\circ \calL \circ \phi_1(\bmx)|
			&= |f(\bmx)-f(\bmx_\bmbeta)|+|f(\bmx_\bmbeta)-\phi_2\circ \calL \circ \phi_1(\bmx)|\\
			&\le \omega_f(\tfrac{\sqrt{d}}{K})+|f(\bmx_\bmbeta)-\sum_{\ell=1}^{n}2^{-\ell}\xi_{\bmbeta,\ell}|\\
			&\le \omega_f(\tfrac{\sqrt{d}}{K})+|f(\bmx_\bmbeta)-\bin 0.\xi_{\bmbeta,1}\cdots\xi_{\bmbeta,n}|\le \omega_f(\sqrt{d}\, 2^{-n})+2^{-n}.
		\end{split}
	\end{equation*}
	That is, 
	\begin{equation*}
		\begin{split}
			|f(\bmx)-\phi_2\circ \calL \circ \phi_1(\bmx)|\le \omega_f(\sqrt{d}\, 2^{-n})+2^{-n}\quad \tn{for any $\bmx\in [0,1]^d\backslash \Omega([0,1]^d,K,\delta)$}.
		\end{split}
	\end{equation*}
	Moreover, the fact $\phi_2(\bmy)\in [0,1]$ for any $\bmy\in\R^n$ implies $\|\phi_2\circ\calL\circ\phi_1\|_{L^\infty(\R^d)}\le 1$.
	So we finish the proof.
\end{proof}

\subsection{Proof of Proposition~\ref{prop:phi4j}}
\label{sec:proof:prop}
Before proving Proposition~\ref{prop:phi4j}, let us introduce a notation to simplify the proof.
We use $\calT_m$  for $m\in \N^+$ to denote a ``sawtooth'' function satisfying the following conditions.
\begin{itemize}
	\item $\calT_m:[0,2m]\to [0,1]$ is linear between any two adjacent integers of $\{0,1,\cdots,2m\}$.
	\item $\calT_m(2j)=0$ for $j=0,1,\cdots,m$ and $\calT_m(2j+1)=1$ for $j=0,1,\cdots,m-1$.
\end{itemize}
\begin{figure}[!htp]
	\centering
	\begin{subfigure}[b]{0.2284\textwidth}
		\centering
		\includegraphics[width=0.95\textwidth]{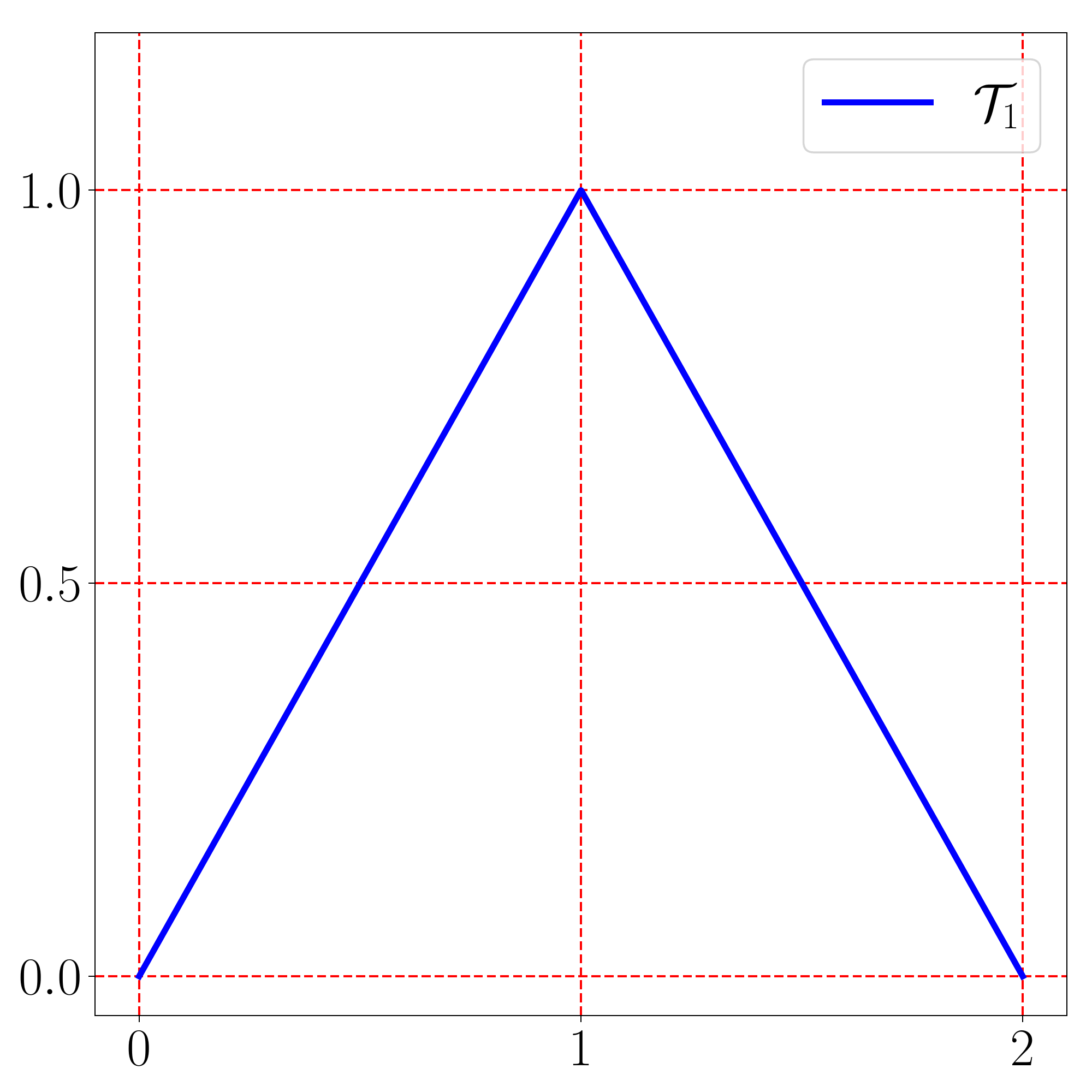}
	\end{subfigure}
	\begin{subfigure}[b]{0.2284\textwidth}
		\centering            \includegraphics[width=0.95\textwidth]{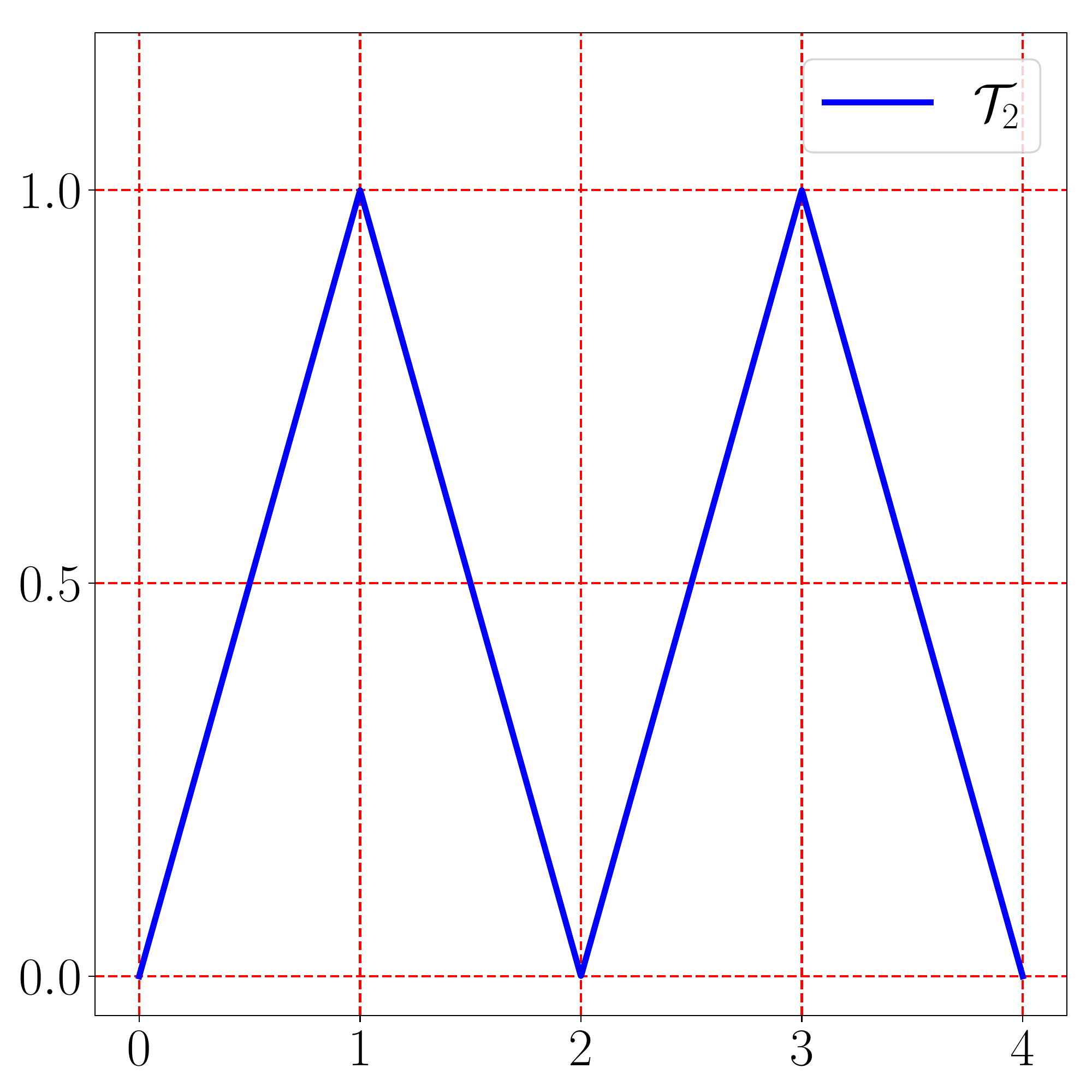}
	\end{subfigure}
	\begin{subfigure}[b]{0.2284\textwidth}
		\centering           \includegraphics[width=0.95\textwidth]{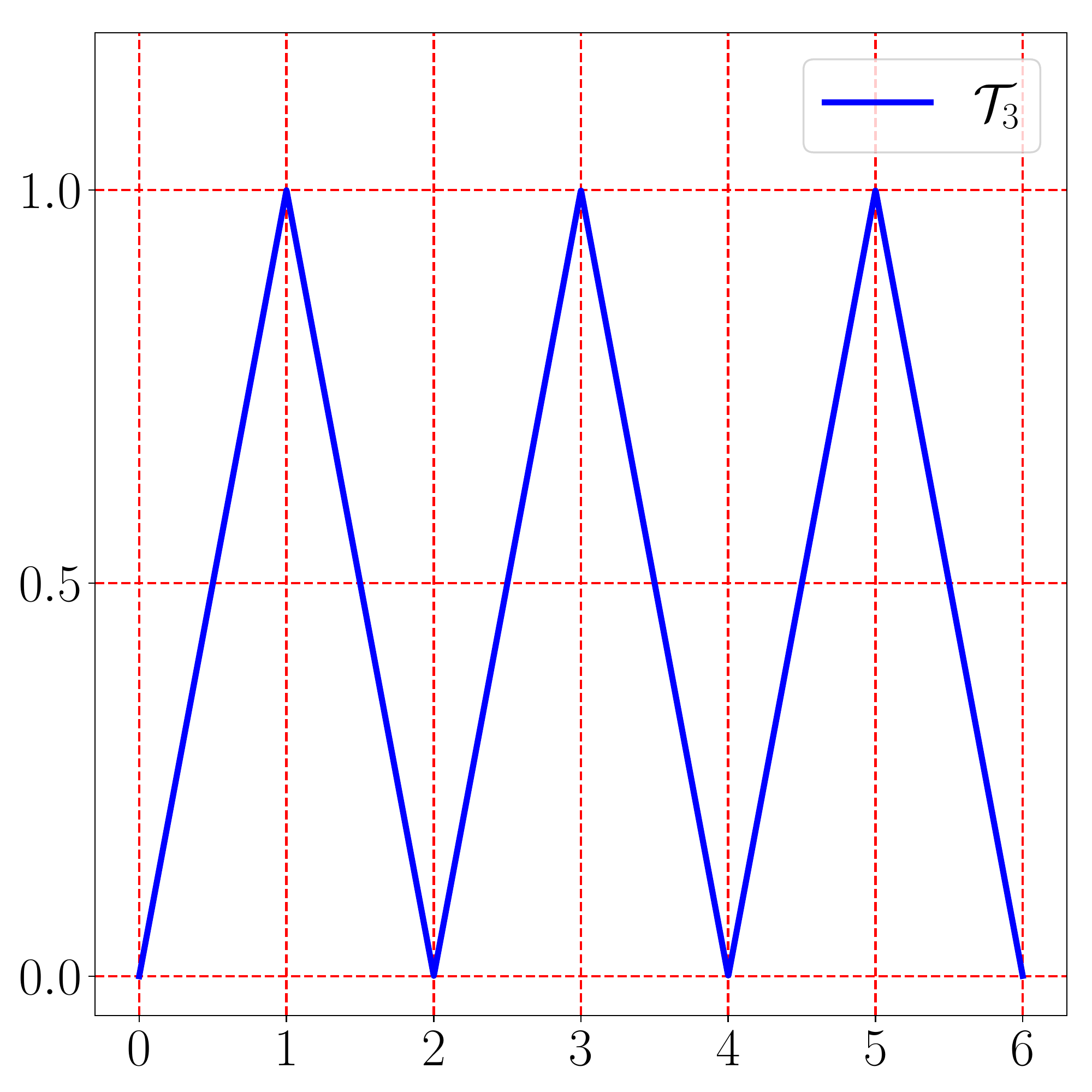}
	\end{subfigure}
	\begin{subfigure}[b]{0.2284\textwidth}
		\centering            \includegraphics[width=0.95\textwidth]{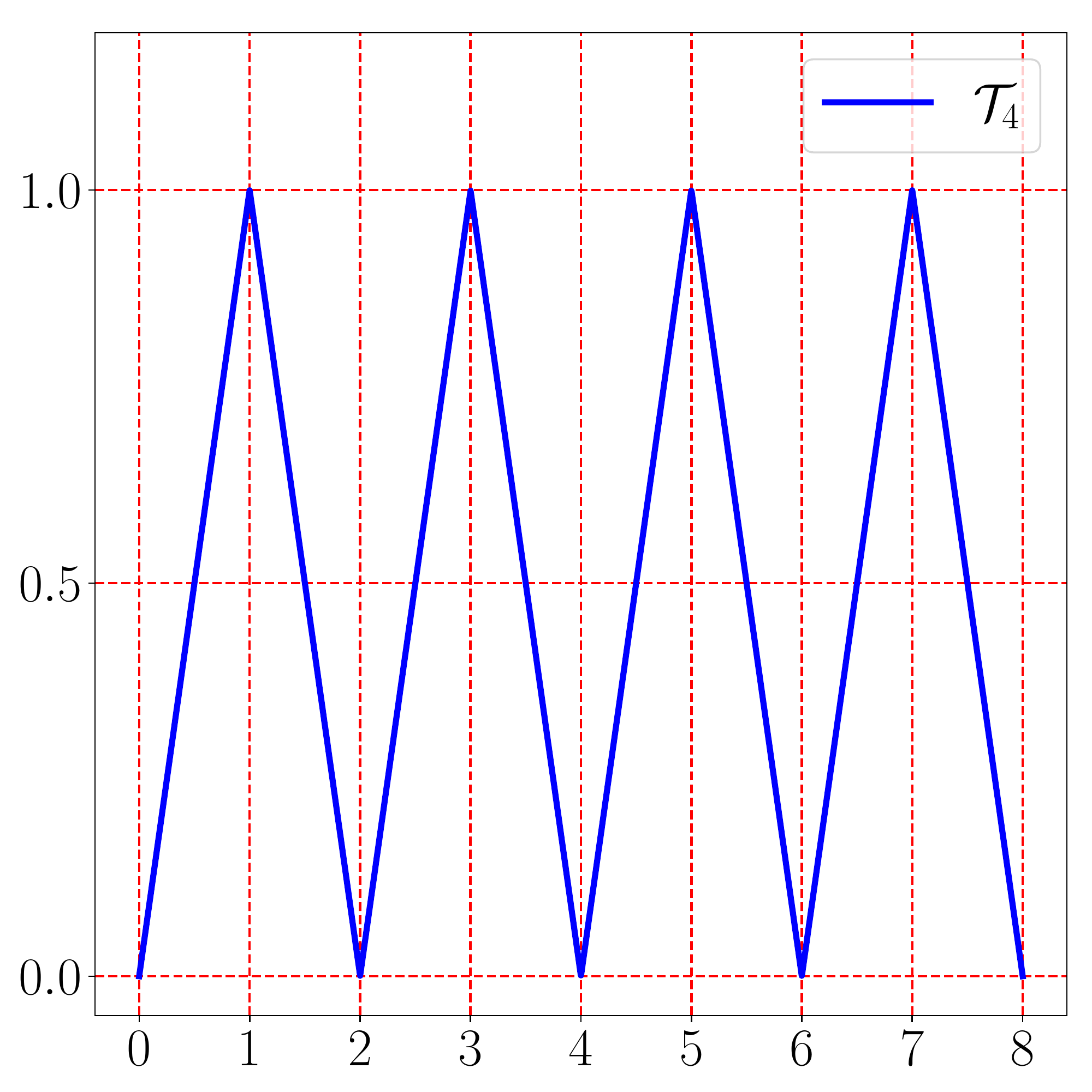}
	\end{subfigure}
	\caption{Illustrations of teeth functions $\calT_1,\ \calT_2,\ \calT_3$ and $\calT_4$.}
	\label{fig:toothFunctions}
\end{figure}

To simplify the proof of Proposition~\ref{prop:phi4j}, we first introduce a lemma based on the ``sawtooth'' function.
\begin{lemma}
	\label{lem:T4J}
	Given any $J\in \N^+$ and $\theta_j\in \{0,1\}$ for $j=1,2,\cdots,J$, set $a=\sum_{j=1}^J \theta_j4^{-j}.$
	Then
	\begin{equation}
		\label{eq:tooth4J}
		\calT_{4^J}(4^{j}a)\in [0,1/3]\quad \tn{if $\theta_j=0$}\quad \tn{and} \quad \calT_{4^J}(4^{j}a)\in [2/3,1]\quad \tn{if $\theta_j=1$},
	\end{equation}
	where $\calT_{4^J}:[0,2\times 4^J]\to [0,1]$ is a ``sawtooth'' function with $4^J$ ``teeth'' defined just above.
\end{lemma}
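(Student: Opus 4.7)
The plan is to analyze the value $4^j a$ by splitting it into its integer and fractional parts, then use the explicit piecewise-linear description of $\mathcal{T}_{4^J}$ on consecutive unit intervals. First, I would write
\[
4^j a \;=\; \sum_{i=1}^{J} \theta_i\, 4^{j-i} \;=\; N_j + f_j,\qquad
N_j \coloneqq \sum_{i=1}^{j} \theta_i\, 4^{j-i},\quad
f_j \coloneqq \sum_{i=j+1}^{J} \theta_i\, 4^{j-i}.
\]
Here $N_j \in \mathbb{N}$ because every term in its defining sum is a nonnegative integer, and $f_j \ge 0$ with
\[
f_j \;\le\; \sum_{i=j+1}^{\infty} 4^{j-i} \;=\; \tfrac{1}{3}.
\]
Also $4^j a \le 4^J \sum_{i=1}^J 4^{-i} < 4^J/3 < 2\cdot 4^J$, so $4^j a$ lies in the domain of $\mathcal{T}_{4^J}$.

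The key arithmetic observation is the parity of $N_j$: isolating the $i=j$ term gives
\[
N_j \;=\; \theta_j + 4\sum_{i=1}^{j-1}\theta_i\, 4^{j-i-1},
\]
so $N_j \equiv \theta_j \pmod{2}$. Thus $N_j$ is even when $\theta_j=0$ and odd when $\theta_j=1$.

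The second step is to apply the definition of $\mathcal{T}_{4^J}$. On any interval $[2k,2k+1]$ one has $\mathcal{T}_{4^J}(x)=x-2k$, and on $[2k+1,2k+2]$ one has $\mathcal{T}_{4^J}(x)=2k+2-x$. In the case $\theta_j=0$, write $N_j=2k$; then $4^j a = 2k+f_j \in [2k,\, 2k+\tfrac{1}{3}]\subseteq [2k,2k+1]$, so $\mathcal{T}_{4^J}(4^j a)=f_j\in[0,\tfrac{1}{3}]$. In the case $\theta_j=1$, write $N_j=2k+1$; then $4^j a = 2k+1+f_j \in [2k+1,\, 2k+1+\tfrac{1}{3}]\subseteq[2k+1,2k+2]$, so $\mathcal{T}_{4^J}(4^j a)=1-f_j\in[\tfrac{2}{3},1]$.

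There is no real obstacle here; everything follows from a base-$4$ expansion argument plus the piecewise-linear formula for the sawtooth. The only point requiring minor care is confirming that the fractional remainder $f_j$ stays below $\tfrac{1}{3}$, which comfortably keeps $4^j a$ inside the correct ascending (respectively descending) tooth of $\mathcal{T}_{4^J}$ and hence produces the stated gap between the $[0,\tfrac{1}{3}]$ and $[\tfrac{2}{3},1]$ output regimes.
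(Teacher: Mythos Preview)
Your proof is correct and follows essentially the same approach as the paper: decompose $4^j a$ into an integer part and a fractional tail bounded by $1/3$, then read off the value of the sawtooth on the resulting unit interval. The only cosmetic difference is that the paper splits the sum into three pieces (a multiple of $4$, then $\theta_j$, then the tail) rather than two, but your parity observation $N_j\equiv\theta_j\pmod 2$ accomplishes exactly the same thing.
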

\begin{proof}
	Fix $j\in \{1,2,\cdots,J\}$, we have
	\begin{equation}\label{eq:3terms}
		\begin{split}
			4^{j}a=4^{j}\sum_{i=1}^J \theta_i4^{-i}
			= \underbrace{\sum_{i=1}^{j-1} \theta_i4^{j-i}}_{\tn{$=4k$ for some $k\in \N$ with $k\le\tfrac{ 4^{J-1}}3$}}
			\quad +\quad  \overbrace{\theta_j}^{\tn{$0$ or $1$}}
			\quad +\quad 
			\underbrace{\sum_{i=j+1}^{J} \theta_i4^{j-i}}_{\tn{$\in [0,\tfrac13)$}}.
		\end{split}
	\end{equation}
	Clearly, 
	\begin{equation*}
		\sum_{i=1}^{j-1} \theta_i4^{j-i}\in \{4k:k\in\N,\,k\le 4^{J-1}/3 \}\quad 
		\tn{and}\quad 0\le \sum_{i=j+1}^{J} \theta_i4^{j-i}\le \sum_{i=j+1}^{J} 4^{j-i}\le 1/3.
	\end{equation*}
	
	If $\theta_j=0$, then Equation~\eqref{eq:3terms} implies
	\begin{equation*}
		4^ja\in [4k,4k+1/3]\quad \tn{for some $k\in \N$ with $k\le 4^{J-1}/3\le 4^J-1$,}
	\end{equation*}
	which implies $\calT_{4^J}(4^ja)\in [0,1/3]$. 
	
	Similarly, if $\theta_j=1$, then Equation~\eqref{eq:3terms} implies
	\begin{equation*}
		4^ja\in [4k+1,4k+1+1/3]\quad \tn{for some $k\in \N$ with $k\le 4^{J-1}/3\le 4^J-1$,}
	\end{equation*}
	which implies $\calT_{4^J}(4^ja)\in [2/3,1]$. 
	So we finish the proof.
\end{proof}
It is worth mentioning that the ``sawtooth'' function $\calT_m$ can be replaced by other functions that also have a key property ``the function values near even integers are much larger than the ones near odd integers''.

With Lemma~\ref{lem:T4J} in hand, we are ready to prove Proposition~\ref{prop:phi4j}.
\begin{proof}[Proof of Proposition~\ref{prop:phi4j}]
	By Lemma~\ref{lem:T4J}, we have
	\begin{equation*}
		\calT_{4^J}(4^{j}a)\in [0,1/3]\quad \tn{if $\theta_j=0$}\quad \tn{and} \quad \calT_{4^J}(4^{j}a)\in [2/3,1]\quad \tn{if $\theta_j=1$}.
	\end{equation*}
	Define $g(x)\coloneqq 3\sigma(x-1/3)-3\sigma(x-2/3)$ for any $x\in\R$, where $\sigma$ is ReLU, i.e., $\sigma(x)=\max\{0,x\}$. 
	See an illustration of $g$ in Figure~\ref{fig:g:and:h}.
	Clearly, 
	\begin{equation*}
		g(x)=0\tn{ if } x\le 1/3\quad  and \quad g(x)=1 \tn{ if } x\ge 2/3.
	\end{equation*}
	Hence, $\phi\coloneqq g\circ \calT_{4^J}$ is the desired function.  Obviously,
	$\phi(x)\in [0,1]$ for any $x\in \R$. To verify  Equation~\eqref{eq:phi4j}, we fix $j\in \{1,2,\cdots,J\}$. If $\theta_j=0$, then $\calT_{4^J}(4^{j}a)\in [0,1/3]$, implying $\phi(4^{j}a)=g\circ \calT_{4^J}(4^j a)=0=\theta_j$.
	If $\theta_j=1$, then $\calT_{4^J}(4^{j}a)\in [2/3,1]$, implying $\phi(4^{j}a)=g\circ \calT_{4^J}(4^j a)=1=\theta_j$.

	It remains to show  $\phi=g\circ \calT_{4^J}$ can be realized by a ReLU network with the expected width and depth. Clearly, $\calT_{4^J}$ is a continuous piecewise linear function, which means it can be implemented by a one-hidden-layer ReLU network. To make our construction more efficient, we introduce another method to implement $\phi$.
	
	\begin{figure}[htp!]
		\centering
		\begin{subfigure}[b]{0.364\textwidth}
			\centering            \includegraphics[width=0.92\textwidth]{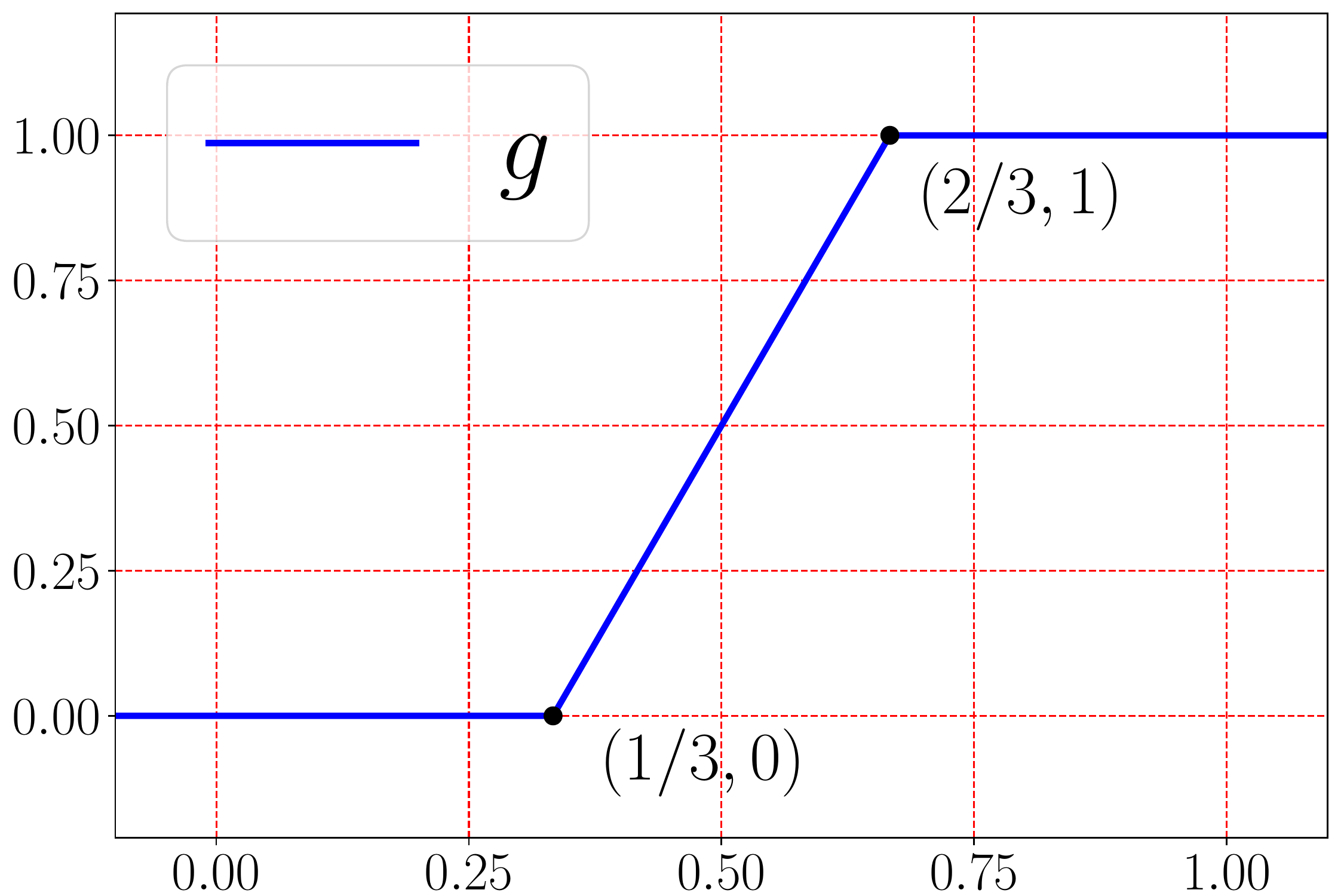}
		\end{subfigure}
		\hspace{8pt}
		\begin{subfigure}[b]{0.364\textwidth}
			\centering           \includegraphics[width=0.92\textwidth]{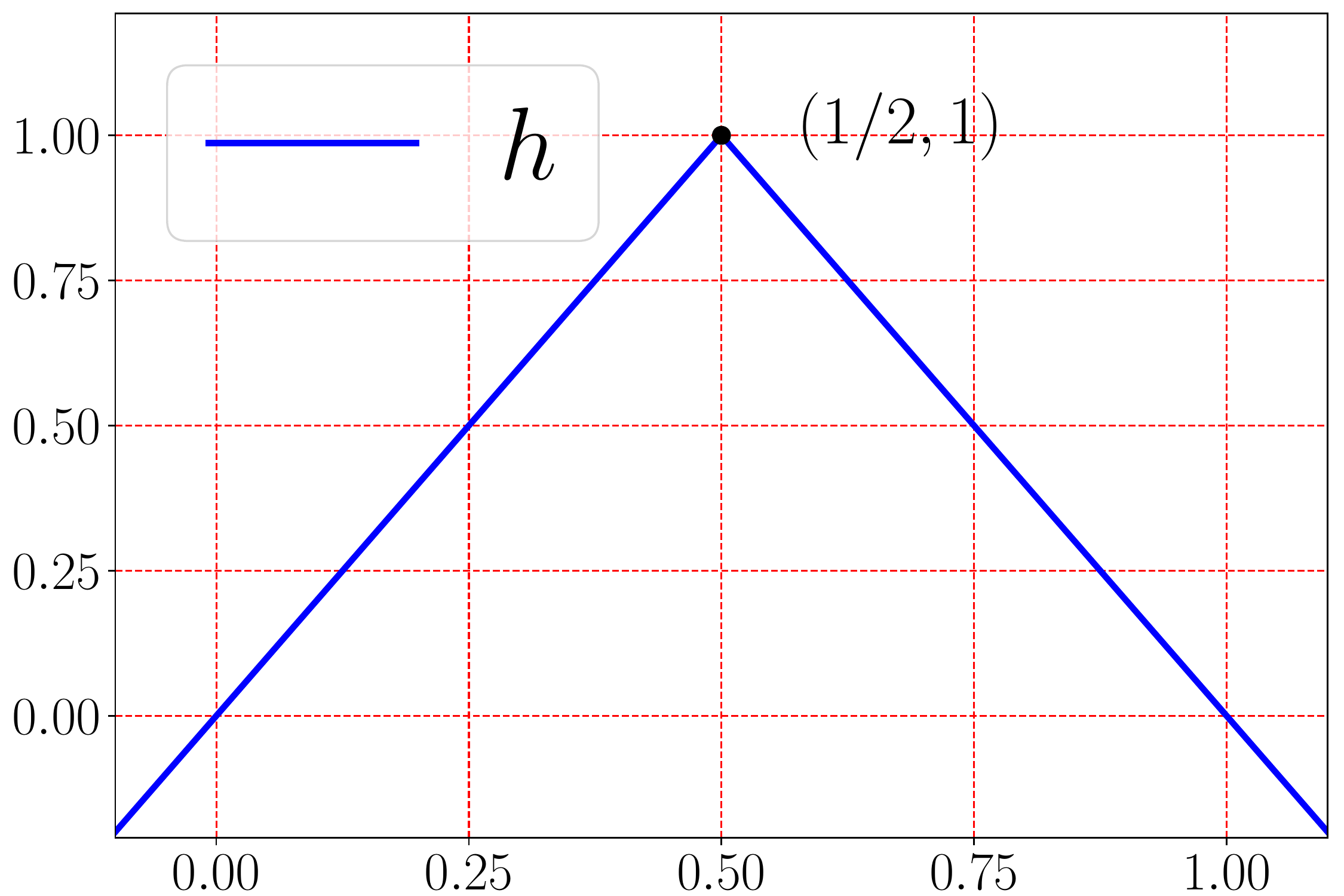}
		\end{subfigure}
		\caption{Illustrations of $g$ and $h$ on $[0,1]$.}
		\label{fig:g:and:h}
	\end{figure}
	
	Define $h(x)\coloneqq 1-2\sigma(x-1/2)-2\sigma(1/2-x)$ for any $x\in [0,1]$. See an illustration of $h$ in Figure~\ref{fig:g:and:h}. Then, it is easy to verify that $\calT_1(2x)=h(x)$ for any $x\in [0,1]$ and $h$ can be implemented by a one-hidden-layer ReLU network with width $2$.  For any $J\in\N^+$, it is easy to verify that
	\begin{equation*}
		\calT_{4^J}(2^{2J+1}x)=\underbrace{h\circ h\circ \cdots \circ h}_{2J+1\ \tn{times}}(x), \quad \tn{for any $x\in [0,1]$.}
	\end{equation*}
	So, $\calT_{4^J}$ limited on $[0,2\times 4^J]=[0,2^{2J+1}]$ can be realized by a ReLU network with width $2$ and depth $2J+1$. It follows that $\phi=g\circ \calT_{4^J}$ can be implemented by a ReLU network with width $2$ and depth $2J+2$, which means we finish the proof.
\end{proof}

\end{document}
